\newenvironment{Msg}[1]
  {\mdfsetup{
    frametitle={\colorbox{white}{\space \large #1\space}},
    innertopmargin=-3pt,
    innerbottommargin=7pt,
    innerrightmargin=7pt,
    innerleftmargin=7pt,
    frametitleaboveskip=-\ht\strutbox,
    frametitlealignment=\center,
    linewidth=1pt
    }
  \begin{mdframed}%
  }
{\end{mdframed}}
\theoremstyle{plain}
\newtheorem{theorem}{Theorem}[section]
\newtheorem{lemma}[theorem]{Lemma}
\newtheorem{corollary}[theorem]{Corollary}
\newtheorem{claim}[theorem]{Claim}
\theoremstyle{definition}
\newtheorem{definition}[theorem]{Definition}
\newtheorem{property}[theorem]{Property}
\theoremstyle{remark}
\newtheorem{hypothesis}[theorem]{\textbf{Induction Hypothesis}}
\newtheorem{fact}[theorem]{Fact}
\def \polylog {\mathrm{polylog}}
\def \la {\langle}
\def \ra {\rangle}
\def \poly {\mathrm{poly}}
\def\su{\textit{Supp}}
\begin{document}
\pagenumbering{arabic}

\title{\bf Modality Competition:  What Makes Joint Training of Multi-modal Network Fail in Deep Learning? (Provably)}
\author
{
    Yu Huang\thanks{IIIS, Tsinghua University; e-mail: {\tt y-huang20@mails.tsinghua.edu.cn}}
    ~~~
    Junyang Lin\thanks{Alibaba Group; e-mail: {\tt junyang.ljy@alibaba-inc.com}}
    ~~~
      Chang Zhou \thanks{Alibaba Group; e-mail: {\tt ericzhou.zc@alibaba-inc.com}}
     ~~~
      Hongxia Yang \thanks{Alibaba Group; e-mail: {\tt yang.yhx@alibaba-inc.com}}
      ~~~
      Longbo Huang\thanks{IIIS, Tsinghua University; e-mail: {\tt longbohuang@tsinghua.edu.cn}}
}

\date{}
\maketitle

\begin{abstract}
    Despite the remarkable success of deep multi-modal learning in practice, it has not been well-explained in theory. 
    Recently, it has been observed that the best uni-modal network outperforms the jointly trained multi-modal network 
    , which is counter-intuitive since multiple signals generally bring more information~\cite{wang2020makes}.
    This work provides a theoretical explanation for the emergence of such performance gap in neural networks for the prevalent joint training framework. 
    Based on a simplified data distribution that captures the realistic property of multi-modal data, we prove that for the multi-modal late-fusion network with (smoothed) ReLU activation trained jointly by gradient descent, different modalities will compete with each other. 
    The encoder networks will learn only a subset of modalities. 
    We refer to this phenomenon as modality competition. 
    The losing modalities, which fail to be discovered, are the origins where the sub-optimality of joint training comes from.
  Experimentally, we illustrate that modality competition  matches the intrinsic behavior of late-fusion joint training. 
\end{abstract}

\section{Introduction}
\label{sec1}
Deep multi-modal learning has achieved remarkable performance in a wide range of fields, such as speech recognition~\cite{Chan2016ListenAA}, semantic segmentation~\cite{jiang2018rednet}, and visual question-answering (VQA)~\cite{anderson2018vision}. 
Intuitively, signals from different modalities often provide complementary information leading to performance improvement. 
However, \citet{wang2020makes} observed that the best uni-modal network outperforms the multi-modal network obtained by joint training. 
Moreover, the analogous phenomenon has been noticed when using multiple input streams~\cite{goyal2017making,gat2020removing,alamri2019audio}.

Although deep multi-modal learning has become a critical practical machine learning approach, its theoretical understanding is quite limited. 
Some recent works have been proposed for understanding multi-modal learning from a theoretical standpoint~\cite{zhang2019cpm,huang2021makes, sun2020tcgm, du2021modality}. \citet{huang2021makes} provably argues that the generalization ability of uni-modal solutions is strictly sub-optimal than  that of multi-modal solutions. 
\citet{du2021modality} aims at identifying the reasons behind the surprising phenomenon of performance drop. Remarkably, these works have not analyzed what happened in the \emph{training} process of \emph{neural networks}, which we deem as crucial to understanding why naive joint training fails in practice. 
In particular, 
 we state the fundamental questions that we address below and provably answer these questions by studying a simplified data model  
 that captures key properties of real-world settings under the popular late-fusion joint training framework~\cite{baltruvsaitis2018multimodal}. We provide empirical results to support our theoretical framework. Our work is the first theoretical treatment towards the degenerating aspect of multi-modal learning in neural networks to the best of our knowledge.
 

\begin{Msg}{Fundamental Questions}
  \it  1. How does the neural network encoder of each modality, trained by multi-modal learning, learn its feature representation?\\
    2. Why does multi-modal learning {in deep learning} collapse in practice when naive joint training is applied?
\end{Msg}


\subsection{Our Contributions}
We study the multi-label classification task for a data distribution where each modality $\mathcal{M}_r$ is generated from a sparse coding model,  
which shares similarities with real scenarios (formally presented and explained in Section~\ref{sec2}). Our data model for each modality owns a special structure called ``insufficient data," which represents cases where each modality alone cannot adequately predict the task. Such a structure is common  in practical multi-modal applications~\cite{yang2015auxiliary,liu2018learn, gat2020removing}. 
Under this data model, we consider joint training based on late-fusion multi-modal network with one-layer neural network,  activated by smoothed ReLU as modality encoder, and features from different modalities are passed to one-layer linear classifier after being fused by sum operation. 
Comparatively, the uni-modal network has similar pattern with the fusion operation eliminated. Both networks are trained by gradient descent (GD) over the multi-modal training set $\mathcal{D}$ or its uni-modal counterpart $\mathcal{D}^{r}$. 

We analyze the optimization and generalization of multi and uni-network to probe the origin of the gap between theory and practice of multi-modal joint training in deep learning. Our key theoretical findings are summarized as follows. 
\begin{itemize}
    \item When only single modality is applied to  training, the uni-modal network will focus on learning the modality-associated features, which  leads to good performance (Theorem~\ref{thm-uni}).  
    \item When naive joint training is applied to the multi-modal network, the neural network will not efficiently learn all features from different modalities, and only a subset of modality encoders will capture sufficient feature representations (Theorem~\ref{thm-mul}).  We call this process ``Modality Competition" and sketch its high-level idea below. 
    \begin{Msg}{Modality Competition} 
    During joint training, multiple modalities will compete with each other. Only a \emph{subset} of modalities which correlate more with their encoding network's random initialization will win and be learned by the final modality with other modalities failing to be explored.
    \end{Msg}
    \item With the different feature learning process and 
    the existence of insufficient structure, we further establish the theoretical guarantees for performance gap measured by test error, between the uni-modal and multi-modal networks (Corollary~\ref{col}). 
\end{itemize}

\paragraph{Empirical justification:}

\begin{figure*}[!ht]
\vskip 0.2in
    \centering
    \includegraphics[width=\linewidth]{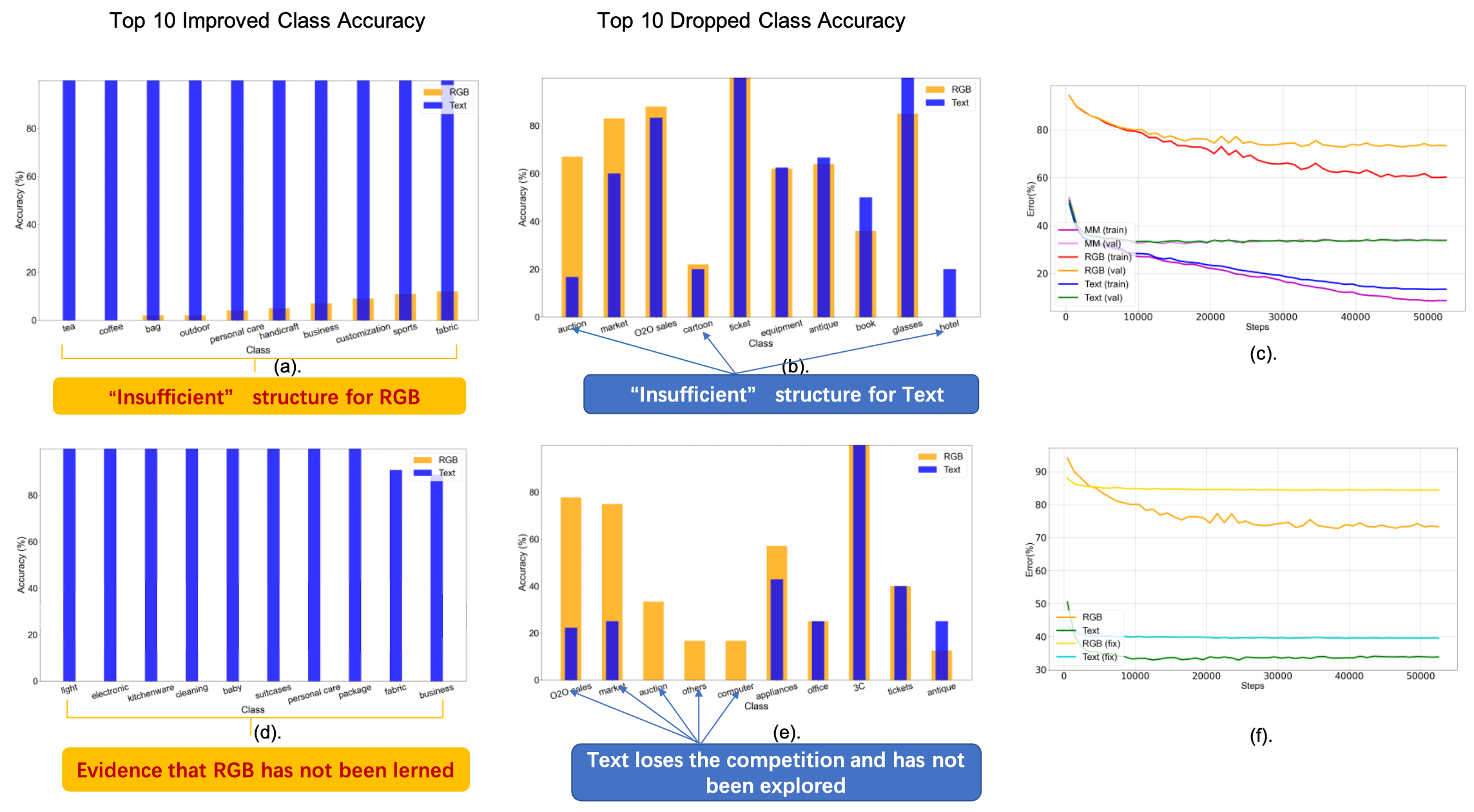}
    \vskip -0.05in
    \caption{We experiment on item classification with the setups of image(RGB)-only, text-only, and multi-modality with joint training. Detailed setups are provided in Appendix~\ref{sec-exp}.  
    (a) and (b) report the top $10$ classes based on the accuracy improvement and downgrade of text-only over image-only uni-model. (c) illustrates the training and validation error curves for text-only, image-only and text+image models.  (d) and (e) demonstrate the similar comparison as (a) and (b) for the ones with a fixed encoder initialized by the multi-modal joint training. (f) illustrates the error curves for the directly trained uni-modal models and the ones with a fixed encoder. 
    }
    \label{fig:exp}
\end{figure*}

We also support our findings with empirical results. 
\begin{itemize} 
    \item   For each modality, there exist certain classes where the corresponding uni-modal network has relatively low accuracy as shown in Figure~\ref{fig:exp} (a) and (b). For example, as demonstrated in Figure~\ref{fig:exp} (b), for text modality, while it predicts well on most classes, there exist some classes e.g. "auction", where it has low accuracy. Such observations verify the  insufficient  structure of uni-modal data.
    \item  Figure~\ref{fig:exp} (c) supports the findings in  \cite{wang2020makes} that the best uni-modal outperforms the multi-modal. 
    \item Only a subset of modalities learns good feature representations. 
    As illustrated in Figure~\ref{fig:exp} (d), for some classes, e.g., ``fabric", ``business" that were originally with slightly high accuracy (from (a)), the accuracy still drops to zero, which indicates that images are not learned for these classes in joint training. We have similar observations for text modality by comparing Figure~\ref{fig:exp} (b) and (e). Moreover, Figure~\ref{fig:exp} (f) shows that the feature representations obtained from joint training for each modality degrade compared to directly trained uni-modal.
\end{itemize}
The rest of the paper is organized as follows. We discuss the literature most related to our work in Section~\ref{sec2}. In Section~\ref{sec3}, we introduce the problem setup of
our work. Main theoretical results are provided in Section~\ref{sec4}. We present the main intuition and sketch of our proof in Section~\ref{sec5}. We conclude our work and discuss some future work in Section~\ref{sec6}.

\section{Related Work}\label{sec2}
\paragraph{Success of multi-modal application.} With the development of deep learning, combining different modalities (text, vision, etc.) to solve the tasks has become a common approach in machine learning approach, and have demonstrated great power in various applications. 
Achievements have been made on tasks, which it is insufficient for single-modal models to learn, e.g., speech recognition~\cite{schneider2019wav2vec,dong2018speech}, sound localization~\cite{zhao2019sound} and VQA~\cite{anderson2018vision}. 
On the other hand, a large body of studies in vision \& language learning~\cite{chen2020uniter, li2020oscar, li2020unicoder, m6} use pre-trained  encoders to extract features from different modalities. These studies which demonstrate the success of multi-modal learning are beyond the scope of our research. Instead, in this paper, we focus on the end-to-end late-fusion multi-modal network with different modalities trained jointly and aim to theoretically explore the commonly observed phenomenon~\cite{wang2020makes} in this setting that multi-modal network does not make performance improvement over best uni-modal.
\paragraph{Theory of Multi-modal Learning}
Theoretical progress in understanding multi-modal learning has lagged. Existing analysis 
for multi-view learning~\cite{xu2013survey, amini2009learning, federici2020learning}, which is similar to multi-modal learning, does not readily generalize to multi-modal settings. It typically assumes that each view alone is sufficient to predict the target accurately, which is problematic in our settings, since in some cases we cannot make accurate decisions only with a single-modality (e.g., depth image for object detection~\cite{gupta2016cross}). One sequence of theoretical works try to explain the advantages of multi-modal using information-theoretical framework~\cite{sun2020tcgm}
or assuming the training process is perfect~\cite{huang2021makes,zhang2019cpm}. Recently, \cite{du2021modality} utilized the easy-to-learn and paired features to explain the failure of joint training. However, their results do not take neural network architecture into consideration and do not provide the analysis of training process. Although these theoretical works shed great lights to the study of multi-modal learning, they have not yet given concrete mathematical answers to the fundamental questions we asked earlier.
\paragraph{Feature learning by neural networks.}
In recent years, there has been an interest in studying the \textit{feature learning process} of neural networks. \citet{allen2020towards} contribute to understanding how ensemble and knowledge distillation work in deep learning based on a generic ``multi-view" feature structure. 
\citet{wen2021toward} prove that contrastive learning with proper data augmentation can learn desired sparse features resembling the features learning in supervised setting. Our proof techniques and intuitions are related to these recent literature, and our work studies 
a different perspective of feature learning by multi-modal joint training. 

\section{Notations}
 $[K]$ denotes the index set $\{1, \ldots,K\} .$ For a matrix $\mathbf{M}$, we use $\mathbf{M}_{j}$ to denote its $j$-th column. For a vector $x=(x_1,\cdots,x_d)^{\top}$, $\|x\|_0$ denotes the number of its non-zero elements and $\|x\|_{\infty}:=\max_{j\in d}|x_j|$. 
We use the standard big-O notation and its variants: $\mathcal{O}(\cdot), o(\cdot), \Theta(\cdot), \Omega(\cdot), \omega(\cdot)$, where $K$ is the problem parameter that becomes large. Occasionally, we use the symbol $\widetilde{\mathcal{O}}(\cdot)$ (and analogously with the other four variants) to hide $\operatorname{polylog}(K)$ factors. \textit{w.h.p} means with probability at least $1-e^{-\Omega(\log^2(K))}$.    $\su(\cdot)$ denotes the support of a random variable.

\section{Problem Setup}
\label{sec3}

We present our formulation, including the data distribution and learner network. We focus on a multi-class classification problem. 

\subsection{Data distribution:} 

Let $\Xb$ be a data sample and $y\in [K]$ be the corresponding label. 
For simplicity, we consider $\Xb:= (\Xb^{1}, \Xb^{2})$ consisting of two modalities,\footnote{Our setting can be easily generalized to multiple modalities at the expense of complicating notations.} and each modality $\mathcal{M}_r$, $r\in [2]$, is associated with a vector $\Xb^{r}\in \RR^{d_r}$. We assume that the raw data is generated from a sparse coding model:
\begin{align*}
    \Xb^{1}=\mathbf{M}^{1} z^{1}+\xi^{1},\quad & \Xb^{2}=\mathbf{M}^{2} z^{2}+\xi^{2}\\
    (z^{1},z^{2})\sim\mathcal{P}_z\quad  &\xi^{r}\sim\mathcal{P}_{\xi^{r}}\text{ for } r\in[2]
\end{align*}

for dictionary $\mathbf{M}^{r} \in \mathbb{R}^{d_r \times K}$, where $z^{r} \in \mathbb{R}^{K}$ is the sparse vector 
and $\xi^{r} \in \mathbb{R}^{d_r}$ is the noise.  There are three main components $\Mb^{r}$, $z^{r}$, $\xi^{r}$, and we will introduce them in detail below. For simplicity, we focus on the case where 
$\mathbf{M}^{1},\mathbf{M}^{2}$ are unitary with orthogonal columns.

\paragraph{Why sparse coding model?} Our data model shares many similarities with practical scenarios. 
Originated to explaining neuronal activation of human visual  system~\cite{OLSHAUSEN19973311}, sparse coding model has been widely used in  machine learning
applications to model different uni-modal data, such as image, text and audio~\cite{mairal2010online,yang2009linear,yogatama2015learning, arora2018linear,  whitaker2016heart,grosse2012shift}.
Also, there is a line of research to develop sparse representations for multiple modalities simultaneously~\cite{yuan2012visual,shafiee2015multi,gwon2016multimodal}. 

In our following descriptions, we specify the choices for parameters including  $\gamma_r,s, \alpha$ for the sake of clarity. Our results apply to a wider range of parameters and generalized details are provided in Appendix~\ref{sec-not}. 

\paragraph{Distribution of sparse vector:} We generate $(z^1,z^2)$ from the joint distribution $\cP_z$ as follows: 

a). Select the label $y\in[K]$ uniformly at random;

b). Given the label $y$, the distribution $\mathcal{P}_{z^r\mid y}$ for each modality $\mathcal{M}_r$ is divided into two categories:
\begin{itemize}
    \item  With probability $\mu_r=\frac{1}{\poly(K)}$, $z^r$ is generated from the  \textit{insufficient} class:
    \begin{itemize}
        \item $z^{r}_{y}=\Theta(\gamma_r)$, we assume $\gamma_1=\gamma_2=\frac{1}{K^{0.05}}$. 
        \item For $j\neq y$, $z^{r}_{j}\in\{0\}\cup [\Omega(\rho_r), \rho_r]$ satisfying $\Pr(z^{r}_{j}\in [\Omega(\rho_r), \rho_r]) =\frac{s}{K}$, where $s<K$ (we choose $s=K^{0.1}$) to control feature sparsity and $\rho_r = \frac{1}{\polylog(K)}$.
    \end{itemize}
    \item With probability $1-\mu_r$, $z^r$ is generated from the   \textit{sufficient} class: 
    \begin{itemize}
        \item $z^{r}_{y}\in [1, C_r]$, where $C_r>1$ is a constant. 
        \item  For $j\neq y$, $z^{r}_{j}\in\{0\}\cup [\Omega(1),c_r]$ satisfying $\Pr(z^{r}_{j}\in [\Omega(1),c_r]) =\frac{s}{K}$
        ,where $c_r$ is a constant $<\frac{1}{2}$.
    \end{itemize}

\end{itemize}
In our settings, $\|z^{r}\|_0=\Theta(s)$ is a sparse vector. Each class $j$ has its associated feature $\Mb^{r}_j$ in each modality $\mathcal{M}_r$. We observe that for the sufficient class, the value of true label's coordinate in $z^{r}$, i.e.,  $z^{r}_{y}$, is more significant than others. On the other hand, for the insufficient class, the target coordinate is smaller than the off-target signal in terms of order.

\paragraph{Significance of the \textit{insufficient} class.}
In practice, different modalities are of various importance under specific circumstance~\cite{ngiam2011multimodal,liu2018learn,gat2020removing}. It is common that information from one single modality may be incomplete to build a good classifier~\cite{yang2015auxiliary,liu2018learn, gupta2016cross}. The restrictions on $z^{r}_{y}$ well capture this property, in the sense that there is a non-trivial probability $\mu_r$ that 
the coefficient $z^{r}_{y}$ is relatively small and easy to be concealed by the off-target signal. Therefore, when $z^{r}$ 
falls into this category, it provides \textit{insufficient} information for the classification task. Given modality $\mathcal{M}_r$, we call  $\Xb^{r}$  insufficient data if $z^{r}$ comes from the insufficient class, otherwise sufficient data.  Our data model distinguishes the multi-modal learning from previous well-studied multi-view analysis, which assumes that each view  is sufficient for classification~\cite{sridharan2008information}. Our classification is motivated by the  distribution studied in~\cite{allen2020towards}, where they utilize different levels of feature's coefficient to model the missing of certain features.

\paragraph{Noise Model:} We allow the input to incorporate a general Gaussian noise plus feature noise, i.e., 
$$\xi^{r}={\xi^{r}}^{\prime}+\mathbf{M}^{r} \alpha^{r}$$
Here, the Gaussian noise ${\xi^{r}}^{\prime} \sim \mathcal{N}\left(0, \sigma_g^{2} \mathbf{I}_{d^r}\right)$. 
The spike noise $\alpha^{r}$ is any coordinate-wise independent
non-negative random variable satisfying $\alpha_{y}^{r}=0$ and  $\|\alpha^{r}\|_{\infty}\leq \alpha$  , where $\alpha>0$ is the strength of the feature noise. We consider $\alpha = \frac{1}{K^{0.6}}$. 

Finally, we use $\mathcal{P}$ to denote the final data distribution of $(\Xb,y)$, and the marginal distribution of $(\Xb^{r},y)$ is denoted by $\mathcal{P}^{r}$.


\subsection{Learner Network}
\label{sec2.2}

We present the learner networks for both multi-modal learning and uni-modal learning. 
To start, we first define a smoothed version of ReLU activation function.
\begin{definition} The smoothed ReLU function is defined as
    $$
    \sigma(x) \stackrel{\text { def }}{=} \begin{cases}0 &  x \leq 0 ; \\ x^q/(\beta^{q-1}q) & x \in[0, \beta] ; \\ x-\beta\left(1-\frac{1}{q}\right)  &  x \geq \beta\end{cases}
    $$
    where  $q \geq 3$ is an integer and $\beta=\frac{1}{\polylog(K)}$.
\end{definition}
Such activation function is utilized as a proxy to study the behavior of neural networks with ReLU activation in prior theoretical analysis~\cite{allen2020towards, li2018algorithmic, haochen2021shape, woodworth2020kernel}, since it exhibits similar behaviour to the ReLU activation in the sense that $\sigma(\cdot)$ is linear when $x$ is large and becomes smaller when $x$ approaches zero. Moreover, it has desired property that the gradient of $\sigma(\cdot)$ is continuous. 
Besides, empirical studies illustrate that neural networks with polynomial activation have a matching performance compared to ReLU activation~\cite{allen2020backward}.

\begin{figure}[!t]
\vskip 0.2in
    \centering
    \includegraphics[width=.6\linewidth]{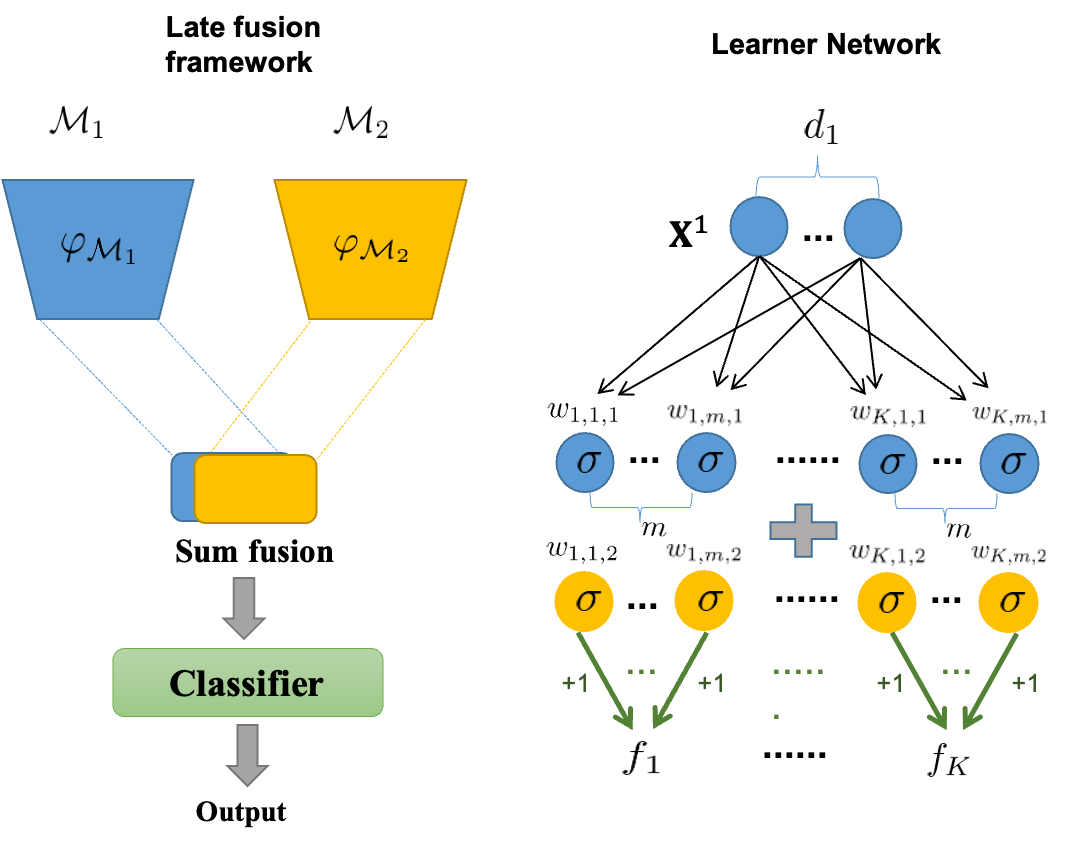}
    \vskip -0.02in
    \caption{Late fusion framework and our learner network.}
    \label{fig-learner}
\end{figure}
\paragraph{Multi-modal network:} We consider a late-fusion~\cite{wang2020makes} model on two modalities $\mathcal{M}_1$, and $\mathcal{M}_2$, which is illustrated by the left of Figure~\ref{fig-learner}.
Each modality is processed
by a single-layer neural net $\varphi_{\mathcal{M}_{r}}: \mathbb{R}^{d_r}\rightarrow \mathbb{R}^{M}$ with smoothed ReLU activation $\sigma(\cdot)$, where 
$M$ is the number of neurons.  
Then their features are fused by sum operation and passed to  a single-layer linear classifier $\mathcal{C}:\mathbb{R}^{M}\rightarrow \mathbb{R}^{K}$ 
 to learn the target. We  consider 
  $M=K\cdot m$ with $m=\polylog(K)$. 
More precisely, as illustrated in Figure~\ref{fig-learner}, the multi-modal network is formulated as follows:
\begin{align}
    &f(\Xb)=\left(f_{1}(\Xb), \ldots, f_{K}(\Xb)\right) \in \mathbb{R}^{K}\nonumber,\\
    &f_j(\Xb)= \sum^{m}_{l=1} \sigma(\la w_{j,l,1}, \Xb^{1}\ra)+\sigma(\la w_{j,l,2}, \Xb^{2}\ra)
    \end{align}
 where $w_{j,l,r}\in\mathbb{R}^{d_r}$ is the $(j-1)\cdot m+l$-th neuron of $\varphi_{m_{r}}$. Denote $\Wb^{r}$ the collection of weights $w_{j,l,r}$ and $\Wb^{r}_{j}:=(w_{j,1,r},\cdots,w_{j,m,r})^{\top}\in \mathbb{R}^{m\times d_r} $. Then the modality encoder of $\mathcal{M}_r$ can be written as:
 $$
 \varphi_{\mathcal{M}_{r}}(\Wb^{r},\Xb^{r})=\left( \sigma( {\Wb^{r}_1}^{\top}\Xb^{r}),\cdots\sigma( {\Wb^{r}_{K}}^{\top}\Xb^{r}) \right)
 $$
 where $\sigma(\cdot)$ is applied element-wise. 
 The classifier layer simply connects the entries from $(j-1)\cdot m+1$-th to $j\cdot m$-th   to the $j$-th output $f_{j}(\cdot)$ with non-trainable weights all equal to $1$.  The assumption that the second layer is fixed is common in previous
 works~\cite{du2018gradient,ji2019polylogarithmic,sarussi2021towards}. Moreover, theoretical analysis in \cite{huang2021makes} indicates that the success of multi-modal learning relies essentially on the learning of the hidden encoder layer. Nevertheless, we emphasize that our theory can easily adapt to the case where the second
 layer is trained.

 \paragraph{Uni-modal network:} The network architecture of uni-modal is similar except that the fusion step is omitted. Mathematically, $f^{\text{uni},r}:\mathbb{R}^{d_r}\rightarrow \mathbb{R}^{K}$ is defined as follows:
 \begin{align}
     &f^{\text{uni},r}(\Xb^{r})=\left(f^{\text{uni},r}_{1}(\Xb^{r}), \ldots, f^{\text{uni},r}_{K}(\Xb^{r})\right) \in \mathbb{R}^{K}\nonumber,\\
     &f^{\text{uni},r}_j(\Xb^{r})= \sum^{m}_{l=1} \sigma(\la \nu_{j,l,r}, \Xb^{r}\ra)
 \end{align}
 where $\nu_{j,l,r}\in\mathbb{R}^{d_r}$ denotes the weight. We use $\varphi^{\text{uni}}_{\mathcal{M}_r}$ to denote the modality encoder in uni-modal network.
  \paragraph{Training data:} We are given $n$  multi-modal data pairs $\{\Xb_i,y_i\}^{n}_{i=1}$ sampled from $ \mathcal{P}$, denoted by $\mathcal{D}$. We use $\mathcal{D}^{r}$ to denote the uni-modal data pairs $\{\Xb^{r}_i,y_i\}^{n}_{i=1}$ from $\mathcal{M}_r$. Moreover, we use $\mathcal{D}_{s}$ to denote the data pair that both $\Xb^{1}$ and $\Xb^{2}$ are sufficient data, and $\mathcal{D}_{i}$ to denote the data 
  that at least one  modality is insufficient. Denote the number of sufficient and insufficient data respectively as $n_s$ and $n_i$.
  
 \paragraph{Training Algorithm:}  We consider to learn the model parameter $\Wb$($\Wb^r$) by optimizing the empirical cross-entropy loss using gradient descent with learning rate $\eta>0$, which is a popular training combination investigated in the literature, e.g., ~\cite{wang2020makes, simonyan2014two}. 
 
 \begin{itemize}
     \item For multi-modal, the empirical loss is 
     \begin{align}
         \mathcal{L}(f)=\frac{1}{n}\sum_{(\Xb,y)\in \mathcal{D}} \mathcal{L}(f;\Xb,y)
     \end{align}
     where $\mathcal{L}(f;\Xb,y)= -\log \frac{\exp(f_{y}(\Xb))}{\sum_{j\in [K]}\exp(f_{j}(\Xb))}$. We initialize $w^{(0)}_{j,l,r}\sim\mathcal{N}(0,\sigma^2_0\mathbf{I}_{d_r})$ where $\sigma_0=\frac{1}{\sqrt{K}}$.\footnote{Such initialization is standard in practice.} We use $f^{(t)}$ to denote the multi-modal network with $f$ with weights $\Wb^{(t)}$ at iteration $t$. The gradient descent
     update rule is:
     $$w_{j,l,r}^{(t+1)}=w_{j,l, r}^{(t)}-\eta \cdot \nabla_{w_{j,l,r}} \mathcal{L}(f^{(t)})$$
     \item Similarly, for uni-modal, the empirical loss and gradient update rule is defined as follows:
     \begin{align}
         \mathcal{L}(f^{\text{uni},r})=\frac{1}{n}\sum_{(\Xb,y)\in \mathcal{D}^{r}} \mathcal{L}(f^{\text{uni},r};\Xb^{r},y)\\
        \nu_{j,l,r}^{(t+1)}=\nu_{j,l, r}^{(t)}-\eta \cdot \nabla_{\nu_{j,l,r}} \mathcal{L}({f^{\text{uni},r}}^{(t)})
     \end{align}
     where $\mathcal{L}(f^{\text{uni},r};\Xb^{r},y)= -\log \frac{\exp(f^{\text{uni},r}_{y}(\Xb^r))}{\sum_{j\in [K]}\exp(f^{\text{uni},r}_{j}(\Xb^r))}$, and $\nu^{(0)}_{j,l,r}\sim\mathcal{N}(0,\sigma^2_0\mathbf{I}_{d_r})$.
 \end{itemize}
\section{Main Results}\label{sec4}
We present the main theorems of the paper here. We start with the optimization and generalization guarantees of the uni-modal network. Then, we study the feature learning process of multi-modal networks with joint training. We show that for naive joint training, each modality's encoder has a non-trivial probability to learn unfavorable feature representations. Combining with the special structure of insufficient data, we immediately establish the performance gap between the best uni and multi-modal theoretically. 

\subsection{Uni-modal Network Results}
The following theorem states that after enough iterations, the uni-modal networks can attain the global minimum of the empirical training loss, and such uni-modal solution also has a good test performance.
\begin{theorem}\label{thm-uni}
    For every $r\in[2]$, for sufficiently large $K>0$ and every $\eta \leq \frac{1}{\operatorname{poly}(K)}$, after $T=\frac{\text { poly }(K)}{\eta}$ many iteration, the learned uni-modal network ${f^{\text{uni},r}}^{(t)}$ \textit{w.h.p} satisfies:
       \begin{itemize}
           \item Training error is zero: 
           \begin{align*}
            \frac{1}{n}\sum_{(\Xb^{r},y)\in\mathcal{D}^{r} }\mathbb{I}\left\{\exists j\neq y: \right.{f_{y}^{\text{uni},r}}^{(T)}(\Xb^{r})  \leq   \left. {f_{j}^{\text{uni},r}}^{(T)}(\Xb^r)\right\}=0.
           \end{align*}
           \item  The test error satisfies:
            \begin{align*}
            \Pr_{(\Xb^{r},y)\sim\mathcal{P}^{r}}(\exists j\neq y: {f_{y}^{\text{uni},r}}^{(T)}(\Xb^r)\leq {f_{j}^{\text{uni},r}}^{(T)}(\Xb^r) )
        =(1 \pm o(1))\mu_r
           \end{align*}
            
   
       
       \end{itemize}

              \end{theorem}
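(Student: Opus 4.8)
The plan is to track, for each class $j$ and neuron $l$, the scalar correlations between the weight $\nu_{j,l,r}^{(t)}$ and the orthonormal feature directions $\{\Mb^r_k\}_{k\in[K]}$, together with its correlation with the sample-specific Gaussian noise. By orthogonality of $\Mb^r$ we may write $\la\Xb^r,\Mb^r_k\ra=z^r_k+\alpha^r_k+\la{\xi^r}',\Mb^r_k\ra$, so each pre-activation $\la\nu_{j,l,r},\Xb^r\ra$ splits into a diagonal signal $\la\nu_{j,l,r},\Mb^r_j\ra z^r_j$, off-diagonal feature terms, and a noise term. The cross-entropy gradient in $\nu_{j,l,r}$ equals $\frac1n\sum_{(\Xb^r,y)}(p_j-\mathbb{I}[j=y])\,\sigma'(\la\nu_{j,l,r},\Xb^r\ra)\,\Xb^r$ (with $p_j$ the $j$-th softmax output), and projecting the GD update onto $\Mb^r_j$ shows that the diagonal signal $\Phi_{j,l}^{(t)}:=\la\nu_{j,l,r}^{(t)},\Mb^r_j\ra$ is driven upward by the $\Theta(n/K)$ samples whose label equals $j$ (for which $p_j-1<0$ and $z^r_j>0$) and only weakly perturbed by off-target samples: the latter carry the small factor $p_j$ and a coefficient $z^r_j<\tfrac12$, so after accounting for sparsity their net pull is a $\widetilde O(s/K)$-fraction of the on-target pull. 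The whole argument thus reduces to showing that the $\Phi_{j,l}^{(t)}$ grow to $\Omega(1)$ while all off-diagonal and noise correlations remain $\widetilde O(\sigma_0)$.

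For the initialization, since $\nu_{j,l,r}^{(0)}\sim\mathcal{N}(0,\sigma_0^2 I)$ and each $\Mb^r_k$ is a unit vector, every correlation is $\mathcal{N}(0,\sigma_0^2)$ with $\sigma_0=1/\sqrt K$; among the $m=\polylog(K)$ neurons of class $j$ a constant fraction have positive diagonal signal of order $\widetilde\Theta(\sigma_0)$ and the maximum is separated from the bulk. Because $\sigma'(x)\propto x^{q-1}$ for $x\le\beta$ with $q\ge3$, the early dynamics is a tensor-power, winner-take-all recursion: the neurons with the largest initial positive correlation grow super-linearly, cross the threshold $\beta$ within $\poly(K)/\eta$ steps, and thereafter climb linearly to $\Omega(1)$. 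In parallel I would maintain, by an induction over $t$ matching the w.h.p. statement, the invariant that off-diagonal and noise correlations stay $\widetilde O(\sigma_0)$: off-target labels push with the opposite sign and are suppressed by $z^r_j<\tfrac12$ and by sparsity, the spike noise is bounded by $\alpha=K^{-0.6}$, and the Gaussian projection $\la\nu_{j,l,r},{\xi^r}'\ra$ concentrates.

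Once the features are learned, the two bullets follow from a case analysis on a test sample's feature profile. For a sufficient sample, $z^r_y\in[1,C_r]$ drives $f_y^{\text{uni},r}$ into the linear regime to magnitude $\Omega(m)$, while every off-target $f_j^{\text{uni},r}$ is controlled by $z^r_j<c_r<\tfrac12$ plus a $\widetilde O(\sigma_0)$ noise floor, so $f_y>f_j$ for all $j\ne y$ with probability $1-e^{-\Omega(\log^2K)}$; hence $\Pr(\text{error}\mid\text{sufficient})=o(\mu_r)$. For an insufficient sample, $z^r_y=\Theta(\gamma_r)$ with $\gamma_r=K^{-0.05}$ is tiny, whereas w.h.p. the support contains $\Theta(s)=\Theta(K^{0.1})$ off-target coordinates each of magnitude $\rho_r=1/\polylog(K)\gg\gamma_r$; since the network has also learned those off-target features, at least one $f_j$ exceeds $f_y$, giving $\Pr(\text{error}\mid\text{insufficient})=1-o(1)$. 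Combining, the test error is $(1-\mu_r)\,o(\mu_r)+\mu_r(1-o(1))=(1\pm o(1))\mu_r$.

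For the zero-training-error bullet, feature learning already separates every sufficient training point, but an insufficient training point is not separable by features alone; here I would invoke the standard noise-memorization mechanism for overparameterized one-hidden-layer nets: since $n_i\approx\mu_r n$ is small and each sample carries an essentially unique high-dimensional Gaussian noise ${\xi^r}'$, a dedicated set of class-$y$ neurons can fit $\la\nu_{y,l,r},{\xi^r}'\ra$ to raise $f_y$ above the competing $f_j$ on that example without affecting generalization. The main obstacle, as in all such feature-learning analyses, is keeping the full coupled system under control over the entire horizon $T=\poly(K)/\eta$: simultaneously proving the tensor-power escape with the correct ordering of the winning neurons' growth times, and bounding the accumulation of off-diagonal and memorized-noise correlations so that neither the off-target features nor the noise ever overwhelms the intended diagonal signal. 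Establishing this inductive invariant, rather than any single estimate, is the crux of the proof.
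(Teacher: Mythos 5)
Your proposal follows essentially the same route as the paper: an induction invariant keeping off-diagonal and Gaussian-noise correlations at $\widetilde{O}(\sigma_0)$ while the diagonal correlations escape via a tensor-power recursion, Gaussian-noise memorization to fit the insufficient training samples, and a sufficient/insufficient case analysis giving test error $(1\pm o(1))\mu_r$ — which is exactly the paper's Induction Hypothesis, phase decomposition, noise-correlation claim, and final error analysis. The approach and all key mechanisms match; the proposal is correct in outline.
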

   
 Recall that $\mu_r$ represents the proportion of data falling into the insufficient class for modality $\mathcal{M}_r$. Note that ${f^{\text{uni},r}}^{(T)}$ not only minimizes the training error, but the primary source of its test error is from the insufficient data that cannot provide enough feature-related information for the classification task. Therefore, Theorem~\ref{thm-uni} suggests that the uni-modal networks ${f^{\text{uni},r}}$ can learn ideal feature representations for the used single modality $\mathcal{M}_r$. 

\subsection{Multi-modal Network with Joint Training}

In order to evaluate how good the feature representation learned by the encoder of each modality in joint training, we consider a uni-modal network ${f^{r}}^{(t)}:=\mathcal{C}(\varphi^{(t)}_{\mathcal{M}_r})$, where  $\varphi^{(t)}_{\mathcal{M}_r}$ is the $\mathcal{M}_r$'s encoder learned by joint training at iteration $t$, and $\mathcal{C}$ is the non-trainable linear head we defined in Section~\ref{sec2.2}. The input for ${f^{r}}^{(t)}$ is simply  the data $\Xb^{r}$ from $\mathcal{M}_r$. We will measure the goodness of $\varphi^{(T)}_{\mathcal{M}_r}$ by the test performance of ${f^{r}}^{(T)}$, which is analogous to the method widely employed in empirical studies of self-supervised learning to evaluate the learned feature representations~\cite{chen2020simple}.
    

\begin{theorem}\label{thm-mul}
 For sufficiently large $K>0$ and every $\eta \leq \frac{1}{\operatorname{poly}(K)}$, after $T=\frac{\text { poly }(K)}{\eta}$ many iteration, for the multi-modal network $f^{(t)}$, ${f^{r}}^{(t)}:=\mathcal{C}(\varphi^{(t)}_{\mathcal{M}_r})$ \textit{w.h.p} :
    \begin{itemize}
        \item Training error is zero: 
        $$
\frac{1}{n}\sum_{(\Xb,y)\in\mathcal{D} }\mathbb{I}\{\exists j\neq y: f_{y}^{(T)}(\Xb) \leq f_{j}^{(T)}(\Xb)\}=0.
        $$
        \item For $r\in[2]$, with probability $p_{3-r}>0$, the test error of ${f^{r}}^{(T)}$ is high:
        \begin{align*}
            \Pr_{(\Xb^{r},y)\sim\mathcal{P}^{r}}(\exists j\neq y: {f_{y}^{r}}^{(T)}(\Xb^r)\leq {f_{j}^{r}}^{(T)}(\Xb^{r}) )\geq \frac{1}{K}
        \end{align*}
         where $p_1+p_2=1-o(1)$, and $p_r\geq m^{-O(1)}$, $\forall r\in[2]$.
         

    
    \end{itemize}

           \end{theorem}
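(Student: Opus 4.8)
\section*{Proof Proposal for Theorem~\ref{thm-mul}}

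The plan is to track the joint-training dynamics through the lens of the correlations $\la w_{j,l,r}^{(t)}, \Mb^r_j\ra$ between each neuron and its class-associated feature. Since the two encoders share the same softmax output through the sum fusion, the logit $f_j^{(t)}(\Xb)$ sees $\sigma(\la w_{j,l,1},\Xb^1\ra)+\sigma(\la w_{j,l,2},\Xb^2\ra)$ jointly, so the gradient signal that each modality receives is modulated by the \emph{combined} fit of both modalities on the label $y$. The crucial quantity is the negative-gradient coefficient $\mathrm{logit}_{j}:=\Pr_{f^{(t)}}[j\mid \Xb]-\mathbb{I}\{j=y\}$; on a sufficient sample with label $y$, the feature $\Mb^r_y$ receives an update of size roughly $\eta\cdot|\mathrm{logit}_y|\cdot z^r_y\cdot \sigma'(\la w_{y,l,r},\Xb^r\ra)$. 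First I would, following the single-modality analysis underlying Theorem~\ref{thm-uni} and the feature-learning framework of~\cite{allen2020towards,wen2021toward}, set up the ``tensor-power'' recursion for $\max_l \la w_{y,l,r}^{(t)},\Mb^r_y\ra$ driven by the smoothed-ReLU degree-$q$ nonlinearity, and isolate the common shared factor $|\mathrm{logit}_y|$ that both modalities multiply against.

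The heart of the argument is the competition mechanism, which I would establish in three steps. Step one: at initialization, by Gaussian anti-concentration the lead neuron correlations $\max_l \la w_{y,l,r}^{(0)},\Mb^r_y\ra$ for $r=1,2$ differ by a constant multiplicative gap $w.h.p.$, and the modality $r^\star$ with the larger initial correlation has a head start in the tensor-power race. Step two, the \emph{feedback-coupling} lemma: once $f_y^{(t)}$ becomes even moderately large for label $y$ because the winning modality $r^\star$ has grown $\la w_{y,l,r^\star},\Mb^{r^\star}_y\ra$ to order $\beta$, the shared coefficient $|\mathrm{logit}_y|=1-\Pr[y\mid\Xb]$ collapses towards zero; since the losing modality $3-r^\star$ depends on the \emph{same} $|\mathrm{logit}_y|$ for its growth, its feature-learning update is throttled \emph{before} $\la w_{y,l,3-r^\star},\Mb^{3-r^\star}_y\ra$ can escape the $\sigma'\approx(\cdot/\beta)^{q-1}$ flat region, freezing it near its $\widetilde{\mathcal O}(\sigma_0)$ initialization scale. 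Step three: I would quantify the probability $p_r$ that modality $r$ wins as the probability that its initial lead beats the other across the relevant set of $m=\polylog(K)$ neurons per class, giving $p_r\ge m^{-O(1)}$ and $p_1+p_2=1-o(1)$ after absorbing the low-probability ties into $o(1)$.

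For the zero-training-error claim I would argue that the \emph{fused} network still fits $\mathcal D$: on each training point the winning modality (which may differ per class) supplies a large enough positive margin on the true logit, and a standard argument bounds the contribution of off-diagonal correlations $\la w_{j,l,r},\Mb^r_{j'}\ra$ for $j'\neq j$ and of the Gaussian/spike noise $\xi^r$ by $\widetilde{\mathcal O}(\sigma_0)$, so the correct class dominates on all $n$ samples $w.h.p.$. For the high test error of the probe network ${f^r}^{(T)}=\mathcal C(\varphi^{(T)}_{\mathcal M_r})$ when $r$ is the losing modality, I would show that because $\la w_{y,l,r}^{(T)},\Mb^r_y\ra$ stayed at initialization scale, feeding $\Xb^r$ alone yields $\sigma(\la w_{y,l,r},\Xb^r\ra)\approx\sigma(\la w_{j,l,r},\Xb^r\ra)$ across classes, so on a fresh sample the true-class logit fails to separate from a competing class with probability at least $1/K$; conditioning on the event that the \emph{other} modality won (probability $p_{3-r}$) packages this into the stated bound. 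The main obstacle is Step two: making the coupling quantitatively tight, i.e., proving that the shared $|\mathrm{logit}_y|$ shuts off the loser's growth at precisely the window where it is still trapped in the polynomial region of $\sigma$, rather than after it has already linearized; this requires a delicate simultaneous control of both modalities' growth rates against the evolving softmax, and is where the degree-$q$ activation and the insufficient-class scaling $\gamma_r=K^{-0.05}$ versus $\rho_r,\alpha$ must be balanced carefully.
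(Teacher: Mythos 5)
Your proposal reproduces the paper's skeleton (a tensor-power race decided at initialization, a per-class winning modality, and evaluation of the frozen encoder through the probe ${f^{r}}^{(T)}$), but the step you yourself call the heart of the argument --- Step two, the ``feedback-coupling'' --- is not the mechanism that decides the competition, and you leave it unresolved. Before the winner saturates the softmax, the shared coefficient $1-\ell_{y}(f^{(t)},\Xb)$ is $\Theta(1)$ and multiplies \emph{both} modalities' updates equally, so during the entire window in which the winner climbs from $\widetilde{O}(\sigma_0)$ to the activation threshold, the loser also receives full-strength gradient; softmax collapse cannot be what pins it down there. What actually keeps the loser at $\widetilde{O}(\sigma_0)$ in this window is the winner-take-all property of the degree-$(q-1)$ power dynamics, i.e.\ Lemma~\ref{tensor} as used in Lemma~\ref{lemma-comp}: if $x_0 \geq y_0 M^{\frac{1}{q-2}}(1+\frac{1}{\polylog(K)})$, then at the first time $x_t$ reaches \emph{any} level $C\leq O(1)$ one still has $y\leq\widetilde{O}(x_0)$ --- a comparison proved with no reference to the softmax at all. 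Only afterwards, in Phase 2, do the loss-decrease/error-accumulation bounds (Claims~\ref{cla-2.1}, \ref{cla-2.2}, \ref{cla-inerr}) enter, to rule out a late catch-up; this is the quantitative control you flag as your ``main obstacle'' but do not supply. Two surrounding claims are also wrong as stated: (i) the maxima of two families of $m$ i.i.d.\ Gaussian correlations differ by a \emph{constant} factor only with constant probability, not w.h.p.; with probability $1-o(1)$ you only get a $(1+\frac{1}{\polylog(K)})$ edge (Property~\ref{prop}), which is precisely why the tensor-power lemma must tolerate such a tiny gap; and (ii) the winning criterion must weight initial correlations by the signal strengths $d_{j,r}(\mathcal{D})^{\frac{1}{q-2}}$. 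With your unweighted criterion, symmetry would give $p_1\approx p_2\approx\frac{1}{2}$, making $p_r\geq m^{-O(1)}$ vacuous; the nontrivial content of Lemma~\ref{lemma-win} is that even when $d_{j,r}(\mathcal{D})\gg d_{j,3-r}(\mathcal{D})$ the weak-signal modality still wins with probability $m^{-O(1)}$, which requires the unequal-variance Gaussian comparison in part (b) of Lemma~\ref{lemma-prob}.

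A second genuine gap is the zero-training-error bullet. Your argument (``the winning modality supplies a large enough positive margin on the true logit'') fails on insufficient training samples $(\Xb,y)\in\mathcal{D}_i$: when the insufficient modality $r$ is the winner for class $y$, the large diagonal correlation multiplies a coefficient $z^{r}_{y}=\Theta(\gamma_r)$ that is polynomially small, while the sufficient modality's correlation is frozen at $\widetilde{O}(\sigma_0)$ by the competition itself, so \emph{no} feature provides a margin on exactly those points. The paper fits them by a mechanism absent from your proposal: memorization of the per-sample Gaussian noise, i.e.\ growth of $\la w_{y,l,r}^{(t)}, {\xi^{r}}^{\prime}\ra$ (Claim~\ref{cla-noise}), combined with the insufficient-data error bounds of Claim~\ref{cla-inerr}; these bounds are in turn used inside the Phase-2 freezing argument, so they are load-bearing for the competition claim as well. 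Finally, your test-error step is directionally right but imprecise: the error event is not that all probe logits are comparably small, but that some \emph{other} class $i$ which modality $r$ did win appears as an off-target feature of the fresh sample ($i\in\mathcal{S}^{r}(\Xb)$, probability $\Omega(\frac{s}{K})\geq\frac{1}{K}$), so that ${f^{r}_{i}}^{(T)}$ is large while ${f^{r}_{y}}^{(T)}\leq\widetilde{O}(\sigma_0)+\frac{1}{\polylog(K)}$; this, not approximate equality of logits, is what yields the stated $\frac{1}{K}$ lower bound.
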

\paragraph{Discussion of $p_r$:} $p_r$ represents the probability that modality $\mathcal{M}_{3-r}$ fails to learn a good feature representation. The specific values of $p_1$ and $p_2$ are associated with the relative relation between the marginal distribution of $z^{1}$ and $z^{2}$ from \textit{sufficient} class. Typically, if the lower bound of $\su(z^{r}_y)$ 
is larger than the upper bound of $\su(z^{3-r}_y)$, $p_{r}$ tends to be larger than $p_{3-r}$.  Nevertheless, our results indicate that no matter how such relation varies, even in extreme cases (e.g., the lower bound of $\su(z^{r}_y)$ is excessively larger than the upper bound of $\su(z^{3-r}_y)$, both of $p_1$ and $p_2$ are lower bounded by a non-trivial value.

\paragraph{Feature representations learned in joint training are unsatisfactory.} From the optimization perspective, Theorem~\ref{thm-mul} shows that the multi-modal networks with joint training can be guaranteed to find a point that achieves zero error on the training set. However, such a solution is not optimal for both modalities. In particular, the output of the uni-modal 
network ${f^{r}}^{(T)}$, which we defined earlier to assess the quality of the learned modality encoder for $\mathcal{M}_r$, has a non-negligible probability to generalize badly and give a test error over $1/K$ (almost random guessing for $K$-classification, and exceedingly larger than ${f_{y}^{\text{uni},r}}^{(T)}$). The occurrence of such poor test performance indicates that \textit{w.h.p}, at least one of the modality encoding networks learned relatively deficient knowledge about the modality-associated features.

\paragraph{Remark.} Originally, the intention of joint training is that for a multi-modal sample, if some of these modalities have \textit{insufficient} structure,  the information provided by remaining  \textit{sufficient} modalities can assist training and improve the accuracy. Nevertheless, Theorem~\ref{thm-mul} indicates that adding more modalities through naive joint possibly impairs the feature representation learning of the original modalities
Consequently, the modal not only fails to exploit the extra modalities, but also loses the expertise of the original modality. 

Based on the results in Theorem~\ref{thm-mul}, we are able to characterize the performance gap between uni-modal and multi-modal with joint training in the following corollary.

\begin{corollary}[Failure of Joint Training]\label{col}
    Suppose the assumptions in Theorem~\ref{thm-mul} holds, \textit{w.h.p}, for joint training,  the learned multi-modal network $f^{(T)}$ satisfies:
    \begin{align*}
        \Pr_{(\Xb,y)\sim\mathcal{P}}(\exists j\neq y:f_{y}^{(T)}(\Xb)\leq
        f_{j}^{(T)}(\Xb) )\in [\sum_{r\in[2]} (p_r-o(1))\mu_r, \sum_{r\in[2]} (p_r+o(1))\mu_r]
    \end{align*}
Combining with the results in Theorem~\ref{thm-uni}, we immediately obtain:
\begin{align*}
    \Pr_{(\Xb,y)\sim\mathcal{P}}(\exists j\neq y:&f_{y}^{(T)}(\Xb)\leq  
    f_{j}^{(T)}(\Xb) )\geq
    \min_{r\in[2]} \Pr_{(\Xb^{r},y)\sim\mathcal{P}^{r}}(\exists j\neq y:& {f_{y}^{\text{uni},r}}^{(T)}(\Xb^r)\leq {f_{j}^{\text{uni},r}}^{(T)}(\Xb^r) )
\end{align*}
\end{corollary}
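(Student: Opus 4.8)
The plan is to write the (initialization-averaged) population misclassification probability of the fused network as a mixture over which modality wins the competition, and to reduce each mixture component to the uni-modal analysis of Theorem~\ref{thm-uni}. Abbreviate the misclassification event of a network $g$ on a draw by $\mathrm{err}(g) := \{\exists j\neq y:\ g_y \leq g_j\}$. From Theorem~\ref{thm-mul} I extract the competition structure: the probability (over initialization) that encoder $\mathcal{M}_{3-r}$ fails is $p_r$, and $p_1+p_2 = 1-o(1)$; the modality-competition mechanism established in Theorem~\ref{thm-mul} moreover forces that w.h.p.\ \emph{exactly one} encoder wins, so the events $A_r := \{\mathcal{M}_r\text{ wins},\ \mathcal{M}_{3-r}\text{ fails}\}$, $r\in[2]$, partition the initialization space up to an exceptional ``both win / both fail'' event of probability $e^{-\Omega(\log^2 K)}$. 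By the law of total probability over the initialization,
\begin{align*}
\Pr_{(\Xb,y)\sim\mathcal{P}}\big(\mathrm{err}(f^{(T)})\big) = \sum_{r\in[2]} \Pr(A_r)\,\Pr\big(\mathrm{err}(f^{(T)})\mid A_r\big) + e^{-\Omega(\log^2 K)},
\end{align*}
so it suffices to evaluate the conditional error on each $A_r$.

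The central step is to show $\Pr\big(\mathrm{err}(f^{(T)})\mid A_r\big) = (1\pm o(1))\mu_r$. Conditioned on $A_r$, the fused score $f = \mathcal{C}(\varphi_{\mathcal{M}_1}) + \mathcal{C}(\varphi_{\mathcal{M}_2})$ is governed by the winning encoder $\varphi_{\mathcal{M}_r}$, which I claim learns the modality-$r$ features exactly as a stand-alone uni-modal network does, while the losing encoder $\varphi_{\mathcal{M}_{3-r}}$ contributes only a label-uninformative signal. Granting this, the error analysis mirrors Theorem~\ref{thm-uni}: on \emph{sufficient} modality-$r$ data the margin from $\varphi_{\mathcal{M}_r}$ at the true coordinate dominates every competing coordinate and is unperturbed by the small contribution of $\varphi_{\mathcal{M}_{3-r}}$, giving a correct prediction; on \emph{insufficient} modality-$r$ data, which occurs with probability $\mu_r$, the target coefficient $z^r_y = \Theta(\gamma_r)$ is concealed by off-target signal and the failed encoder $\varphi_{\mathcal{M}_{3-r}}$ cannot restore it, forcing a misclassification up to an $o(1)$ fraction of borderline draws. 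Hence the conditional error set coincides, up to $o(1)\mu_r$, with the insufficient-data event of modality $r$, yielding $(1\pm o(1))\mu_r$.

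Substituting into the total-probability expansion and using $\Pr(A_r)=p_r$ gives
\begin{align*}
\Pr_{(\Xb,y)\sim\mathcal{P}}\big(\mathrm{err}(f^{(T)})\big) = \sum_{r\in[2]} p_r (1\pm o(1))\mu_r + e^{-\Omega(\log^2 K)} = \sum_{r\in[2]} (p_r \pm o(1))\mu_r,
\end{align*}
since the exponentially small exceptional contribution is dominated by the $o(1)\mu_r$ slack; this is the claimed interval. For the comparison with the best uni-modal network, I note that the lower endpoint is a weighted average of $\mu_1,\mu_2$ with weights summing to $1-o(1)$, so
\begin{align*}
\Pr_{(\Xb,y)\sim\mathcal{P}}\big(\mathrm{err}(f^{(T)})\big) \geq \sum_{r\in[2]}(p_r-o(1))\mu_r \geq (1-o(1))\min_{r\in[2]}\mu_r.
\end{align*}
By Theorem~\ref{thm-uni} each uni-modal test error equals $(1\pm o(1))\mu_r$, hence $\min_{r\in[2]}\Pr\big(\mathrm{err}({f^{\text{uni},r}}^{(T)})\big) = (1\pm o(1))\min_{r\in[2]}\mu_r$, and combining the two displays yields the stated $\Pr(\mathrm{err}(f^{(T)})) \geq \min_{r\in[2]}\Pr(\mathrm{err}({f^{\text{uni},r}}^{(T)}))$ up to lower-order terms.

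I expect the main obstacle to be the conditional reduction of the second paragraph: that the \emph{winning} encoder inherits the clean feature-learning guarantee of the stand-alone uni-modal network, while the \emph{losing} encoder is simultaneously harmless on sufficient data and useless on insufficient data. This is not available from the \emph{statement} of Theorem~\ref{thm-mul}, which only certifies a high error $\geq 1/K$ for the losing reconstruction ${f^{3-r}}^{(T)}$; it requires reaching into the feature-dynamics lemmas behind that theorem to control the actual fused prediction margins and to match the resulting error set precisely to the probability-$\mu_r$ insufficient-data event. The accompanying technical care is the bookkeeping of the $o(1)$ terms—ensuring both the exceptional ``both win / both fail'' event and the borderline insufficient draws contribute below the $o(1)\mu_r$ threshold.
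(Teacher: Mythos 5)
Your high-level case analysis (sufficient data is classified correctly; a test point whose insufficient modality coincides with the winning modality is misclassified) matches the paper's, but your probability accounting rests on a reading of the competition structure that the paper does not establish and that is in fact false under its model. The competition in Theorem~\ref{thm-mul} is \emph{per class}: Lemma~\ref{lemma-win} gives, for each class $j\in[K]$ separately, a winning modality with $p_{j,1}+p_{j,2}\geq 1-o(1)$, and $p_r$ is the \emph{average} $\sum_{j\in[K]}p_{j,r}/K$ of these per-class probabilities. Since the neurons $w^{(0)}_{j,l,r}$ are initialized independently across $j$, the typical outcome is a \emph{mixture} in which each encoder wins some classes and loses others; the global events $A_r=\{\mathcal{M}_r\text{ wins},\ \mathcal{M}_{3-r}\text{ fails}\}$ on which you condition have probability roughly $\prod_{j}p_{j,r}\leq (1-m^{-O(1)})^{K}$, which is exponentially small, not $p_r$. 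Consequently your law-of-total-probability step --- that $A_1,A_2$ partition the initialization space up to an $e^{-\Omega(\log^2 K)}$ event, with conditional error $(1\pm o(1))\mu_r$ on $A_r$ --- does not go through: the dominant event is precisely the ``mixed'' one you discard as exceptional, and on it the conditional test error is $\sum_{r}\mu_r\cdot(\text{fraction of classes won by }\mathcal{M}_r)\cdot(1\pm o(1))$, which is neither $\mu_1$ nor $\mu_2$. (Even per class, the ``neither wins'' exception has probability $O(1/\polylog(K))$ by Lemma~\ref{lemma-prob}(a), not $e^{-\Omega(\log^2 K)}$.)

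The repair --- and the paper's actual route --- is to decompose over the test point rather than over a global dichotomy of the initialization. For $(\Xb,y)\sim\mathcal{P}$: if both modalities are sufficient, Fact~\ref{fact-app} together with the margin bound $c\Phi^{(T)}_j-\Phi^{(T)}_i\leq -\Omega(\log K)$ gives correct classification w.h.p.; if modality $\mathcal{M}_r$ is insufficient (probability $\mu_r$, the both-insufficient case being a lower-order correction), misclassification occurs exactly when $\mathcal{M}_r$ is the winning modality \emph{for the label class} $y$, which happens with probability $p_{y,r}$: then Lemma~\ref{lemma-comp} forces $\Gamma^{(T)}_{y,3-r}\leq\widetilde{O}(\sigma_0)$, so $f^{(T)}_{y}(\Xb)\leq O(\gamma_r)+\frac{1}{\polylog(K)}$, while $f^{(T)}_{j}(\Xb)\geq\widetilde{\Omega}(\rho_r)$ for some $j\in\mathcal{S}^{1}(\Xb)\cup\mathcal{S}^{2}(\Xb)$. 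Averaging over the uniform label and over the $K$ (independent) per-class competitions, whose winning fractions concentrate around $p_r$, yields the stated interval. Your final expression coincides with the paper's only because the error is linear in the per-class win indicators, so the expectation is the same under either bookkeeping; but the w.h.p.\ interval statement genuinely requires the per-class argument. Your last step (deducing the comparison with the best uni-modal network from $\sum_{r}(p_r-o(1))\mu_r\geq(1-o(1))\min_r\mu_r$ and Theorem~\ref{thm-uni}) is fine, with the same $(1\pm o(1))$ slack the paper itself tolerates.
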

Notice that the test error of the joint training is approximately the weighted average of the test error of uni-modal network and is affected by two sets of factors $\{p_r\}_{r\in[2]}$, $\{\mu_{r}\}_{r\in[2]}$. The corollary has simple intuitive implications. If there exists a “strong” modality with a smaller $\mu_r$ (less insufficient structure) and a larger $p_r$ (more likely to prevail during training), the closer the joint training is to the best uni-modal, since the other modality is too weak to interfere the feature learning process of the strong modality.

\section{Proof Outline}\label{sec5}
In this section we provide the proof sketch of our theoretical results. We provide overviews of multi-modal and uni-modal training process in Section~\ref{sec5.1} and~\ref{sec5.2} respectively, to provide intuitions for our proof. The complete proof is deferred to the supplementary.

\subsection{Overview of the Joint Training Process}\label{sec5.1}
Given modality $\mathcal{M}_r$ and class $j\in[K]$, we characterize the feature learning of its modality encoder $\varphi_{\mathcal{M}_r}$ in the training process by quantity: 
$\Gamma^{(t)}_{j,r}=\max_{l\in[m]}[\la \Mb^{r}_{j},w^{(t)}_{j,l,r} \ra ]^{+}. \label{eq:Gamma}$
 It can be seen that a larger $\Gamma^{(t)}_{j,r}$ implies better grasp of the target feature $ \Mb^{r}_{j}$.

We will show that the training dynamics of multi-modal joint training can be decomposed into two phases:  1) Some special patterns of the neurons in the learner networks emerge and become singletons due to the random initialization, which demonstrates the phenomenon of \textit{modality competition}; 2) As long as the neurons are activated by the winning modality, they will indeed converge to such modality, and ignore the other. 

\paragraph{Phase 1: modality competition from random initialization.} 
Our proof begins by showing how the neurons in each modality encoder $\varphi_{\mathcal{M}_r}$ are emerged from random initialization. In particular, we will show that, despite the existence of multiple class-associated features (comes from different modalities),  only one of them will be quickly learned by its corresponding encoding network, while the others will barely be discovered out of the random initialization. We call this phenomenon “modality competition” near random initialization, which demonstrates the origin of the sub-optimality of naive joint training.


Recall that at iteration $t=0$, the  weights are initialized as $w_{j,l,r}^{(0)} \sim \mathcal{N}\left(0, \sigma_{0}^{2} \mathbf{I}_{d_r}\right)$. For $j\in[K]$, $r\in[2]$, define the following data-dependent parameter: 
\begin{align*}
        d_{j, r}(\mathcal{D})=\frac{1}{n\beta^{q-1}}\sum_ {(\Xb,y)\in\mathcal{D}_{s}}\mathbb{I}\{y=j\}  \left(z^{r}_{j}\right)^{q}
    \end{align*}
   Recall that $\mathcal{D}_{s}$ denotes the data pair that both $\Xb^{1}$ and $\Xb^{2}$ are sufficient data, i.e., the sparse vectors $z^{1}$ and $z^{2}$ both come from the \textit{sufficient} class.
   Therefore, $d_{j, r}(\mathcal{D})$ represents the strength of the target signal for sufficient data from class $j$ and modality $\mathcal{M}$. Applying standard properties of the Gaussian distribution, we show the following critical property: 
\begin{property}~\label{prop}
    For each class $j\in [K]$, \textit{w.h.p}, 
    there exists $r_j\in [2]$, s.t.
    \begin{align*}
        \Gamma^{(0)}_{j,r_j}[d_{j, r_j}(\mathcal{D})]^{\frac{1}{q-2}} \geq \Gamma^{(0)}_{j,3-r_j} [d_{j, 3-r_j}(\mathcal{D})]^{\frac{1}{q-2}}\cdot(1+\frac{1}{\operatorname{polylog}(K)})
    \end{align*}
\end{property}
In other words, by the property of random
Gaussian initialization, for each class $j\in [K]$,  there will be a $\mathcal{M}_{r_{j}}$, termed as  winning modality,  where the maximum correlation between $\Mb^{r_{j}}_{j}$ and one of the neurons 
of its corresponding encoder $\varphi_{\mathcal{M}_{r_{j}}}$ is slightly higher than the other modality $\mathcal{M}_{3-r_{j}}$. In our proof, we will identify the following phenomenon during the training: 
\begin{Msg}{Modality Competition}
For every $j\in[K]$, at every iteration $t$, if $\mathcal{M}_{r_{j}}$  is the winning modality, then $\Gamma^{(t)}_{j,r_{j}}$ will grow faster than $\Gamma^{(t)}_{j,3-r_{j}}$. When $\Gamma^{(t)}_{j,r_{j}}$ reaches the threshold $\Theta(\beta)=\widetilde{\Theta}(1)$, $\Gamma^{(t)}_{j,3-r_{j}}$ still stucks at initial level around $\widetilde{O}(\sigma_0)$.
    \end{Msg}
    
\subparagraph{Probability of winning.} Observing that $ d_{j, r}(\mathcal{D})$ is related to the marginal distribution of $z^{r}$, we will prove that even in the extreme setting that $z_{j}^{r}\gg z_{j}^{3-r}$ for $j=y$ almost surely, which implies $ d_{j, r}(\mathcal{D})\gg d_{j, 3-r}(\mathcal{D})$ with high probability, $\mathcal{M}_{3-r}$ has a slightly notable probability, denoted by $p_{j,3-r}\geq m^{-O(1)}$, to be the winning modality for class $j$ out of random initialization. Noticing that $p_{j,r}$ also represents the probability that the modality $\mathcal{M}_{3-r}$ fails to be discovered for class $j\in [K]$ at the beginning, our subsequent analysis will illustrate that such a lag situation will continue, leading to bad feature representations for $\mathcal{M}_{3-r}$ with probability $p_r=\sum_{j\in[K]}p_{j,r}/K\geq m^{-O(1)}$.

\subparagraph{Intuition:} Technically, in this phase, the activation function $\sigma(\cdot)$  is still in the polynomial or negative regime, and we can reduce the dynamic to tensor power method~\cite{anandkumar2015analyzing}. We observe that the update of $\Gamma^{(t+1)}_{j,r}$ is approximately:
 $\Gamma^{(t+1)}_{j,r}\approx \Gamma^{(t)}_{j,r}+ \eta\cdot A^{(t)}_r (\Gamma^{(t)}_{j,r})^{q-1}$, 
with $A^{(t)}_r =\Theta(1)$, which is similar to power method for $q$-th ($q\geq 3$) order tensor decomposition. By the behavior observed in randomly initialized tensor power method~\cite{anandkumar2015analyzing,allen2020towards},  
a slight initial difference can create very dramatic growth gap. Based on this intuition, we introduce the Property~\ref{prop} to characterize how much difference of initialization can make one of the modalities stand out to be the winning modality and propose the modality competition to further show that the neurons for the winning modality maintain the edge until they become roughly equal to $\Theta(\beta)=\widetilde{O}(1)$, while the others are still around initialization $\widetilde{O}(\sigma_0)$ (recall that the networks are initialized by $\mathcal{N}(0,\sigma_0\mathbf{I}_{d_r})$).
    
\subparagraph{Remark.}  The idea that only part of modalities will win during the training is also motivated by a phenomenon called ``winning the lottery ticket"  identified in recent theoretical analysis for over-parameterized neural networks~\cite{li2020learning, wen2021toward, allen2020feature}. That is,  for over-parameterized neural networks, only a small fraction of neurons has much larger norms than an average norm. Their works focus on who wins in the neural networks, while our focus is the winner of inputs, the modality.    
    
\paragraph{Phase 2: converge to the winning modality.}
The next phase of our analysis begins when one of the modalities already won the competition near random initialization, and focuses on showing that it will dominate until the end of the training. 
After the first phase,  the pre-activation of the winning modality's neurons will reach the linear region, while the pre-activation of the others still remain in the polynomial region or even negative. Yet, the loss starts to decrease significantly, and we prove that $\Gamma^{(t)}_{j,3-r_{j}}$ will no longer exceed $\widetilde{O}(\sigma_0)$ until the training loss are close to converge.
Therefore, the winning modality will remain the victory throughout the training.



\subsection{Overview of the Uni-modal Training Process}\label{sec5.2}
The training process of uni-modal can also be decomposed into two phases, i.e., 1) learning the pattern, and  2)  converging to the learned features. Similarly, we define $\Psi^{(t)}_{j,r}=\max_{l\in[m]}[\la  \Mb^{r}_{j},\nu^{(t)}_{j,l,r} \ra ]^{+}$ to quantify the feature learning for the  uni-modal network $f^{\text{uni},r}$.

We briefly describe the difference between the uni-modal and the joint-training case. The main distinction arises from Phase 1. Intuitively, since there is only one predictive signal source without competitors, we  prove that the network will \textbf{focus on} learning the features from the given modality in the first phase. In particular,  $\Psi^{(t)}_{j,r}$ will grow fast to $\widetilde{O}(1)$ at the end of this phase. Then in Phase 2, the uni-modal will continue to explore the the learned patterns until the end of training.



\section{Conclusions}\label{sec6}
In this paper, we provide novel theoretical understanding towards a qualitative phenomenon commonly observed in deep multi-modal applications, that the best uni-modal network outperforms the multi-modal network trained jointly under late-fusion settings. We analyze the optimization process and theoretically establish the performance gaps for these two approaches in terms of test error. 
In theory, we characterize the modality competition  phenomenon to tentatively explain the main cause of the sub-optimality of joint training. Empirical results are provided to verify that our theoretical framework does coincide with the superior of the best uni-modal networks over joint training in practice. 
To a certain extent, our work reflects how the prevailing pre-training methods~\cite{m6}, which are capable of extracting favorable features for every modality,  lead to better performance for multi-modal learning. 
Our results also facilitate further theoretical analyses in multi-modal learning through a new mechanism that focuses on how modality encoder learns the features. 
%


\bibliography{example_paper}
\bibliographystyle{plainnat}

\newpage
\appendix
\onecolumn

\section{Proofs for Multi-modal Joint Training}
In this section, we will provide the proofs of Theorem~\ref{thm-mul} for multi-modal joint training.  We will first focus on some properties and characterizations for modality at initialization. Our analysis actually rely on an induction hypothesis. Then we will introduce the hypothesis and prove that it holds in the whole training process. Finally, we will use this hypothesis to complete the proof of our main theorem.
\subsection{Notations and Preliminaries}~\label{sec-not}
We first describe some preliminaries before diving
 into the proof.

\paragraph{Global Assumptions.} Throughout the proof in this section,
\begin{itemize}
    \item We choose $\sigma_0^{q-2}=\frac{1}{K}$ for $q\geq 3$, where $\sigma_0$ controls the initialization magnitude.
    \item $m=\polylog(K)$, where $m$ controls the number of neurons. 
    \item $\sigma_g=O(\sigma_0^{q-1})$, wehre $\sigma_g$ gives the magnitude of gaussian noise.
    \item $\alpha=\widetilde{O}(\sigma_0)$, where $\alpha$ controls the feature noise.
    \item $\frac{s}{K}\leq \widetilde{O}(\sigma_0)$, where $s$ controls the feature sparsity.
    \item $n_i\leq \frac{K^2\gamma^{q-1}}{s}$, where $n_i$ is the size of the insufficient multi-modal training data.
    \item $\rho_r=\frac{1}{\operatorname{poly} \log (K)}$ where $\rho_r$ control the off-target signal for insufficient data.
    \item $n \geq \widetilde{\omega}\left(\frac{K}{\sigma_{0}^{q-1}}\right), n \geq \widetilde{\omega}\left(\frac{k^{4}}{s^{2} \sigma_{0}}\right), \frac{T}{\eta \sqrt{d_r}} \leq 1 / \operatorname{poly}(K)$ for $r\in[2]$.
    \item $\gamma_r^{q-1}\leq \frac{1}{K}$ for $r\in[2]$, where $\gamma_r$ controls the target signal for insufficient data.
\end{itemize}
\paragraph{Network Gradient.} Given data point $(\Xb, y) \in \mathcal{D}$, in every iteration $t$ for every $j \in[K]$, $l \in[m]$, $r \in [2]$
\begin{align*}
&-\nabla_{w_{j, l, r}} \cL( f ; \Xb, y)=\left(\mathbb{I}\{j=y\}-\ell_{j}(f, \Xb)\right)  \sigma^{\prime}\left(\left\langle w_{j, l, r}, \Xb^{r}\right\rangle\right) \Xb^{r}
\end{align*}
where $\ell_j(f,\Xb):=\frac{\exp(f_j(\Xb))}{\sum_{i\in [K]}\exp(f_i(\Xb))}$, $\mathbb{I}\{\cdot\}$ is the indicator, and $\sigma^{\prime}(\cdot)$ denotes the derivative of the smoothed ReLU function.
\paragraph{Gaussian Facts.}

\begin{lemma}\label{lemma-prob}
    Consider two  Gussian random vector $(X_{1}, \ldots, X_{p})$ , $(Z_1,\cdots, Z_p)$, where $X_i \overset{\text{i.i.d.}}{\sim} \mathcal{N}\left(0, 1\right)$, $Z_i \overset{\text{i.i.d.}}{\sim} \mathcal{N}\left(0, \bar{\sigma}^2\right):$ 
\begin{itemize}
    \item[(a).] For $\bar{\sigma}\leq1$, for every $\epsilon>0$, with \textbf{at most} probability $O(\frac{1}{\poly (p)}+ \epsilon \log p)$: $\max_{i\in [p]} X_i =\max_{i\in [p]} Z_i\cdot (1\pm O(\epsilon))$ 
    \item[(b). ] For $\bar{\sigma}\geq1$, for every $\epsilon>0$, with \textbf{at least} probability $p^{-(\bar{\sigma}^2-1)}\cdot \Omega(\frac{1}{\bar{\sigma}})$: $\max_{i\in [p]} X_i \geq \max_{i\in [p]} Z_i$
\end{itemize}
\end{lemma}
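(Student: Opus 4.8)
The plan is to treat this as a statement about the maximum of $p$ i.i.d.\ Gaussians and to reason through the density and tail of that maximum. Write $M_X=\max_{i\in[p]}X_i$ and $M_Z=\max_{i\in[p]}Z_i$, and reduce to standard normals by setting $Z_i=\bar\sigma\tilde Z_i$ with $\tilde Z_i\sim\mathcal N(0,1)$ independent of the $X_i$, so that $M_Z=\bar\sigma M_{\tilde Z}$ and $M_{\tilde Z}\overset{d}{=}M_X$ is independent of $M_X$. Two standard facts about the maximum of $p$ i.i.d.\ standard Gaussians will drive everything: (i) its density $f_{M_X}(x)=p\,\phi(x)\Phi(x)^{p-1}$ is bounded uniformly by $\sup_x f_{M_X}(x)=O(\sqrt{\log p})$, and it peaks near $b_p=\sqrt{2\log p}-\Theta(\log\log p/\sqrt{\log p})$, which to second order is also where its median lies (with $b_p^2=2\log p-\log\log p-c_0+o(1)$ for a fixed constant $c_0$); and (ii) the Gaussian tail estimate $1-\Phi(x)=(1+o(1))\phi(x)/x$. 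I would first record both of these as preliminary computations.

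For part (a) I would condition on $M_Z$ and exploit independence. The event $M_X=M_Z(1\pm O(\epsilon))$ forces $M_X$ into an interval of length $O(\epsilon\,M_Z)$ centered near $M_Z$; by the density bound its conditional probability is at most (length)$\times\sup_x f_{M_X}=O(\epsilon\,M_Z\sqrt{\log p})$. To convert this into the claimed bound I would introduce the good event $G=\{M_Z\le\bar\sigma\sqrt{2\log p}(1+\delta)\}$ for a small constant $\delta$; a union bound over the $p$ coordinates gives $\Pr(G^c)\le p^{1-(1+\delta)^2}=\frac{1}{\poly(p)}$. On $G$ we have $M_Z=O(\sqrt{\log p})$ (using $\bar\sigma\le1$), so the conditional probability above is $O(\epsilon\log p)$, and altogether $\Pr(\text{event})\le\Pr(G^c)+O(\epsilon\log p)=O(\frac1{\poly(p)}+\epsilon\log p)$. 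The degenerate cases where $M_Z\le0$ (so the multiplicative band is ill-behaved) are absorbed into the $\frac1{\poly(p)}$ term since $\Pr(M_X\le0)\le 2^{-p}$.

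For part (b), with $\bar\sigma\ge1$ the variable $M_Z$ stochastically dominates $M_X$, so I would lower-bound the rare event $\{M_X\ge M_Z\}$ by decoupling through a threshold: by independence, $\Pr(M_X\ge M_Z)\ge\Pr(M_X\ge t)\,\Pr(M_Z\le t)$ for any $t$. Taking $t=\bar\sigma b_p$ with $b_p$ the median of $M_{\tilde Z}$ makes $\Pr(M_Z\le t)=\Pr(M_{\tilde Z}\le b_p)=\tfrac12=\Omega(1)$. It then remains to estimate $\Pr(M_X\ge\bar\sigma b_p)$. Since $p(1-\Phi(\bar\sigma b_p))\le1$ in this regime, we have $\Pr(M_X\ge\bar\sigma b_p)=1-\Phi(\bar\sigma b_p)^p\ge\tfrac12\,p(1-\Phi(\bar\sigma b_p))$, and plugging the tail estimate together with the precise value of $b_p^2$ gives $p(1-\Phi(\bar\sigma b_p))=(1+o(1))\,p\,\phi(\bar\sigma b_p)/(\bar\sigma b_p)=p^{-(\bar\sigma^2-1)}\tfrac{1}{\bar\sigma}\cdot\Omega\!\big((\log p)^{(\bar\sigma^2-1)/2}\big)$, where the hidden constant is a fixed positive number arising from $c_0$. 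For $\bar\sigma\ge1$ the factor $(\log p)^{(\bar\sigma^2-1)/2}\ge1$, yielding $\Pr(M_X\ge t)\ge p^{-(\bar\sigma^2-1)}\Omega(1/\bar\sigma)$ and hence the claim.

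The main obstacle is the sharpness in part (b): to land exactly on $\Omega(1/\bar\sigma)$ rather than $\Omega(1/(\bar\sigma\sqrt{\log p}))$, I must carry the second-order ($\log\log p$ and the constant $c_0$) terms in the location of the Gaussian maximum, because a naive threshold $t=\bar\sigma\sqrt{2\log p}$ loses a $1/\sqrt{\log p}$ factor; it is precisely the $(\log p)^{(\bar\sigma^2-1)/2}$ boost coming from $b_p<\sqrt{2\log p}$ that compensates. The borderline case $\bar\sigma=1$ (where $p(1-\Phi)$ need not be $\le1$) should be dispatched separately by the symmetry $\Pr(M_X\ge M_Z)=\tfrac12$. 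In part (a) the only mild care needed is justifying the uniform density bound $\sup_x f_{M_X}=O(\sqrt{\log p})$ and handling the multiplicative band near zero, both of which are routine once the good event $G$ localizes $M_Z$.
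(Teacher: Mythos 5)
Your proof is correct, but note that the paper does not actually contain a proof to compare against: its ``proof'' of this lemma is a one-line deferral to anti-concentration theorems (Chernozhukov et al.), properties of Gaussian maxima (Kamath), and Proposition B.2 of \citet{allen2020towards}. Your argument is a legitimate self-contained instantiation of exactly that standard Gaussian analysis. The two technical points on which the lemma genuinely hinges both check out in your write-up: for (a), the uniform density bound $\sup_x f_{M_X}(x)=O(\sqrt{\log p})$ for the maximum of $p$ i.i.d.\ standard normals (this is the anti-concentration input), combined with truncating $M_Z$ at $\bar\sigma\sqrt{2\log p}(1+\delta)$ so that the multiplicative band has length $O(\epsilon\sqrt{\log p})$; for (b), the decoupling $\Pr(M_X\ge M_Z)\ge\Pr(M_X\ge t)\Pr(M_Z\le t)$ at $t=\bar\sigma m_p$ with $m_p$ the median of the standard maximum, where carrying the second-order expansion $m_p^2=2\log p-\log\log p-c_0+o(1)$ is indeed what rescues the $1/\sqrt{\log p}$ that a naive threshold $\bar\sigma\sqrt{2\log p}$ would lose (and one can check $c_0=\log(4\pi)+2\log\log 2>0$, so the constant works in your favor, as you asserted). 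Two small caveats you should make explicit: the lemma implicitly assumes the vectors $X$ and $Z$ are independent (without this, (b) is false, e.g.\ under the coupling $Z_i=\bar\sigma X_i$), which you use and which matches how the paper applies the lemma to the two independently initialized modality encoders; and your asymptotic expansion in (b) carries a multiplicative error of the form $e^{\bar\sigma^2 o(1)}$, so the argument as written is uniform only over $\bar\sigma=O(1)$ --- this suffices, since the paper invokes part (b) precisely when $\bar\sigma$ is a constant (Lemma~\ref{lemma-win}).
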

\begin{proof}[Proof]
The lemma can be derived by anti-concentration theorems~\cite{chernozhukov2015comparison} and maximum Gaussian property~\cite{kamath2015bounds} using the standard Gaussian analysis. The proof follows from Proposition B.2 in ~\cite{allen2020towards}, and here we omit the proof details. 

\end{proof}

\subsection{Modality Characterization at Initialization}
Define the following data-dependent parameter: 
\begin{align*}
        d_{j, r}(\mathcal{D})=\frac{1}{n\beta^{q-1}}\sum_ {(\Xb,y)\in\mathcal{D}_{s}}\mathbb{I}\{y=j\}  \left(z^{r}_{j}\right)^{q}
    \end{align*}
    Recall $\mathcal{D}_{s}$ denotes the data pair whose sparse vectors $z^{1}$ and $z^{2}$ both come from sufficient class.

    For each class $j\in [K]$, let us denote:
    $$
    \Gamma_{j, r}^{(t)} \stackrel{\text { def }}{=} \max _{l \in[m]}\left[\left\langle w_{j, l, r}^{(t)}, \Mb_{j}^{r}\right\rangle\right]^{+}\quad \text { and } \quad \Gamma_{j}^{(t)} \stackrel{\text { def }}{=} \max _{r \in[2]}\Gamma_{j, r}^{(t)}
    $$

Let us give the following definitions and results to characterize each modlaity's property at initialization:
\begin{definition}[Winning Modality] For each class $j\in[K]$, at iteration $t=0$, if there exists $r_j\in[2]$, s.t. 
\begin{align*}
    \Gamma^{(0)}_{j,r_{j}}d_{j, r_{j}}(\mathcal{D})^{\frac{1}{q-2}} &\geq \Gamma^{(0)}_{j,3-r_{j}} d_{j, 3-r_{j}}(\mathcal{D})^{\frac{1}{q-2}}\cdot(1+\frac{1}{\operatorname{polylog}(K)})
\end{align*}
then we refer the modality $\mathcal{M}_{r_j}$ as the winning modality for class $j$. It is obvious that \textbf{at most} one of modalities can win. 
\end{definition}
\begin{lemma}[\textbf{Wining Modality Characterization}]\label{lemma-win}
For every $j\in[K]$, denote the probability that modality $\mathcal{M}_r$ is the winning modality as $p_{j,r}$, then we have 
\begin{itemize}
    \item $p_{j,1}+p_{j,2}\geq 1-o(1)$.
    \item $p_{j,r}\geq(\frac{1}{\polylog(K)})^{O(1)}$ for every $r\in[2]$. 
\end{itemize}
\end{lemma}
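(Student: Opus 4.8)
The plan is to condition on the training set $\mathcal{D}$, so that the quantities $d_{j,r}(\mathcal{D})$ become fixed scalars, and then reduce the winning condition to a comparison of maxima of independent Gaussians over the random initialization. Because $\mathbf{M}^r$ has orthonormal columns, each inner product $\la w^{(0)}_{j,l,r}, \Mb^{r}_j\ra$ is distributed as $\mathcal{N}(0,\sigma_0^2)$, independently across $l\in[m]$ and across $r\in[2]$. Writing $a_r := d_{j,r}(\mathcal{D})^{1/(q-2)}$ and factoring out $\sigma_0$, I would set $U := \max_{l\in[m]} Z^{(1)}_l$ and $V := \max_{l\in[m]} Z^{(2)}_l$ with $Z^{(r)}_l \overset{\text{iid}}{\sim}\mathcal{N}(0,1)$, so that $\Gamma^{(0)}_{j,1}=\sigma_0[U]^+$ and $\Gamma^{(0)}_{j,2}=\sigma_0[V]^+$. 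Since the maximum of $m=\polylog(K)$ standard Gaussians is positive except on an event of probability $2^{-m}=o(1)$, I can drop the positive parts throughout at the cost of an $o(1)$ additive term. In these terms, $\mathcal{M}_1$ wins iff $a_1 U \geq a_2 V(1+\epsilon)$ with $\epsilon=\tfrac{1}{\polylog(K)}$, $\mathcal{M}_2$ wins iff the reverse holds with the factor on the other side, and neither wins exactly on the ``tie'' event $\tfrac{a_1 U}{a_2 V}\in(\tfrac{1}{1+\epsilon},1+\epsilon)$.

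A preliminary step that both claims rely on is to show $a_1/a_2=\Theta(1)$ w.h.p. Using that each sufficient coordinate satisfies $z^r_y\in[1,C_r]$, so $(z^r_j)^q\in[1,C_r^q]$ is bounded, and that the number $n_{s,j}$ of class-$j$ samples in $\mathcal{D}_s$ concentrates around its expectation $\Theta(n/K)$ (large by the sample-size assumptions), a standard concentration argument gives $d_{j,r}(\mathcal{D})=\Theta\!\big(n_{s,j}/(n\beta^{q-1})\big)$ for both $r$. Since the two sums range over the same index set with each summand in $[1,C_r^q]$, the ratio $d_{j,1}/d_{j,2}$, and hence $c_0:=(d_{j,1}/d_{j,2})^{1/(q-2)}$, lies between two positive constants depending only on $C_1,C_2,q$. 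I would carry this probability-$(1-o(1))$ data event through the rest of the argument.

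For the first claim I would bound the tie probability. Dividing by $a_1$, the tie event becomes $\max_l Z^{(1)}_l = \max_l \widetilde Z^{(2)}_l\,(1\pm O(\epsilon))$ with $\widetilde Z^{(2)}_l\sim\mathcal{N}(0,(a_2/a_1)^2)$; normalizing so the variance ratio $\bar\sigma\le 1$ (the tie event is symmetric in the two modalities, so this is WLOG), Lemma~\ref{lemma-prob}(a) with $p=m$ bounds this probability by $O\!\big(\tfrac{1}{\poly(m)}+\epsilon\log m\big)$. With $m=\polylog(K)$ and $\epsilon=\tfrac{1}{\polylog(K)}$, both terms are $o(1)$, so $\Pr[\text{tie}]=o(1)$ and $p_{j,1}+p_{j,2}=1-\Pr[\text{tie}]\geq 1-o(1)$.

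For the second claim I would lower bound the winning probability of the potentially disadvantaged modality, say $\mathcal{M}_2$. The event $\{a_2 V\geq a_1 U(1+\epsilon)\}$ contains $\{\max_l Z^{(2)}_l\geq \max_l (cZ^{(1)}_l)\}$ with $c:=\tfrac{a_1}{a_2}(1+\epsilon)=c_0(1+\epsilon)=\Theta(1)$; since $cZ^{(1)}_l\sim\mathcal{N}(0,c^2)$ with $c\ge 1$ (if $c<1$ modality $2$ is already advantaged and wins with $\Omega(1)$ probability), this is precisely the configuration of Lemma~\ref{lemma-prob}(b) with $\bar\sigma=c$, giving probability at least $m^{-(c^2-1)}\cdot\Omega(1/c)$. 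Because $c$ is a constant, $c^2-1=O(1)$ and $\Omega(1/c)=\Omega(1)$, so this is $m^{-O(1)}=(\tfrac{1}{\polylog(K)})^{O(1)}$; the symmetric argument bounds $p_{j,1}$, and multiplying by the $1-o(1)$ probability of the good data event completes the proof. The main obstacle is exactly this second claim: the point is that even when one modality's signal strictly dominates, the other still wins with non-negligible probability, which rests on the tail estimate of Lemma~\ref{lemma-prob}(b) together with the quantitative fact that the bounded supports $[1,C_r]$ keep $c=\Theta(1)$ so the exponent $c^2-1$ stays constant — were the signal ratio polynomially large in $K$, the bound would collapse to $K^{-\omega(1)}$ and the claim would fail.
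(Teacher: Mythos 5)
Your proposal is correct and takes essentially the same route as the paper's proof: condition on $\mathcal{D}$ so that $d_{j,r}(\mathcal{D})$ are fixed, reduce winning to a comparison of maxima of $m$ independent Gaussians, apply Lemma~\ref{lemma-prob}(a) to bound the probability that neither modality wins, and Lemma~\ref{lemma-prob}(b) with $\bar\sigma=\Theta(1)$ to lower bound the disadvantaged modality's winning probability by $m^{-O(1)}$. Your write-up is in fact more careful than the paper's on points it glosses over — handling the $[\cdot]^+$ truncation, justifying that the ratio $(d_{j,1}/d_{j,2})^{1/(q-2)}$ is $\Theta(1)$ via the bounded supports $[1,C_r]$, and getting the normalization direction for Lemma~\ref{lemma-prob}(b) right.
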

\begin{proof}[Proof of Lemma~\ref{lemma-win}]

For the first argument, if neither of modalities wins, then we must have:
    $$
\Gamma_{j, r}^{(0)}=\Gamma_{j, 3-r}^{(0)}\left(\frac{d_{j, 3-r}(\mathcal{D})}{d_{j, r}(\mathcal{D})}\right)^{\frac{1}{q-2}}\left(1 \pm O\left(\frac{1}{\polylog(K)}\right)\right)
$$
By our assumption, we have $\frac{d_{j, 3-r}(\mathcal{D})}{d_{j, r}(\mathcal{D})}\leq 1$ and is fixed given the training data. Letting $p=m$, $\epsilon=\frac{1}{m\log m}$, applying Lemma~\ref{lemma-prob} $(a)$, we obtain the probability that this event occurs is at most $O(\frac{1}{\polylog K})
$ (Recall that $m=\polylog(K)$).

For the second argument, we just need to prove that $\Gamma_{j, 3-r}^{(0)}\left(\frac{d_{j, 3-r}(\mathcal{D})}{d_{j, r}(\mathcal{D})}\right)^{\frac{1}{q-2}}$ has a non-trival probability to be larger than $\Gamma_{j, r}^{(0)}$. We can apply the conclusion of $(b)$ in Lemma~\ref{lemma-prob}, observing that $\bar{\sigma}=(\frac{d_{j, 3-r}(\mathcal{D})}{d_{j, r}(\mathcal{D})})^{\frac{1}{q-2}}$ is a constant and then obtain that
\begin{align*}
    \Pr(\Gamma_{j, 3-r}^{(0)}\left(\frac{d_{j, 3-r}(\mathcal{D})}{d_{j, r}(\mathcal{D})}\right)^{\frac{1}{q-2}}\leq \Gamma_{j, r}^{(0)})\geq \frac{1}{m^{O(1)}}=\frac{1}{\polylog(K)^{O(1)}}
\end{align*}
Hence, we compelets the proof.

\end{proof}

\subsection{Induction Hypothesis}
Given a data $\Xb$, define:
\begin{align*}
    \mathcal{S}^{r}(\Xb):=\{j\in [K]: \text{the $j$-th coordinate of $\Xb^{r}$'s sparse vector $z^{r}$ is not equal to zero, i.e. }  z^{r}_{j}\neq 0 \}
\end{align*}
We abbreviate $\mathcal{S}^{r}(\Xb)$ as $\mathcal{S}^{r}$ in our subsequent analyis for simplicity.

\begin{hypothesis}~\label{hypo}
\begin{enumerate}[label=\roman*]
    \item [] For sufficient data $(\Xb,y)\in\mathcal{D}_s$, for every $r\in[2]$, $l\in[m]$:
    \item for every $j=y$, or $j\in \mathcal{S}^{r}:$ $\left\langle w_{j,l,r}^{(t)}, \Xb^{r}\right\rangle=\left\langle w_{j,l,r}^{(t)}, \Mb^{r}_{j}\right\rangle z^{r}_{j} \pm \widetilde{o}\left(\sigma_{0}\right)$.
    \item else $\left|\left\langle w_{j, l,r}^{(t)}, \Xb^{r}\right\rangle\right| \leq \widetilde{O}\left(\sigma_{0} \right)$
       
   \item [] For insufficient data $(\Xb, y) \in \mathcal{D}_{i}$, every $l \in[m]$, every $r \in[2]$:
   
       \item  for every $j =y:$ $\left\langle w_{j, l,r}^{(t)}, \Xb^{r}\right\rangle=\left\langle w_{j, l, r}^{(t)}, \Mb_{j}^{r}\right\rangle z^{r}_{j}+\left\langle w_{j, l, r}^{(t)}, {\xi^{r}}^{\prime}\right\rangle \pm \widetilde{O}\left(\sigma_{0} \alpha K\right)$
       \item for every $j\in \mathcal{S}^{r}:$ $\left\langle w_{j,l,r}^{(t)}, \Xb^{r}\right\rangle=\left\langle w_{j,l,r}^{(t)}, \Mb^{r}_{j}\right\rangle z^{r}_{j} \pm \widetilde{o}\left(\sigma_{0}\right)$.
      \item for every $j =y$, if $\mathcal{M}_{3-r}$ is the winning modality for $j$, we have: $\left|\left\langle w_{j, l, r}^{(t)}, \Xb^{r}\right\rangle\right| \leq \widetilde{O}\left(\sigma_{0}\right)$
      \item else
    $\left|\left\langle w_{j, l, r}^{(t)}, \Xb^{r}\right\rangle\right| \leq \widetilde{O}\left(\sigma_{0} \right)$
  
   
   Moreover, we have for every $j \in[k]$,
       \item $\Gamma_{j}^{(t)} \geq \Omega\left(\sigma_{0}\right)$ and $\Gamma_{j}^{(t)} \leq \widetilde{O}(1)$.
       \item for every $l \in[m]$, every $r\in[2]$, it holds that $\left\langle w_{j, l, r}^{(t)}, \Mb_{j}^{r}\right\rangle \geq-\widetilde{O}\left(\sigma_{0}\right)$.
   \end{enumerate}
\end{hypothesis}
\paragraph{Proof overview of Induction Hypothesis~\ref{hypo}.} We will first characterize the training phases and then state some claims as consequences of statements of the hypothesis, which is crucial for our later proof. After that, we will analyze  the training process in every phases to prove the hypothesis.

Let us introduce some calculations assuming the hypothesis holds to simplify the subsequent proof. 

\begin{fact}[Function Approximation]~\label{fact-app}
    Let $Z_{j,r}(\Xb)=\mathbb{I}\{j=y\text{, or } j\in\mathcal{S}^{r}\}z_{j}^{r}$, $\Phi_{j, r}^{(t)} \stackrel{\text {def}}{=} \sum_{l \in[m]}\left[\left\langle w_{j, l, r}^{(t)}, \Mb_{j}^{r}\right\rangle\right]^{+}$ and $\Phi_{j}^{(t)} \stackrel{\text {def}}{=} \sum_{r \in[2]} \Phi_{j,r}^{(t)}$
for every $t$, every $(\Xb, y) \in \mathcal{D}_{s}$ and $j \in[K]$, or for every $(\Xb, y) \in \mathcal{D}_{i}$ and $j \in[K] \backslash\{y\}$,
$$
\begin{aligned}
f_{j}^{(t)}(X) &=\sum_{r \in[2]}\left(\Phi_{j, r}^{(t)} \times Z_{j, r}(\Xb)\right) \pm O\left(\frac{1}{\polylog(K)}\right)
\end{aligned}
$$
for every $(\Xb, y) \sim \mathcal{P}$, with probability at least $1-e^{-\Omega\left(\log ^{2} K\right)}$ it satisfies for every $j \in[K]$,
$$
f_{j}^{(t)}(X)=\sum_{r \in[2]}\left(\Phi_{j, r}^{(t)} \times Z_{j, r}(\Xb)\right) \pm O\left(\frac{1}{\polylog(K)}\right)
$$
Similarly, for $(\Xb^{r}, y)\sim\mathcal{P}^r$, for $r\in[2]$, \textit{w.h.p.} 
$$
{f^{r}_{j}}^{(t)}(\Xb)=\Phi_{j, r}^{(t)} \times Z_{j, r}(\Xb) \pm O\left(\frac{1}{\polylog(K)}\right)
$$
\end{fact}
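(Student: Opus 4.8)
The plan is to reduce the three displayed identities to a single per-neuron estimate and then sum the errors over the $m$ neurons and the two modalities. Fix a class $j\in[K]$, a modality $r\in[2]$, and a neuron index $l\in[m]$, and abbreviate $a:=\langle w_{j,l,r}^{(t)},\Mb_j^r\rangle$ and $x:=\langle w_{j,l,r}^{(t)},\Xb^r\rangle$. The target per-neuron bound is
\[
\left|\sigma(x)-[a]^+\,Z_{j,r}(\Xb)\right|\le \beta+\widetilde{o}(\sigma_0).
\]
Two elementary ingredients drive this. First, a regime-free bound on the smoothed ReLU: for every real $x$ one has $|\sigma(x)-[x]^+|\le\beta(1-1/q)\le\beta$, verified separately on $x\le0$, on $x\in[0,\beta]$ (using $x^q/(\beta^{q-1}q)\le x/q$), and on $x\ge\beta$. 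Second, $[\cdot]^+$ is $1$-Lipschitz, and since every sparse coordinate $z_j^r$ is nonnegative in both the sufficient and insufficient classes (as is the feature noise $\alpha^r$), we have the factorization $[a\,z_j^r]^+=[a]^+\,z_j^r=[a]^+\,Z_{j,r}(\Xb)$ whenever $j\in\mathcal{S}^r$.

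Next I would invoke the Induction Hypothesis to replace $x$ by $a\,z_j^r$. When $Z_{j,r}(\Xb)=z_j^r\neq0$, i.e. $j\in\mathcal{S}^r$ (this covers $j=y$ and the on-support off-target classes for both sufficient data and insufficient data with $j\neq y$), the hypothesis gives $x=a\,z_j^r\pm\widetilde{o}(\sigma_0)$; Lipschitzness then yields $|[x]^+-[a]^+z_j^r|=|[x]^+-[a\,z_j^r]^+|\le\widetilde{o}(\sigma_0)$, which combined with the smoothed-ReLU bound gives the per-neuron estimate. When $Z_{j,r}(\Xb)=0$, i.e. $j\notin\mathcal{S}^r$, the hypothesis gives $|x|\le\widetilde{O}(\sigma_0)\ll\beta$, so $x$ lies in the polynomial regime and $\sigma(x)\le\widetilde{O}(\sigma_0^q/\beta^{q-1})$ is negligible, matching the target value $0$. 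Summing the per-neuron estimate over $l\in[m]$ and (for $f_j$) over $r\in[2]$ accumulates a total error of order $m(\beta+\widetilde{o}(\sigma_0))$. Since $\sigma_0$ is polynomially small, $m\sigma_0=o(1/\polylog(K))$ automatically; and since $\beta=1/\polylog(K)$ may be taken with a polylog exponent strictly larger than that of $m=\polylog(K)$, we get $m\beta=O(1/\polylog(K))$. This yields the two training-data identities at once, the version for $f_j$ carrying the sum over $r$ and the uni-modal version ${f^{r}_{j}}^{(t)}$ keeping only the single modality $r$.

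For the population statements over $(\Xb,y)\sim\mathcal{P}$ (and $(\Xb^r,y)\sim\mathcal{P}^r$) the Induction Hypothesis is not directly available, since it is stated for the training set; so I would re-derive the pre-activation decomposition for a fresh independent sample and show it holds with probability $1-e^{-\Omega(\log^2K)}$. Writing $\Xb^r=\Mb^r(z^r+\alpha^r)+{\xi^{r}}^{\prime}$ and using that $\Mb^r$ is unitary,
\[
x=a\,(z_j^r+\alpha_j^r)+\sum_{j'\neq j}\langle w_{j,l,r}^{(t)},\Mb_{j'}^r\rangle\,z_{j'}^r+\langle w_{j,l,r}^{(t)},{\xi^{r}}^{\prime}\rangle.
\]
The cross terms are controlled by the off-diagonal correlation bounds together with the sparsity $|\mathcal{S}^r|=\Theta(s)$, both of which concentrate w.h.p. for a fresh draw, while the Gaussian term is distributed as $\mathcal{N}(0,\sigma_g^2\|w_{j,l,r}^{(t)}\|^2)$ with $\sigma_g=O(\sigma_0^{q-1})$ and is therefore $\widetilde{o}(\sigma_0)$ w.h.p. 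This reproduces $x=a\,z_j^r\pm\widetilde{o}(\sigma_0)$ (or $|x|\le\widetilde{O}(\sigma_0)$ off support), after which the per-neuron estimate and the summation above apply verbatim.

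The genuinely routine parts are the case analysis and the elementary smoothed-ReLU bound. The main obstacle I anticipate is the error accounting: one must ensure that the accumulation of $m$ per-neuron slacks of size $\beta$ and $\widetilde{o}(\sigma_0)$ really stays below $O(1/\polylog(K))$ rather than degenerating to a ratio $\polylog(K)/\polylog(K)$, which pins down the admissible relative polylog exponents of $m$, $\beta$ and $\sigma_0$. The secondary difficulty is the fresh-sample concentration for the population claim, where the off-diagonal correlations $\langle w_{j,l,r}^{(t)},\Mb_{j'}^r\rangle$ must remain $\widetilde{O}(\sigma_0)$ simultaneously across the random support $\mathcal{S}^r$ so that, after multiplication by $z_{j'}^r$ and summation over the $\Theta(s)$ active coordinates, their contribution does not overwhelm the $\widetilde{o}(\sigma_0)$ budget.
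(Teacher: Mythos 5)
Your proposal is correct and is essentially the argument the paper intends: Fact~\ref{fact-app} is stated in the paper without any proof, as a direct bookkeeping consequence of Induction Hypothesis~\ref{hypo} ("assuming the hypothesis holds"), and your per-neuron decomposition — the uniform bound $|\sigma(x)-[x]^+|\le\beta(1-1/q)$, positive homogeneity $[a z]^+=[a]^+z$ for $z\ge 0$, the hypothesis items for on-support versus off-support coordinates, then summation over $l\in[m]$ and $r\in[2]$ — is exactly the computation the paper performs for the neighboring Fact~\ref{fact-err}, including your (correct and necessary) observation that the polylog exponent of $1/\beta$ must dominate that of $m$ so that $m\beta=O(1/\polylog(K))$, which the paper itself tacitly assumes when it writes the per-neuron error as $O(1/m)$. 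The only blemishes are minor and harmless under the paper's parameters: in the fresh-sample decomposition you dropped the feature-noise cross terms $\sum_{j'\neq j}\langle w_{j,l,r}^{(t)},\Mb_{j'}^{r}\rangle\,\alpha_{j'}^{r}$, whose contribution $\widetilde{O}(K\sigma_0\alpha)$ is polynomially small here (the paper's own hypothesis item for insufficient data carries precisely this $\widetilde{O}(\sigma_0\alpha K)$ term), and your w.h.p.\ control of $\langle w_{j,l,r}^{(t)},{\xi^{r}}^{\prime}\rangle$ for a fresh sample tacitly requires a crude $\operatorname{poly}(K)$ bound on $\|w_{j,l,r}^{(t)}\|$, a point on which the paper is equally silent.
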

 
\begin{fact}~\label{fact-err}
For every $(\Xb, y) \in \mathcal{D}$ and every $j \in[K]: \ell_{j}\left(f^{(t)}, \Xb\right)=O\left(\frac{e^{O\left(\Gamma_{j}^{(t)}\right) m}}{e^{O\left(\Gamma_{j}^{(t)}\right) m}+K}\right)$; Moreover, for every $(\Xb, y) \in \mathcal{D }_{i}$ and 
$j \in[K] \backslash\{y\}$, we have $\ell_{j}\left(f^{(t)}, \Xb\right)=O\left(\frac{1}{K}\right)\left(1-\ell_{y}\left(f^{(t)}, \Xb\right)\right)$
\end{fact}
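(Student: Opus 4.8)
The plan is to work directly from the softmax definition $\ell_j(f^{(t)},\Xb)=\exp(f_j^{(t)}(\Xb))/\sum_{i\in[K]}\exp(f_i^{(t)}(\Xb))$ and to control the numerator and the normalizing sum separately, feeding in the approximation $f_j^{(t)}(\Xb)=\sum_{r\in[2]}\Phi_{j,r}^{(t)}Z_{j,r}(\Xb)\pm O(1/\polylog(K))$ from Fact~\ref{fact-app}. The one ingredient shared by both parts is a lower bound on the denominator: I would first argue $\sum_{i\in[K]}\exp(f_i^{(t)}(\Xb))=\Omega(K)$. For a fixed $(\Xb,y)$ the supports satisfy $|\mathcal{S}^1|,|\mathcal{S}^2|=\Theta(s)$ with $s=K^{0.1}\ll K$, so at least $K-\Theta(s)=\Omega(K)$ indices $i\neq y$ are inactive, i.e. $i\notin\mathcal{S}^1\cup\mathcal{S}^2$; for these $Z_{i,r}(\Xb)=0$ for both $r$, so Fact~\ref{fact-app} gives $f_i^{(t)}(\Xb)=\pm O(1/\polylog(K))$ and hence $\exp(f_i^{(t)}(\Xb))=\Theta(1)$. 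Summing these $\Omega(K)$ terms yields the bound, and for training points this uses the deterministic version of Fact~\ref{fact-app}, so no probability is lost.

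For the first conclusion I would then bound the numerator. Since $\Phi_{j,r}^{(t)}\le m\,\Gamma_{j,r}^{(t)}\le m\,\Gamma_j^{(t)}$ and every active signal obeys $Z_{j,r}(\Xb)=O(1)$, Fact~\ref{fact-app} gives $f_j^{(t)}(\Xb)\le O(\Gamma_j^{(t)})\,m+O(1)$, hence $\exp(f_j^{(t)}(\Xb))\le O(1)\cdot e^{O(\Gamma_j^{(t)})m}$. Dividing by the $\Omega(K)$ denominator yields $\ell_j=O\big(e^{O(\Gamma_j^{(t)})m}/K\big)$, and combining this with the trivial bound $\ell_j\le 1$ gives $\ell_j=O(\min\{e^{O(\Gamma_j)m}/K,\,1\})$, which is $\Theta$-equivalent to the stated $O\big(e^{O(\Gamma_j)m}/(e^{O(\Gamma_j)m}+K)\big)$.

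For the second conclusion, fix $(\Xb,y)\in\mathcal{D}_i$ and $j\neq y$; here $\ell_j/(1-\ell_y)=\exp(f_j^{(t)}(\Xb))/\sum_{i\neq y}\exp(f_i^{(t)}(\Xb))$, the denominator is again $\Omega(K)$ by the inactive-coordinate count, so it suffices to show $\exp(f_j^{(t)}(\Xb))=O(1)$, i.e. $f_j^{(t)}(\Xb)=O(1)$. I would split $f_j^{(t)}(\Xb)=\sum_r\Phi_{j,r}^{(t)}Z_{j,r}(\Xb)\pm O(1/\polylog(K))$ by modality. For a modality that is insufficient on this sample the off-target signal is tiny, $Z_{j,r}(\Xb)\le\rho_r=1/\polylog(K)$, which against $\Phi_{j,r}^{(t)}\le m\,\widetilde O(1)$ is $O(1)$ by the parameter relations in the Global Assumptions. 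For a modality that is sufficient on this sample, where $Z_{j,r}(\Xb)$ can be $\Theta(1)$, I would invoke the modality-competition structure of Hypothesis~\ref{hypo}: for class $j$ the losing modality's correlations are pinned at $\widetilde O(\sigma_0)$, so its $\Phi_{j,r}^{(t)}=\widetilde O(\sigma_0)$ is negligible and only the winning modality's contribution survives, which together with $c_r<\tfrac12$ and $\Gamma_j^{(t)}\le\widetilde O(1)$ keeps $f_j^{(t)}(\Xb)=O(1)$.

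The main obstacle is exactly this last step: bounding the off-target logit $f_j^{(t)}(\Xb)$ on insufficient data when one modality is sufficient and carries $\Theta(1)$ off-target signal on an already-learned feature. Controlling it cleanly needs the full strength of Hypothesis~\ref{hypo} (that the losing modality never leaves the $\widetilde O(\sigma_0)$ scale) combined with the Global Assumptions, so that both the surviving winning-modality term and the insufficient-modality term remain $O(1)$; the analogous statement for $j=y$ on insufficient data also requires absorbing the Gaussian and feature-noise terms $\widetilde O(\sigma_0\alpha K)=o(1)$ of the hypothesis. Once $f_j^{(t)}(\Xb)=O(1)$ is secured uniformly over $j\neq y$, both conclusions follow immediately from the $\Omega(K)$ denominator bound, with all $O(1/\polylog(K))$ approximation errors absorbed along the way.
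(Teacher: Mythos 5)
Your handling of the first claim and of the normalizing sum is sound and matches the paper's (mostly implicit) argument: the paper bounds the logit by $f_j^{(t)}(\Xb)\le m\Gamma_j^{(t)}\cdot O(1)+O(1)$ directly from Induction Hypothesis~\ref{hypo} and never spells out the $\sum_{i\in[K]}\exp(f_i^{(t)}(\Xb))=\Omega(K)$ count; your version makes that step explicit via the inactive coordinates, which is fine. The genuine gap is precisely the step you flagged as the main obstacle, and your patch does not close it. Modality competition (Lemma~\ref{lemma-comp}) constrains only the \emph{losing} modality of class $j$, pinning $\Gamma^{(t)}_{j,3-r_j}\le\widetilde{O}(\sigma_0)$; it puts no $O(1)$ ceiling on the \emph{winning} modality. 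Take $(\Xb,y)\in\mathcal{D}_i$ whose insufficient modality is $\mathcal{M}_1$ and whose sufficient modality is $\mathcal{M}_2$, and an off-target class $j\in\mathcal{S}^{2}(\Xb)$ with $(j,2)\in\mathcal{W}$. Then $Z_{j,2}(\Xb)=z_j^2\in[\Omega(1),c_2]$, and the only bound your ingredients give on the surviving term is $\Phi^{(t)}_{j,2}\le m\,\Gamma^{(t)}_{j}\le \polylog(K)\cdot\widetilde{O}(1)$, so the inference ``$c_2<\tfrac{1}{2}$ and $\Gamma^{(t)}_j\le\widetilde{O}(1)$ keep $f^{(t)}_j(\Xb)=O(1)$'' is a non sequitur: $m\Gamma^{(t)}_j c_2$ is $\polylog(K)$, not $O(1)$. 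Nor is this a removable technicality: by the paper's own endgame analysis (proof of Theorem~\ref{thm-main}, via Claim~\ref{cla-individual}), the winning modality reaches $\Phi^{(T)}_{j,r_j}\ge\Omega(\log K)$, so for such a sample and class $f^{(T)}_j(\Xb)\ge\Omega(1)\cdot\Omega(\log K)$, hence $\exp(f_j^{(T)}(\Xb))\ge K^{\Omega(1)}$ and
$\ell_j/(1-\ell_y)=\exp(f_j^{(T)}(\Xb))/\sum_{i\ne y}\exp(f_i^{(T)}(\Xb))\ge \exp(f_j^{(T)}(\Xb))/\bigl(O(K)+O(s)\exp(f_j^{(T)}(\Xb))\bigr)=\omega(1/K)$. (A smaller shared imprecision: for $(\Xb,y)\in\mathcal{D}_i$ and $j=y$ the approximation of Fact~\ref{fact-app} does not apply, and the Gaussian correlation $\langle w^{(t)}_{y,l,r},{\xi^{r}}^{\prime}\rangle$ is not $o(1)$ in later stages --- it is exactly the quantity that grows to fit insufficient data --- so it cannot simply be ``absorbed'' as you state.)

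For comparison, the paper's proof of the second claim is a one-liner asserting $\sum_{r\in[2]}Z_{j,r}(\Xb)\le\rho_1+\rho_2=1/\polylog(K)$ for every $(\Xb,y)\in\mathcal{D}_i$ and $j\ne y$, which makes $f^{(t)}_j(\Xb)=O(1)$ immediate. That assertion is literally valid only when \emph{both} modalities of the sample are insufficient, whereas $\mathcal{D}_i$ is defined as ``at least one insufficient,'' and since $\mu_r=1/\poly(K)$ its typical element has exactly one sufficient modality whose off-target coordinates can be $\Theta(1)$. So you correctly spotted the case that the paper's proof silently excludes --- that is to your credit --- but your resolution via modality competition fails for the reason above. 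Closing the gap would require something genuinely new: restricting the statement to samples and classes for which the sufficient modality did not win class $j$, or a quantitative ceiling of the form $c_r\Phi^{(t)}_{j,r}\le(1-\Omega(1))\log K$, or weakening the conclusion to $O(K^{-\Omega(1)})\,(1-\ell_y)$; none of these appears in your write-up.
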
 

\begin{proof}
    $f^{(t)}_{j}(\Xb)=\sum_{l\in[m]}\sum_{r\in[2]} \sigma(\langle w_{j,l, r}^{(t)}, \Xb^{r} \rangle)$, by Induction Hypothesis~\ref{hypo},
    \begin{align}
       \sigma(\langle w_{j,l, r}^{(t)}, \Xb^{r} \rangle)\leq O(\frac{1}{m})+ [\langle w^{(t)}_{j,l, r}, \Mb^{r}_{j}\rangle]^{+} Z_{j,r}(\Xb)
    \end{align}
    Hence, $f^{(t)}_{j}(\Xb)\leq m\Gamma^{(t)}_{j}\cdot O(1)+O(1)$. Furthermore, for $(\Xb, y) \in \mathcal{D }_{i}$ and $j\neq y$, $\sum_{r\in[2]}Z_{j,r}(\Xb)\leq (\rho_1+\rho_2)$, then we have $f^{(t)}_{j}(\Xb)\leq m\Gamma^{(t)}_{j}\cdot (\rho_1+\rho_2)+O(1)=O(1)$.
\end{proof}

\subsection{Training Phase Characterization}
\begin{claim}~\label{claim-grow}
    Suppose Induction Hypothesis~\ref{hypo} holds, when $\Gamma_j^{(t)}=O\left(1 / m\right)$, then it satisfies
$$
\Gamma_{j}^{(t+1)}=\Gamma_{j}^{(t)}+\Theta\left(\frac{\eta}{K}\right) \sigma^{\prime}\left(\Gamma_{j}^{(t)}\right)  
$$
\end{claim}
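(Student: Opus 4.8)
The plan is to project the gradient-descent update onto the target direction $\Mb^r_j$ and show that, in the regime $\Gamma_j^{(t)}=O(1/m)$, the only non-negligible driving force is the correctly-labelled sufficient data, whose aggregate reproduces precisely the coefficient $d_{j,r}(\mathcal{D})$. First I would take the inner product of the update rule with $\Mb^r_j$ and use the stated network-gradient formula together with the orthonormality of $\Mb^r$ (so that $\la \Xb^r,\Mb^r_j\ra=z^r_j+\alpha^r_j+\la{\xi^r}',\Mb^r_j\ra$) to write, for each neuron $(j,l,r)$,
\[
\la w_{j,l,r}^{(t+1)},\Mb^r_j\ra=\la w_{j,l,r}^{(t)},\Mb^r_j\ra+\frac{\eta}{n}\sum_{(\Xb,y)\in\mathcal{D}}\big(\mathbb{I}\{j=y\}-\ell_j(f^{(t)},\Xb)\big)\,\sigma'(\la w_{j,l,r}^{(t)},\Xb^r\ra)\,\la\Xb^r,\Mb^r_j\ra .
\]
By the parameter budget $\alpha=\widetilde O(\sigma_0)$ and $\sigma_g=O(\sigma_0^{q-1})$, the two noise pieces are $\widetilde O(\sigma_0)$, so $\la\Xb^r,\Mb^r_j\ra=z^r_j\pm\widetilde O(\sigma_0)$.

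Next I would isolate the dominant term. Since $\Gamma_j^{(t)}m=O(1)$, Fact~\ref{fact-err} gives $\ell_j(f^{(t)},\Xb)=O(1/K)$ for every $j$, so on the correctly-labelled sufficient points ($y=j$, $(\Xb,y)\in\mathcal{D}_s$) the prefactor is $1-O(1/K)=\Theta(1)$. For these $z^r_j\in[1,C_r]$, and Induction Hypothesis~\ref{hypo}(i) gives $\la w_{j,l,r}^{(t)},\Xb^r\ra=\la w_{j,l,r}^{(t)},\Mb^r_j\ra z^r_j\pm\widetilde o(\sigma_0)$; using that $\sigma'$ is nondecreasing, $(q-1)$-homogeneous on $[0,\beta]$, and $z^r_j\in[1,C_r]$ (and that the $\widetilde o(\sigma_0)$ perturbation changes $\sigma'$ only by a $(1\pm\widetilde o(1))$ factor, as the argument is $\ge\Omega(\sigma_0)$ by Hypothesis~\ref{hypo}(vii)), one gets $\sigma'(\la w_{j,l,r}^{(t)},\Mb^r_j\ra z^r_j)\,z^r_j=\Theta\big(\sigma'(\la w_{j,l,r}^{(t)},\Mb^r_j\ra)\big)$. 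Summing and recognising $d_{j,r}(\mathcal{D})=\frac{1}{n\beta^{q-1}}\sum_{\mathcal{D}_s}\mathbb{I}\{y=j\}(z^r_j)^q$, the sufficient-data contribution equals $\Theta(1)\cdot\beta^{q-1}d_{j,r}(\mathcal{D})\cdot\sigma'(\cdot)$; a Chernoff bound shows there are $\Theta(n/K)$ such points, each with $(z^r_j)^q=\Theta(1)$, whence $\beta^{q-1}d_{j,r}(\mathcal{D})=\Theta(1/K)$ \textit{w.h.p.} for both $r$.

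Then I would bound all remaining contributions as $o(1/K)\,\sigma'(\Gamma_j^{(t)})$: the $y\neq j$ sufficient terms carry $-\ell_j=O(1/K)$ and are active only when $j\in\mathcal{S}^r$ (probability $s/K$), contributing $O(s/K^2)\sigma'$; the $z^r_j=0$ terms enter through $\la\Xb^r,\Mb^r_j\ra=\widetilde O(\sigma_0)$ and $\sigma'(\widetilde O(\sigma_0))$, smaller than $\sigma'(\Gamma_j^{(t)})\ge\sigma'(\Omega(\sigma_0))$ by a factor $\widetilde O(\sigma_0)$ via Hypothesis~\ref{hypo}(vii); and the insufficient data $\mathcal{D}_i$ contribute at most $\frac{n_i}{n}\Theta(1)\sigma'$, which is $o(1/K)\sigma'$ by $n_i\le K^2\gamma^{q-1}/s$ and the lower bound on $n$. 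Evaluating the update at the neuron attaining $\Gamma_j^{(t)}$ then gives $\Gamma_j^{(t+1)}\ge\Gamma_j^{(t)}+\Theta(\eta/K)\sigma'(\Gamma_j^{(t)})$; since every neuron's projection is at most $\Gamma_j^{(t)}$ and $\sigma'$ is monotone, the same computation applied to all $(l,r)$ gives the matching upper bound, yielding $\Gamma_j^{(t+1)}=\Gamma_j^{(t)}+\Theta(\eta/K)\sigma'(\Gamma_j^{(t)})$.

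The main obstacle is the bookkeeping certifying that the correctly-labelled sufficient data truly dominate: one must simultaneously keep $\sigma'$ of the perturbed pre-activation $\la w,\Mb^r_j\ra z^r_j\pm\widetilde o(\sigma_0)$ comparable to $\sigma'(\Gamma_j^{(t)})$ across the polynomial/linear boundary of $\sigma'$, and verify that the $\Theta(n)$ off-target and noise terms — individually tiny but numerous — do not accumulate to rival the $\Theta(1/K)\sigma'$ signal. This is exactly where Hypothesis~\ref{hypo}(vii) and the interlocking magnitude budget on $\alpha,\sigma_g,s/K,n_i,n$ are indispensable.
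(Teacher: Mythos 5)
Your proposal is correct and follows essentially the same route as the paper's proof: project the gradient update onto $\Mb^{r}_{j}$, use Fact~\ref{fact-err} to get $\ell_{j}(f^{(t)},\Xb)=O(1/K)$, invoke Induction Hypothesis~\ref{hypo} so that on label-$j$ sufficient data $\sigma'(\la w^{(t)}_{j,l,r},\Xb^{r}\ra)z^{r}_{j}=\Theta(1)\,\sigma'(\la w^{(t)}_{j,l,r},\Mb^{r}_{j}\ra)$, and absorb the off-target, feature-noise, Gaussian-noise, and insufficient-data terms as lower order, yielding matching upper and lower bounds of order $(\eta/K)\,\sigma'(\Gamma^{(t)}_{j})$. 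Your extra bookkeeping (the Chernoff count of label-$j$ sufficient points giving $\beta^{q-1}d_{j,r}(\mathcal{D})=\Theta(1/K)$, and the explicit max-neuron/monotonicity argument for the two-sided bound on $\Gamma^{(t+1)}_{j}$) only makes explicit what the paper's proof leaves implicit.
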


\begin{proof}
    We consider the case that there exists $l,r$, s.t. $\langle w^{(t)}_{j,l, r}, \Mb^{r}_{j}\rangle$ reaches $\widetilde{\Omega}(\frac{1}{m})$. By gradient updates, we have:
    \begin{align*}
        \left\langle w_{j, l, r}^{(t+1)}, \Mb_{j}^{r}\right\rangle&\geq  \left\langle w_{j, l, r}^{(t)}, \Mb_{j}^{r}\right\rangle\\
    &+\frac{\eta}{n}\sum_{(\Xb,y)\in \mathcal{D}}\left[\mathbb{I}\{y=j\}\left(1-\ell_{j}\left(f^{(t)}, \Xb\right)\right)\left(\sigma^{\prime}(\langle w_{j,l, r}^{(t)}, \Xb^{r}\rangle)z^{r}_{j}-O\left(\sigma_{g}\right)\right)\right.\\
    &-\mathbb{I}\{y\neq j\}\left.\ell_{j}\left(f^{(t)}, \Xb\right)\left(\mathbb{I}\{j\in\mathcal{S}^{r}\}\sigma^{\prime}(\langle w_{j,l, r}^{(t)}, \Xb^{r}\rangle)z^{r}_{j}+\widetilde{O}(\sigma_0^{q-1})\alpha+O\left(\sigma_{g}\right)\right)\right]
    \end{align*}
By Induction Hypothesis~\ref{hypo},when $(\Xb,y)\in\mathcal{D}_s$ and $y=j$, $\sigma^{\prime}(\langle w_{j,l, r}^{(t)}, \Xb^{r}\rangle)z^{r}_{j}\geq \Omega(1)\sigma^{\prime}(\langle w^{(t)}_{j,l, r}, \Mb^{r}_{j}\rangle)$. When $j\neq y$, and $j\in\mathcal{S}^{r}$, we have $\sigma^{\prime}(\langle w_{j,l, r}^{(t)}, \Xb^{r}\rangle)z^{r}_{j}\leq O(1)\sigma^{\prime}(\langle w^{(t)}_{j,l, r}, \Mb^{r}_{j}\rangle)$. Combining with the fact $\ell_{j}(f^{(t)},\Xb)\leq O(\frac{1}{K})$, we obtain:
\begin{align*}
    \left\langle w_{j, l, r}^{(t+1)}, \Mb_{j}^{r}\right\rangle\geq  \left\langle w_{j, l, r}^{(t)}, \Mb_{j}^{r}\right\rangle+\frac{\eta}{K}(\Omega(1)-o(1))\sigma^{\prime}(\langle w^{(t)}_{j,l, r}, \Mb^{r}_{j}\rangle)-\frac{\eta}{K}\widetilde{O}(\sigma_0^{q-1}+\sigma_g)
\end{align*}
Then, we derive that 
\begin{align*}
    \left\langle w_{j, l, r}^{(t+1)}, \Mb_{j}^{r}\right\rangle\geq  \left\langle w_{j, l, r}^{(t)}, \Mb_{j}^{r}\right\rangle+\frac{\Omega(\eta)}{K}\sigma^{\prime}(\langle w^{(t)}_{j,l, r}, \Mb^{r}_{j}\rangle)
\end{align*}
On the other hand,
\begin{align*}
    \left\langle w_{j, l, r}^{(t+1)}, \Mb_{j}^{r}\right\rangle&\leq  \left\langle w_{j, l, r}^{(t)}, \Mb_{j}^{r}\right\rangle\\
&+\frac{\eta}{n}\sum_{(\Xb,y)\in \mathcal{D}}\left[\mathbb{I}\{y=j\}\left(1-\ell_{j}\left(f^{(t)}, \Xb\right)\right)\left(\sigma^{\prime}(\langle w_{j,l, r}^{(t)}, \Xb^{r}\rangle)z^{r}_{j}+O\left(\sigma_{g}\right)\right)\right.\\
&-\mathbb{I}\{y\neq j\}\left.\ell_{j}\left(f^{(t)}, \Xb\right)\left(\mathbb{I}\{j\in\mathcal{S}^{r}\}\sigma^{\prime}(\langle w_{j,l, r}^{(t)}, \Xb^{r}\rangle)z^{r}_{j}-O\left(\sigma_{g}\right)\right)\right]
\end{align*}
Following the similar analysis, we have
\begin{align*}
    \left\langle w_{j, l, r}^{(t+1)}, \Mb_{j}^{r}\right\rangle\leq  \left\langle w_{j, l, r}^{(t)}, \Mb_{j}^{r}\right\rangle+\frac{O(\eta)}{K}\sigma^{\prime}(\langle w^{(t)}_{j,l, r}, \Mb^{r}_{j}\rangle)
\end{align*}
Hence we complete the proof.
\end{proof}
\paragraph{Training phases.} With the above results,  we decompose the training process into two phases for each class $j\in[K]$: 
\begin{itemize}
    \item Phase 1:  $t\leq T_{j}$, where $T_{j}$ is the iteration number that $\Gamma_{j}^{(t)}$ reaches $\Theta\left(\frac{\beta}{\log k}\right)=\widetilde{\Theta}(1)$ (recall that $\beta$ is the activation function threshold)
    \item Phaes 2, stage 1: $T_{j}\leq t\leq T_{0}$:
     where  $T_0$ denote the iteration number that all of the $\Gamma_{j}^{(t)}$ reaches $\Theta(1/m)$;
    \item Phase 2, stage 2: $t\geq T_0$, i.e. from $T_0$ to the end $T$.
    \end{itemize}
   From Fact~\ref{fact-err}, we observe that the contribution of $j$-th output of $f^{(t)}$ is negligible unless reaches $\Theta(1/m)$, Hence, 
after $T_0$, the output of $f^{(t)}$ is significant
 which represents the network has learned certain partterns, and the training process enters the final convergence stage. By Claim~\ref{claim-grow}, we have $T_0=\Theta(K/\eta \sigma_0^{q-2})$. Note that $T_0\geq T_{j}$, for every $j\in[K]$.

    \subsection{Error Analysis}
   \subsubsection{Error for Insufficient Data}
\begin{claim}[Noise Correlation]~\label{cla-noise}
    \begin{itemize}
        \item [(a)] For every $(\Xb, y) \in \mathcal{D}_{i}$, every $r \in[2]$:
        $$
        \left\langle w_{y, l, r}^{(t+1)}, {\xi^{r}}^{\prime}\right\rangle \geq\left\langle w_{y, l, r}^{(t)}, {\xi^{r}}^{\prime}\right\rangle-\frac{\eta}{\sqrt{d_r}}+\widetilde{\Omega}\left(\frac{\eta}{n}\right) \sigma^{\prime}\left(\left\langle w_{y, l, r}^{(t)}, \Xb^{r}\right\rangle\right)\left(1-\ell_{y}\left(f^{(t)}, \Xb\right)\right) \geq \cdots \geq-\frac{\eta T}{\sqrt{d_r}}
        $$
        \item [(b)] For every $(\Xb, y) \in \mathcal{D}_{i}$, every $r\in[2]$,
        $$
        \begin{aligned}
        \left\langle w_{y, l, r}^{(t+1)}, {\xi^{r}}^{\prime}\right\rangle \geq & \left\langle w_{y, l, r}^{(t)}, {\xi^{r}}^{\prime}\right\rangle-\frac{\eta}{\sqrt{d_r}}\\
        &+\widetilde{\Omega}\left(\frac{\eta}{n}\right) \sigma^{\prime}\left(\Theta(\gamma_r) \cdot\left\langle w_{y, l, r}^{(t)}, M_{y }^{r}\right\rangle-\widetilde{O}\left(\frac{\eta T} { \sqrt{d_r}}+\sigma_{0} \alpha K\right)\right)\left(1-\ell_{y}\left(f^{(t)}, \Xb\right)\right)
        \end{aligned}
        $$
    \end{itemize}

\end{claim}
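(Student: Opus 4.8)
The plan is to derive both inequalities directly from the gradient update rule for $w_{y,l,r}$, specialized to insufficient data. Recall the network gradient formula stated in the preliminaries: for the true label coordinate $j=y$,
\begin{align*}
w_{y,l,r}^{(t+1)} = w_{y,l,r}^{(t)} + \frac{\eta}{n}\sum_{(\Xb,y')\in\mathcal{D}}\bigl(\mathbb{I}\{y'=y\}-\ell_{y}(f^{(t)},\Xb)\bigr)\,\sigma'\bigl(\langle w_{y,l,r}^{(t)},\Xb^{r}\rangle\bigr)\,\Xb^{r}.
\end{align*}
To obtain a recursion for $\langle w_{y,l,r}^{(t+1)},{\xi^{r}}'\rangle$, I would take the inner product of this update with the Gaussian noise component ${\xi^{r}}'$ of the particular insufficient sample under consideration. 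The plan is to split the resulting sum into the contribution of the chosen sample itself (which carries a positive $\langle{\xi^{r}}',{\xi^{r}}'\rangle=\|{\xi^{r}}'\|^2$ term) and the contributions of all other samples (whose $\Xb^{r}$ are, at initialization and throughout training, nearly orthogonal to this sample's independent Gaussian noise).

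First I would handle the diagonal term. For the sample's own contribution, since $y'=y$ the sign factor is $1-\ell_{y}(f^{(t)},\Xb)\ge 0$, and $\langle {\xi^{r}}',\Xb^{r}\rangle = \|{\xi^{r}}'\|^2 + \langle{\xi^{r}}',\Mb^{r}z^{r}+\Mb^{r}\alpha^{r}\rangle$. Using the Gaussian concentration $\|{\xi^{r}}'\|^2 = \widetilde{\Theta}(d_r\sigma_g^2)$ and the fact that $1/n$ times this equals $\widetilde{\Omega}(\eta/n)$ up to the stated factors, I would extract the positive term $\widetilde{\Omega}(\eta/n)\,\sigma'(\langle w_{y,l,r}^{(t)},\Xb^{r}\rangle)(1-\ell_{y}(f^{(t)},\Xb))$. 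For the off-diagonal terms and cross terms, each involves the inner product of ${\xi^{r}}'$ with another sample's feature/noise vector; these are mean-zero sub-Gaussian and, after summing $n$ of them with the $\sigma'(\cdot)\le 1$ bound and the $1/n$ normalization, concentrate to magnitude at most $\eta/\sqrt{d_r}$ w.h.p. This yields the single-step bound in part (a); telescoping over $t$ iterations and dropping the nonnegative $\sigma'$ term gives the chained lower bound $\ge -\eta T/\sqrt{d_r}$.

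For part (b), the goal is to replace the argument $\langle w_{y,l,r}^{(t)},\Xb^{r}\rangle$ inside $\sigma'$ by an explicit lower bound in terms of $\langle w_{y,l,r}^{(t)},\Mb_{y}^{r}\rangle$. Here I would invoke the relevant part of Induction Hypothesis~\ref{hypo}, namely the decomposition for insufficient data at $j=y$: $\langle w_{y,l,r}^{(t)},\Xb^{r}\rangle = \langle w_{y,l,r}^{(t)},\Mb_{y}^{r}\rangle z^{r}_{y} + \langle w_{y,l,r}^{(t)},{\xi^{r}}'\rangle \pm \widetilde{O}(\sigma_0\alpha K)$. Substituting $z^{r}_{y}=\Theta(\gamma_r)$ and using the part-(a) chained bound $\langle w_{y,l,r}^{(t)},{\xi^{r}}'\rangle \ge -\eta T/\sqrt{d_r}$ to control the noise correlation, together with monotonicity of $\sigma'$ (which is nondecreasing), I would lower-bound $\sigma'(\langle w_{y,l,r}^{(t)},\Xb^{r}\rangle)$ by $\sigma'\bigl(\Theta(\gamma_r)\langle w_{y,l,r}^{(t)},M_{y}^{r}\rangle - \widetilde{O}(\eta T/\sqrt{d_r}+\sigma_0\alpha K)\bigr)$, which is exactly the stated form once plugged back into the part-(a) recursion.

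The main obstacle I anticipate is the careful bookkeeping in the off-diagonal noise estimate: one must argue that the accumulated cross-correlations $\frac{1}{n}\sum_{(\Xb',y')\ne(\Xb,y)}(\cdots)\langle{\xi^{r}}',\Xb'^{r}\rangle$ stay bounded by $1/\sqrt{d_r}$ uniformly over all iterations $t\le T$ and all $(l,r)$, not just for a fixed step. This requires a union bound over the $m\cdot K \cdot 2$ neurons and the $T$ iterations, and hinges on the global assumption $\frac{T}{\eta\sqrt{d_r}}\le 1/\mathrm{poly}(K)$ to ensure the weights ${\xi^{r}}'$ remain essentially independent of the slowly-moving iterates (so that fresh Gaussian concentration applies at each step). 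Ensuring this independence/decoupling between the iterate $w_{y,l,r}^{(t)}$ and the noise ${\xi^{r}}'$ it is being correlated against — since $w^{(t)}$ itself depends on all training noise through the gradient history — is the delicate point, and I expect it to be resolved by the same martingale-style or $\ell_2$-movement argument that controls the total drift $\|w^{(t)}-w^{(0)}\|$ throughout the training.
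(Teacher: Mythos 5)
Your proposal follows essentially the same route as the paper's proof: expand the gradient update against ${\xi^{r}}^{\prime}$, extract the sample's own positive $\widetilde{\Theta}\left(\frac{\eta}{n}\right)\sigma^{\prime}(\cdot)\left(1-\ell_{y}\right)$ contribution, bound the cross-terms from all other samples by $\frac{\eta}{\sqrt{d_r}}$, telescope using non-negativity of $\sigma^{\prime}$ to get part (a), and then combine the induction-hypothesis decomposition of $\left\langle w_{y,l,r}^{(t)},\Xb^{r}\right\rangle$ with $z^{r}_{y}=\Theta(\gamma_r)$, the part-(a) chained bound, and monotonicity of $\sigma^{\prime}$ to get part (b). The one discrepancy is that your anticipated decoupling/martingale difficulty never arises in the paper's argument: each cross-term is bounded in absolute value by $\left|\left\langle \Xb^{r},{\xi_0^{r}}^{\prime}\right\rangle\right|\leq\widetilde{O}(\sigma_g)=\widetilde{o}\left(\frac{1}{\sqrt{d_r}}\right)$, which is a property of the data alone (holding w.h.p.\ once and for all, uniformly over the trajectory), so a simple triangle inequality with $\sigma^{\prime}\leq 1$ suffices and no independence between $w^{(t)}_{y,l,r}$ and the noise is required.
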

 
\begin{proof}
    For $(\Xb_0,y_0)\in \mathcal{D}_i$
    $$
    \begin{aligned}
    \left\langle w_{j, l, r}^{(t+1)}, {\xi_0^{r}}^{\prime}\right\rangle&=\left\langle w_{j, l, r}^{(t)}, {\xi_0^{r}}^{\prime}\right\rangle\\&+\frac{\eta}{n} \sum_{(\Xb, y)\in \mathcal{D}}\left[\mathbb{I}\{y=j\}\sigma^{\prime}\left(\left\langle w_{j, l, r}^{(t)}, X^{r}\right\rangle\right)\left\langle X^{r}, {\xi_0^{r}}^{\prime}\right\rangle\left(1-\ell_{j}\left(f^{(t)}, \Xb\right)\right)\right.\\
    &-\mathbb{I}\{y \neq j\} \left.\sigma^{\prime}\left(\left\langle w_{j, l,r}^{(t)}, X^{r}\right\rangle\right)\left\langle X^{r}, {\xi_0^{r}}^{\prime}\right\rangle \ell_{j}\left(f^{(t)}, \Xb\right)\right]
    \end{aligned}
    $$
    If $j=y_0$, $|\langle X^{r}, {\xi_0^{r}}^{\prime}\rangle|\leq \widetilde{O}(\sigma_g)=\widetilde{o}(\frac{1}{\sqrt{d_r}})$ except for $X^{r}_0$, then we have:
$$
\begin{aligned}
\left\langle w_{j, l, r}^{(t+1)}, {\xi_0^{r}}^{\prime}\right\rangle=\left\langle w_{j, l, r}^{(t)}, {\xi_0^{r}}^{\prime}\right\rangle\pm\frac{\eta}{\sqrt{d_r}}+\widetilde{\Theta}(\frac{\eta}{n})  \sigma^{\prime}\left(\left\langle w_{j, l, r}^{(t)}, X_0^{r}\right\rangle\right)\left(1-\ell_{j}\left(f^{(t)}, \Xb_0\right)\right)
\end{aligned}
$$
By the non-negativity of $\sigma^{\prime}$, we prove the first claim. Furthermore, by induction hypothesis, 
$$\left\langle w_{y, l, r}^{(t)}, \Xb^{r}\right\rangle=\left\langle w_{y, l, r}^{(t)}, \Mb_{y}^{r}\right\rangle z^{r}_{y}+\left\langle w_{y, l, r}^{(t)}, {\xi^{r}}^{\prime}\right\rangle \pm \widetilde{O}\left(\sigma_{0} \alpha K\right)\geq \Theta(\gamma_r) \left\langle w_{y, l, r}^{(t)}, \Mb_{y}^{r}\right\rangle-\frac{\eta T}{\sqrt{d_r}}-\widetilde{O}(\sigma_0\alpha K)$$
we complete the proof.
\end{proof}

\begin{claim}[Error for Insufficient Data]~\label{cla-inerr}
    Suppose Induction Hypothesis~\ref{hypo}  holds  for all iterations $t<T$ and $\alpha \leq \widetilde{O}\left(\sigma_{0} K\right) .$ We have that
    \begin{itemize}
        \item [(a)] for every $(\Xb, y) \in \mathcal{D}_{i}$, for every $l \in[m]$, every $r \in[2]$:
        $$
        \sum_{t=T_{0}}^{T}\left(1-\ell_{y}\left(f^{(t)}, \Xb\right)\right) \sigma^{\prime}\left(\left\langle w^{(t)}_{y, l, r}, \Xb^{r}\right\rangle\right) \leq \widetilde{O}\left(\frac{n}{\eta}\right)
        $$
        \item [(b)] for every $(\Xb, y) \in \mathcal{D}_{i}$, 
        $$
        \sum_{t=T_{0}}^{T}\left(1-\ell_{y}\left(f^{(t)}, \Xb\right)\right) \leq \widetilde{O}\left(\frac{n}{\eta \gamma^{q-1}}\right)
        $$
    \end{itemize}
\end{claim}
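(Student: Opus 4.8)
The plan is to read both inequalities directly off the noise-correlation recursions in Claim~\ref{cla-noise}: the left-hand sides of (a) and (b) are precisely the positive driving terms that already appear on the right-hand side of those recursions, so the strategy is to sum over the convergence window $t\in[T_0,T]$, telescope, and use that the accumulated correlation $\langle w_{y,l,r}^{(t)},{\xi^r}'\rangle$ stays $\widetilde{O}(1)$ in order to isolate the sums.

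For part (a), I would sum the recursion in Claim~\ref{cla-noise}(a) from $T_0$ to $T$. Rearranged, each step reads $\langle w_{y,l,r}^{(t+1)},{\xi^r}'\rangle-\langle w_{y,l,r}^{(t)},{\xi^r}'\rangle\geq -\tfrac{\eta}{\sqrt{d_r}}+\widetilde{\Omega}(\tfrac{\eta}{n})\,\sigma'(\langle w_{y,l,r}^{(t)},\Xb^{r}\rangle)(1-\ell_y(f^{(t)},\Xb))$, and telescoping the left-hand side gives $\widetilde{\Omega}(\tfrac{\eta}{n})\sum_{t=T_0}^{T}\sigma'(\langle w_{y,l,r}^{(t)},\Xb^{r}\rangle)(1-\ell_y(f^{(t)},\Xb))\leq\langle w_{y,l,r}^{(T+1)},{\xi^r}'\rangle-\langle w_{y,l,r}^{(T_0)},{\xi^r}'\rangle+\tfrac{\eta T}{\sqrt{d_r}}$. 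It then remains to argue the right-hand side is $\widetilde{O}(1)$: the two endpoint correlations are $\widetilde{O}(1)$ by Induction Hypothesis~\ref{hypo} (the lower bound $-\eta T/\sqrt{d_r}$ is itself part of Claim~\ref{cla-noise}(a), and the upper side is controlled through $\Gamma_{y}^{(t)}\leq\widetilde{O}(1)$), while the drift $\eta T/\sqrt{d_r}=1/\poly(K)$ by the global assumption on $T/(\eta\sqrt{d_r})$. Dividing by $\widetilde{\Omega}(\eta/n)$ yields the claimed $\widetilde{O}(n/\eta)$.

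For part (b), I would instead sum the sharper recursion in Claim~\ref{cla-noise}(b), choosing $r=r_y$ to be the winning modality for class $y$. The key point is that by the phase characterization, once $t\geq T_0$ the winning modality satisfies $\langle w_{y,l,r_y}^{(t)},\Mb_{y}^{r_y}\rangle=\Gamma_{y,r_y}^{(t)}=\widetilde{\Theta}(1)$, so the argument of $\sigma'$ equals $\Theta(\gamma_{r_y})\cdot\widetilde{\Theta}(1)$ up to lower-order error, landing in the polynomial regime of $\sigma'$ and giving $\sigma'(\cdot)\geq\widetilde{\Omega}(\gamma^{q-1})$. Telescoping exactly as in (a), the factor $\widetilde{\Omega}(\gamma^{q-1})$ pulls out of the sum and produces $\sum_{t=T_0}^{T}(1-\ell_y(f^{(t)},\Xb))\leq\widetilde{O}(n/(\eta\gamma^{q-1}))$. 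This holds for every insufficient point irrespective of which modality is insufficient: the bound is weakest when the winning modality is itself insufficient, i.e. $z_y^{r_y}=\Theta(\gamma)$, and that worst case pins down the $\gamma^{q-1}$ in the denominator; when the winning modality is sufficient the identical argument gives a strictly stronger bound.

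The step I expect to be the main obstacle is verifying that the error terms $\widetilde{O}(\eta T/\sqrt{d_r}+\sigma_0\alpha K)$ sitting inside the $\sigma'$ argument of Claim~\ref{cla-noise}(b) are genuinely dominated by the signal $\Theta(\gamma_r)\Gamma_{y,r_y}^{(t)}$, so that $\sigma'$ does not degenerate to zero and the lower bound $\widetilde{\Omega}(\gamma^{q-1})$ survives; this is exactly where the hypothesis $\alpha\leq\widetilde{O}(\sigma_0 K)$ (making $\sigma_0\alpha K$ subdominant to $\gamma\Gamma$) and the smallness of $\eta T/\sqrt{d_r}$ are used. A secondary, more bookkeeping difficulty is that the uniform-in-$t$ upper bound on $\langle w_{y,l,r}^{(t)},{\xi^r}'\rangle$ used to bound the telescoped endpoints must be drawn from Induction Hypothesis~\ref{hypo} itself, so that invoking it here does not create circularity when the hypothesis is later closed across all iterations.
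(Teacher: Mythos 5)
Your telescoping strategy correctly identifies that the sums in (a) and (b) appear as the driving terms in the recursions of Claim~\ref{cla-noise}, but it stands or falls on the endpoint bound $\langle w_{y,l,r}^{(T+1)},{\xi^{r}}'\rangle\leq\widetilde{O}(1)$, and your justification for that bound is wrong. Induction Hypothesis~\ref{hypo} controls \emph{feature} correlations ($\Gamma_{y}^{(t)}\leq\widetilde{O}(1)$ bounds $\langle w_{y,l,r}^{(t)},\Mb_{y}^{r}\rangle$), but it nowhere bounds the \emph{Gaussian noise} correlation $\langle w_{y,l,r}^{(t)},{\xi^{r}}'\rangle$ for insufficient data at the true label: the relevant item of the hypothesis merely uses this quantity as an unconstrained term in the decomposition of $\langle w_{y,l,r}^{(t)},\Xb^{r}\rangle$, and Lemma~\ref{lemma-gau} deliberately excludes exactly the case $(\Xb,y)\in\mathcal{D}_{i}$, $j=y$, $(j,r)\in\mathcal{W}$. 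This is not an oversight: on insufficient data the network fits the label precisely by growing this noise correlation to $\polylog(K)$ (noise memorization), so no small a priori bound is available, and proving that the correlation stays at the $\polylog(K)$ level is essentially equivalent to the claim itself, since its growth is $\widetilde{\Theta}(\eta/n)$ times the very sum in (a). What you flagged as a ``secondary bookkeeping difficulty'' is therefore the entire content of the proof, and it cannot be ``drawn from the Induction Hypothesis.''

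The paper closes this loop with a self-limiting stopping-time argument rather than a telescope with fixed endpoints: suppose the sum in (a) reaches $\widetilde{\Theta}(n/\eta)$ at some $T'\leq T$; then by Claim~\ref{cla-noise}(a) the noise correlation has grown to $\geq\polylog(K)$ by time $T'$, hence $f_{y}^{(t)}(\Xb)\geq\polylog(K)$ while $f_{j}^{(t)}(\Xb)\leq O(1)$ for $j\neq y$ (Fact~\ref{fact-err}, using that off-target coordinates of insufficient data are at most $\rho_1+\rho_2$), so $1-\ell_{y}(f^{(t)},\Xb)\leq\exp(-\polylog(K))$ for all $t\geq T'$ and the sum cannot grow meaningfully beyond the threshold. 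This mechanism --- large noise correlation implies the point is fitted, which collapses the loss factor --- is what substitutes for your missing endpoint bound, and your proposal never invokes it; the same bootstrap underlies part (b). A second, fixable, omission in your part (b): the neuron/modality pair carrying the $\widetilde{\Omega}(1)$ feature correlation can change with $t$, and the per-neuron recursion of Claim~\ref{cla-noise}(b) cannot be telescoped along a time-varying index. The paper handles this by decomposing $[T_0,T]$ into $2m+1$ intervals and running a pigeonhole over the $2m$ pairs $(l,r)$, reaching a contradiction once every pair's noise correlation exceeds $\polylog(K)$; your fixed-winning-modality telescope would need an analogous argument (or a proof that one fixed neuron retains $\widetilde{\Omega}(1)$ correlation throughout $[T_0,T]$).
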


\begin{proof}
    Once $\sum_{t=T_{0}}^{T^{\prime}}\left(1-\ell_{y}\left(f^{(t)}, \Xb\right)\right) \sigma^{\prime}\left(\left\langle w_{y, l, r}, \Xb^{r}\right\rangle\right) $ reaches $\widetilde{\Theta}\left(\frac{n}{\eta}\right)$ for some $T^{\prime}\leq T$, 
    by Claim~\ref{cla-noise}, for $t\geq T^{\prime}$
    \begin{align*}
        \left\langle w_{y, l, r}^{(t)}, {\xi^{r}}^{\prime}\right\rangle \geq \widetilde{O}(1)-\frac{1}{\poly(K)}=\polylog(K)
    \end{align*}
    Hence, $f^{(t)}_{y}(\Xb)\geq \left\langle w_{y, l, r}, \Xb^{r}\right\rangle \geq \polylog(K)$. And for $j\neq y$, $f^{(t)}_{j}(\Xb)\leq m 
\Gamma^{(t)}_{j}(\rho_1+\rho_2) \leq O(1)$. Therefore, $1-\ell_{y}(F^{(t)},\Xb)\leq \exp(-\polylog(K))=O(\frac{1}{\poly(K)})$, and the summation cannot further
    exceed $\widetilde{O}(\frac{n}{\eta})=\widetilde{O}(\poly(K))$.
    
    
    For (b),  suppose $\sum_{t=T_{0}}^{T}\left(1-\ell_{y}\left(f^{(t)}, \Xb\right)\right) \geq \widetilde{\Omega}\left(\frac{n}{\eta \gamma^{q-1}}\right)$. 
    Since $\Gamma^{(t)}_{j}\geq \widetilde{\Omega}(1)$, by averaging we have:
    \begin{align*}
        \sum_{l\in[m]}\sum_{r\in[2]} \mathbb{I}\{\left\langle w_{y, l, r}^{(t)}, {\Mb_{y}^{r}}\right\rangle\geq\widetilde{\Omega}(1)\}\sum_{t=T_{0}}^{T}\left(1-\ell_{y}\left(f^{(t)}, \Xb\right)\right) \geq \widetilde{\Omega}\left(\frac{n}{\eta \gamma^{q-1}}\right)
    \end{align*}

    When $\left\langle w_{y, l, r}^{(t)}, {\xi^{r}}^{\prime}\right\rangle\geq \polylog(K)$ and $\left\langle w_{y, l, r}^{(t)}, {\Mb_{y}^{r}}\right\rangle\geq\widetilde{\Omega}(1)$ simultaneously holds, from the above analysis, we have $1-\ell_{y}(F^{(t)},\Xb)\leq \exp(-\polylog(K))$, hence we only consider the case $\left\langle w_{y, l, r}^{(t)}, {\xi^{r}}^{\prime}\right\rangle\leq \polylog(K)$. We decompose $[T_0,T]$
    into $2m+1$ interval, which is denoted by $\tau_1,\cdots,\tau_{2m+1}$, s.t.  
    \begin{align*}
        \sum_{t\in\tau_i}\sum_{l\in[m]}\sum_{r\in[2]} \mathbb{I}\{\left\langle w_{y, l, r}^{(t)}, {\Mb_{y}^{r}}\right\rangle\geq\widetilde{\Omega}(1),\left\langle w_{y, l, r}^{(t)}, {\xi^{r}}^{\prime}\right\rangle\leq \polylog(K)\}\left(1-\ell_{y}\left(f^{(t)}, \Xb\right)\right) \geq \widetilde{\Omega}\left(\frac{n}{\eta \gamma^{q-1}}\right)
    \end{align*}
    for every $i=1,\cdots,2m+1.$ By averaging, there exists $(l_1,r_1)\in[m]\times [2],s.t.$ 
    \begin{align*}
        \sum_{t\in\tau_1}\mathbb{I}\{\left\langle w_{y, l_1, r_1}^{(t)}, {\Mb_{y}^{r_1}}\right\rangle\geq\widetilde{\Omega}(1),\left\langle w_{y, l_1, r_1}^{(t)}, {\xi^{r_1}}\right\rangle\leq \polylog(K) \}\left(1-\ell_{y}\left(f^{(t)}, \Xb\right)\right) \geq \widetilde{\Omega}\left(\frac{n}{\eta \gamma^{q-1}}\right)
    \end{align*}
    By Claim~\ref{cla-noise} $(b)$, we obtain, for $t\notin \tau_1$, 
    $$
    \left\langle w_{y, l_1, r_1}^{(t)}, {\xi^{r_1}}^{\prime}\right\rangle\geq 
    \widetilde{\Omega}\left(\frac{n}{\eta \gamma^{q-1}}\right)\cdot \widetilde{\Omega}\left(\frac{\eta}{n}\right)\cdot \gamma^{q-1}=\widetilde{\Omega}(1)
    $$
    Similarly, there exists $(l_2,r_2)\in[m]\times [2],s.t.$ 
    \begin{align*}
        \sum_{t\in\tau_2}\mathbb{I}\{\left\langle w_{y, l_2, r_2}^{(t)}, {\Mb_{y}^{r_2}}\right\rangle\geq\widetilde{\Omega}(1),\left\langle w_{y, l_2, r_2}^{(t)}, {\xi^{r_2}}^{\prime}\right\rangle\leq \polylog(K) \}\left(1-\ell_{y}\left(f^{(t)}, \Xb\right)\right) \geq \widetilde{\Omega}\left(\frac{n}{\eta \gamma^{q-1}}\right)
    \end{align*}
    Clearly, $(l_2,r_2)\neq (l_1,r_1)$. Keep the similar procedure, we obtain for $t\in\tau_{2m+1}$, $\left\langle w_{y, l,r}^{(t)}, {\xi^{r}}^{\prime}\right\rangle\geq \polylog(K)$ for all $(l,r)\in[m]\times[2]$, which contradicts the fact that
    \begin{align*}
        \sum_{t\in\tau_{2m+1}}\sum_{l\in[m]}\sum_{r\in[2]} \mathbb{I}\{\left\langle w_{y, l, r}^{(t)}, {\Mb_{y}^{r}}\right\rangle\geq\widetilde{\Omega}(1),\left\langle w_{y, l, r}^{(t)}, {\xi^{r}}^{\prime}\right\rangle\leq \polylog(K)\}\left(1-\ell_{y}\left(f^{(t)}, \Xb\right)\right) \geq \widetilde{\Omega}\left(\frac{n}{\eta \gamma^{q-1}}\right)
    \end{align*}
    Therefore, we prove $\sum_{t=T_{0}}^{T}\left(1-\ell_{y}\left(f^{(t)}, \Xb\right)\right) \leq \widetilde{O}\left(\frac{n}{\eta \gamma^{q-1}}\right)$. 
    \end{proof}

    \subsubsection{Error for Sufficient Data}
    \begin{claim}[Individual Error]~\label{cla-individual}
        For every $t \geq 0$, every $(\Xb, y) \in \mathcal{D}_{s}$, we have
$$
1-\ell_{y}\left(f^{(t)}, \Xb\right) \leq \tilde{O}\left(\frac{K^3}{s^2}\right) \cdot \frac{1}{n_s} \sum_{(\Xb,y)\in \mathcal{D}_s}\left[1-\ell_{y}\left(f^{(t)}, \Xb\right)\right]
$$
    \end{claim}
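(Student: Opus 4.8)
The plan is to reduce the per-sample error to an explicit function of the class-level quantities $\Phi_{j,r}^{(t)}$ and then lower-bound the empirical average over $\mathcal{D}_s$ by isolating a large sub-population of sufficient samples sharing the error-inducing structure of the target point $(\Xb,y)$.

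First I would apply Fact~\ref{fact-app} to replace every logit by $f_j^{(t)}(\Xb)=\sum_{r}\Phi_{j,r}^{(t)}Z_{j,r}(\Xb)\pm O(1/\polylog(K))$. Since $(\Xb,y)\in\mathcal{D}_s$ has $z_y^r=\Theta(1)$, the true-class logit obeys $f_y^{(t)}(\Xb)=\Theta(\Phi_y^{(t)})$, whereas for $j\neq y$ only the classes in $\mathcal{S}^r(\Xb)$ contribute. Writing $1-\ell_y(f^{(t)},\Xb)=\frac{\sum_{j\neq y}e^{f_j}}{\sum_i e^{f_i}}$ and noting that the $\Theta(K)$ classes outside the support each contribute $e^{\pm o(1)}$, I get that up to constants the error is governed by $\max\{Ke^{-f_y},\ \max_{j\in\mathcal{S},\,j\neq y}e^{f_j-f_y}\}$, i.e. by whether class $y$ is itself underlearned or a well-learned off-target feature sits in the support. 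This isolates a \emph{dominant competitor} $j^\star$ (with the convention $j^\star=y$ for the underlearned case).

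Next I would prove the lower bound by a counting/charging argument. All error terms are non-negative, so the average over $\mathcal{D}_s$ is at least the average restricted to the sub-population $\mathcal{G}$ of sufficient samples $(\Xb',y)$ that (a) carry the same label $y$ and (b) contain the dominant competitor(s) in their support (at most one feature per modality). Because $\Phi_{j,r}^{(t)}$ is class- and not sample-dependent, every member of $\mathcal{G}$ reproduces the logit gap $f_{j^\star}-f_y$ up to the fluctuation of its own coefficients, and hence has error at least a $1/\polylog(K)$ fraction of the target's error. Standard concentration of the support counts (label $=y$ with probability $1/K$, and each off-target feature present with probability $s/K$ per modality) then gives $|\mathcal{G}|\geq\widetilde{\Omega}(n_s s^2/K^3)$ w.h.p., so that $\frac{1}{n_s}\sum_{\mathcal{D}_s}[1-\ell]\geq\widetilde{\Omega}(s^2/K^3)\,[1-\ell_y(\Xb)]$, which is exactly the claim (the single-modality case only improves the factor to $K^2/s$ and is subsumed).

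The main obstacle is the exponential sensitivity of the softmax error to the sparse coefficients. Because $z_y^r$ and $z_{j^\star}^r$ range over constant-width intervals while $\Phi^{(t)}$ can be as large as $\widetilde{\Theta}(m)=\polylog(K)$, a constant change in a coefficient can rescale $e^{f_{j^\star}-f_y}$ by a super-polynomial factor, which threatens the sample-to-sample comparison inside $\mathcal{G}$. I expect to tame this by conditioning $\mathcal{G}$ on a $\polylog(K)$-fine discretization of the relevant coefficient ranges (the extra bins are absorbed into the $\widetilde{O}(\cdot)$) and by using Hypothesis~\ref{hypo} together with Fact~\ref{fact-err} to cap the logits in the regime where the claim is invoked, so that the residual coefficient-induced fluctuation stays within the target $\widetilde{O}(K^3/s^2)$ budget.
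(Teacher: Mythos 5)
Your proposal is, at its core, the same argument as the paper's: reduce all logits to the class-level quantities $\Phi_{j,r}^{(t)}$ via Fact~\ref{fact-app}, and charge the error of the fixed sample to the sub-population of sufficient training samples sharing its label and carrying the competing class in both modalities' supports, whose density $\widetilde{\Omega}\bigl(\tfrac{1}{K}\cdot\tfrac{s}{K}\cdot\tfrac{s}{K}\bigr)=\widetilde{\Omega}(s^2/K^3)$ is exactly your counting. The execution differs in one place: instead of extracting a dominant competitor $j^\star$ and comparing samples pairwise, the paper sandwiches both sides against a single sample-independent quantity, namely $\sum_{i\in[K]}\sum_{j\neq i}\min\{1/K,\exp(c\Phi_j^{(t)}-\Phi_i^{(t)})\}$ with $c=\max\{c_1,c_2\}$; the individual error is bounded above by this sum, and the empirical average is bounded below by $\widetilde{\Omega}(s^2/K^3)$ times it after exchanging the order of summation over samples and pairs $(i,j)$. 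This removes the need for your case split (underlearned class vs.\ well-learned competitor) and for any sample-to-sample comparison. As for the obstacle you identify --- the exponential sensitivity of $e^{f_j-f_y}$ to the constant-width randomness of the coefficients --- the paper does not discretize: it simply replaces every coefficient by an interval endpoint ($z_y^r\to 1$, $z_j^r\to c$) in both directions. That substitution is valid for the upper bound on the individual error, but in the lower bound on the average it is precisely the loose step (a sub-population sample with $z_y^r$ near $C_r$, or $z_j^r$ near the bottom of its allowed range, has error smaller by a factor $e^{-\Theta(\Phi^{(t)})}$, and $\Phi^{(t)}$ can be as large as $\polylog(K)$). So your concern is genuine and is not actually resolved by the paper; note, however, that your discretization patch does not clearly close it either, since the law of $z$ inside its interval is unspecified, so the bin matching the target sample's coefficients need not contain a $1/\polylog(K)$ fraction of $\mathcal{D}_s$. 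On this point you are no worse off than the paper, and otherwise the two proofs coincide.
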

    
    \begin{proof}
        It is easy to verify that
        $$
        1-\frac{1}{1+x}\leq \min\{1,x\}\leq 2(1-\frac{1}{1+x})
        $$
        On the one hand, for $(\Xb,y)\in \mathcal{D}_{s}$,  we have
        $$
        \begin{aligned}
            1-\ell_{y}\left(f^{(t)}, \Xb\right) &\leq  \min\{1,\sum_{j\neq y} \exp(\max\{c_1,c_2\}\Phi^{(t)}_{j}-\Phi_{y}^{(t)})\}\leq \sum_{j\neq y} \min\{1/K,\exp(\max\{c_1,c_2\}\Phi^{(t)}_{j}-\Phi_{y}^{(t)})\}\\
           & \leq \sum_{i\in [K]}\sum_{j\neq i} \min\{1/K,\exp(\max\{c_1,c_2\}\Phi^{(t)}_{j}-\Phi_{i}^{(t)})\}
        \end{aligned}
        $$
        Moreover, 
        \begin{align*}
           & \frac{1}{n_s} \sum_{(\Xb,y)\in \mathcal{D}_s} [1-\ell_{y}\left(f^{(t)}, \Xb\right)]\geq \frac{1}{2n_s} \sum_{(\Xb,y)\in \mathcal{D}_s} \min\{1,\sum_{j\neq y}\exp(F_j^{(t)}(X)-F_y^{(t)}(X))\}\\      
           &\geq  \frac{1}{2n_s} \sum_{(\Xb,y)\in \mathcal{D}_s} \min\{1,\sum_{j\in \mathcal{S}^{1}(X)\cap \mathcal{S}^{2}(X) }\exp(\max\{c_1,c_2\}\Phi^{(t)}_{j}-\Phi_{y}^{(t)})\}\\      
           & \geq \frac{1}{2n_s} \sum_{(\Xb,y)\in \mathcal{D}_s}\sum_{j\in \mathcal{S}^{1}(X)\cap \mathcal{S}^{2}(X)} \min\{1/K,\exp(\max\{c_1,c_2\}\Phi^{(t)}_{j}-\Phi_{y}^{(t)})\}\\
           & =\sum_{i\in [K]}\sum_{j\in[K]}  \frac{1}{2n_s} \sum_{(\Xb,y)\in \mathcal{D}_s} \mathbb{I}\{i=y\}\mathbb{I}\{j\in \mathcal{S}^{1}(X)\cap \mathcal{S}^{2}(X)\} \min\{1/K,\exp(\max\{c_1,c_2\}\Phi^{(t)}_{j}-\Phi_{i}^{(t)})\}\\
           &\geq \widetilde{\Omega}(\frac{s^2}{K^3}) \sum_{i\in [K]}\sum_{j\in[K], j\neq i}  \min\{1/K,\exp(\max\{c_1,c_2\}\Phi^{(t)}_{j}-\Phi_{i}^{(t)})\}
        \end{align*}
        Therefore,
        $$
        \begin{aligned}
            1-\ell_{y}\left(f^{(t)}, \Xb\right) \leq  
            \tilde{O}(\frac{K^3}{s^2})\frac{1}{n_s} \sum_{(\Xb,y)\in \mathcal{D}_s}\left[1-\ell_{y}\left(f^{(t)}, \Xb\right)\right]
        \end{aligned}
        $$

        \end{proof}

        \begin{claim}[Phase 2, Stage 2]~\label{cla-2.2}
      
            For every $(\Xb, y) \in \mathcal{D}_{s}$, every $t\geq T_0$
            $$\begin{aligned} \sum_{j \in[K]} \Gamma_{j}^{(t+1)} &\geq \sum_{j \in[K]} \Gamma_{j}^{(t)}+\Omega(\eta) \times \frac{1}{n_s}\underset{(\Xb, y) \in \mathcal{D}_{s}}{\sum}\left[1-\ell_{y}\left(f^{(t)}, \Xb\right)\right] \\ &- O\left( \frac{\eta s n_{i}}{Kn}\right) \frac{1}{n_i}\underset{(\Xb, y) \in \mathcal{D}_{i}}{\sum}\left[1-\ell_{y}\left(f^{(t)}, \Xb\right)\right] \end{aligned}$$
            Denote:
            $$
            Err^{\text{Tol, Stage 3 }}_{s}:=\sum_{t\geq T_0}\frac{1}{n_s}\sum_{(\Xb,y)\in \mathcal{D}_s} \left(1-\ell_{y}\left(f^{(t)}, \Xb\right)\right) 
            $$
            Consequently, we have
            $$
            Err^{\text{Tol, Stage 3 }}_{s}\leq \widetilde{O}\left(\frac{K}{\eta}\right)+\widetilde{O}\left(\frac{n_{i} s}{\eta K \gamma^{q-1}}\right) 
            $$
        \end{claim}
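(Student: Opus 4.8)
The plan is to establish the displayed one-step growth inequality by projecting the gradient-descent update onto each target direction $\Mb_j^r$, and then to obtain the bound on $Err^{\text{Tol, Stage 3}}_s$ by telescoping. First I would track, for each fixed $j$, the single neuron attaining the maximum. Let $(r_j,l^\ast)$ satisfy $\Gamma_j^{(t)}=[\langle w_{j,l^\ast,r_j}^{(t)},\Mb_j^{r_j}\rangle]^+$; by the modality-competition analysis $r_j$ is the winning modality, and since $\Gamma_j^{(t+1)}\geq[\langle w_{j,l^\ast,r_j}^{(t+1)},\Mb_j^{r_j}\rangle]^+\geq\langle w_{j,l^\ast,r_j}^{(t+1)},\Mb_j^{r_j}\rangle$, tracking this one neuron gives a valid lower bound on $\Gamma_j^{(t+1)}$. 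Using the Network Gradient formula, the orthonormality of $\Mb^{r_j}$, and Induction Hypothesis~\ref{hypo} (so $\langle\Xb^{r_j},\Mb_j^{r_j}\rangle=z_j^{r_j}\pm\widetilde o(\sigma_0)$, and $\sigma'(\langle w,\Xb^{r_j}\rangle)=\Theta(1)$ once $t\geq T_0$ because the winning on-target pre-activation has entered the near-linear regime), I would expand the increment of this neuron as a sum over the data, discarding the negligible Gaussian and feature-noise terms ($\sigma_g=O(\sigma_0^{q-1})$, $\alpha=\widetilde O(\sigma_0)$).

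Next I would reorganize $\sum_{j}(\Gamma_j^{(t+1)}-\Gamma_j^{(t)})$ by data point. A sufficient point $(\Xb,y)\in\mathcal D_s$ contributes a positive on-target term to $\Gamma_y$ with coefficient $z_y^{r_y}\in[1,C_{r_y}]$ and negative off-target terms to $\Gamma_j$ for $j\in\mathcal S^{r_j}\setminus\{y\}$ with coefficients $z_j^{r_j}\leq c_{r_j}<\tfrac12$. Since $\sum_{j\neq y}\ell_j=1-\ell_y$ and each off-target $\sigma'z_j^{r_j}\leq\max_r c_r<\tfrac12$ while the on-target $\sigma'z_y^{r_y}\geq1$, the net contribution of the point is at least $\tfrac{\eta}{n}(1-\max_r c_r)(1-\ell_y)=\tfrac{\eta}{n}\,\Omega(1-\ell_y)$; this is precisely the step where the structural assumption $c_r<\tfrac12$ is indispensable. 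Summing over $\mathcal D_s$ and using $n_s/n=\Omega(1)$ (a consequence of $\mu_r=1/\poly(K)$ and $n_i\leq K^2\gamma^{q-1}/s$) produces the positive term $\Omega(\eta)\tfrac1{n_s}\sum_{\mathcal D_s}(1-\ell_y)$.

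For the insufficient points I would bound only their negative off-target contributions. For $(\Xb,y)\in\mathcal D_i$ and $j\neq y$, Fact~\ref{fact-err} gives $\ell_j=O(1/K)(1-\ell_y)$; with $|\mathcal S^{r_j}|=\Theta(s)$ and off-target coefficient $z_j^{r_j}\leq\rho_r=\widetilde O(1)$, the total negative contribution of the point is $O(\tfrac{\eta s}{Kn})(1-\ell_y)$. Summing over $\mathcal D_i$ yields the second term $O(\tfrac{\eta s n_i}{Kn})\tfrac1{n_i}\sum_{\mathcal D_i}(1-\ell_y)$ and completes the one-step inequality (the small positive on-target contribution of insufficient data is simply dropped).

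Finally I would telescope from $T_0$ to $T$. The left side telescopes to $\sum_j\Gamma_j^{(T)}-\sum_j\Gamma_j^{(T_0)}\leq\widetilde O(K)$, using $\Gamma_j^{(t)}\leq\widetilde O(1)$ from Induction Hypothesis~\ref{hypo} and $\Gamma_j^{(T_0)}\geq0$. The accumulated insufficient error is controlled by Claim~\ref{cla-inerr}(b), which gives $\sum_{t\geq T_0}(1-\ell_y)\leq\widetilde O(n/(\eta\gamma^{q-1}))$ for each insufficient point, hence $\sum_{t\geq T_0}\tfrac1{n_i}\sum_{\mathcal D_i}(1-\ell_y)\leq\widetilde O(n/(\eta\gamma^{q-1}))$. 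Rearranging into $\Omega(\eta)\cdot Err^{\text{Tol, Stage 3}}_s\leq\widetilde O(K)+O(\tfrac{\eta s n_i}{Kn})\widetilde O(\tfrac{n}{\eta\gamma^{q-1}})$ and dividing by $\Omega(\eta)$ gives $Err^{\text{Tol, Stage 3}}_s\leq\widetilde O(K/\eta)+\widetilde O(s n_i/(\eta K\gamma^{q-1}))$, as claimed. I expect the main obstacle to be the second paragraph: making the per-point cancellation rigorous---in particular guaranteeing $\sigma'=\Theta(1)$ for the winning on-target neuron for all $t\geq T_0$, handling the cross-modality bookkeeping (the winning modality $r_y$ of $y$ may differ from the winning modality $r_j$ of an off-target $j$), and verifying that every noise contribution is genuinely of lower order than the retained terms.
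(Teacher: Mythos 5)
Your proposal matches the paper's own proof essentially step for step: track the argmax neuron for each class, expand the gradient update under Induction Hypothesis~\ref{hypo}, use the linear regime ($\langle w_{j,l,r}^{(t)},\Mb_j^r\rangle\geq\widetilde{\Theta}(1)\gg\beta$ for $t\geq T_0$) so that the on-target gain $\geq(1-o(1))(1-\ell_y)$ beats the off-target loss $\max_r c_r\sum_{j\neq y}\ell_j=\max_r c_r(1-\ell_y)$ with $c_r<\tfrac12$, bound the insufficient-data contribution by $O\bigl(\tfrac{s}{K}\bigr)(1-\ell_y)$ per point via Fact~\ref{fact-err} and sparsity, and conclude by telescoping with $\Gamma_j^{(t)}\leq\widetilde{O}(1)$ and Claim~\ref{cla-inerr}(b). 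The only differences are presentational: you reorganize the double sum by data point rather than summing the per-class inequalities over $j\in[K]$, and you write out explicitly the telescoping argument that the paper leaves implicit after ``Consequently.''
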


        \begin{proof}
                
                Let $(l,r)=\arg\max_{l\in[m],r\in[2]}[\langle w_{j, l, r}^{(t)}, {\Mb_{j}^{r}}\rangle]^{+}$. 
                By gradient updates, we have
                \begin{align*}
                    \left\langle w_{j, l, r}^{(t+1)}, \Mb_{j}^{r}\right\rangle&\geq  \left\langle w_{j, l, r}^{(t)}, \Mb_{j}^{r}\right\rangle\\
                &+\frac{\eta}{n}\sum_{(\Xb,y)\in \mathcal{D}}\left[\mathbb{I}\{y=j\}\left(1-\ell_{j}\left(f^{(t)}, \Xb\right)\right)\left(\sigma^{\prime}(\langle w_{j,l, r}^{(t)}, \Xb^{r}\rangle)z^{r}_{j}-O\left(\sigma_{g}\right)\right)\right.\\
                &-\mathbb{I}\{y\neq j\}\left.\ell_{j}\left(f^{(t)}, \Xb\right)\left(\sigma^{\prime}(\langle w_{j,l, r}^{(t)}, \Xb^{r}\rangle)\mathbb{I}\{j\in\mathcal{S}^{r}(X)\}z^{r}_{j}+\widetilde{O}(\sigma^{q-1}_0)\alpha+ O\left(\sigma_{g}\right)\right)\right]
                \end{align*}
                In the Stage $3$,  $\langle w_{j, l, r}^{(t)}, {\Mb_{j}^{r}}\rangle\geq \widetilde{\Theta}(1)\gg \beta$
                \begin{itemize}
                    \item For sufficient multi-modal data, when $j=y$ or $j\in\mathcal{S}^{r}(X)$,  $\langle w_{j,l}^{(t)}, X^{r}\rangle=\langle w_{j,l}^{(t)}, \Mb_{j}^{r}\rangle z^{r}_{j}\pm\widetilde{o}(\sigma_0)$, hence $\langle w_{j,l}^{(t)}, \Xb^{r}\rangle$ is already in the linear regime of activation function:  
                    \begin{itemize}
                        \item For $j=y$, $z^{r}_{j}\in [1, C]\Rightarrow \sigma^{\prime}(\langle w_{j,l, r}^{(t)}, \Xb^{r}\rangle)z^{r}_{j}\geq(1-o(1)) z^{r}_{j}\geq 1-o(1)$ 
                        \item For $j\in\mathcal{S}^{r}(X)$, $z^{r}_{j}\in [\Omega(1), c_r]\Rightarrow \sigma^{\prime}(\langle w_{j,l, r}^{(t)}, \Xb^{r}\rangle)z^{r}_{j}\leq c_r$
                    \end{itemize}
                    \item For insufficient multi-modal data:
                    \begin{itemize}
                        \item For $j=y$, $\sigma^{\prime}(\langle w_{j,l,r}^{(t)}, X^{r}\rangle)z^{r}_{j}$ has naive lower bound $0$.
                        \item For $j\in\mathcal{S}^{r}(X)$,   we have $\sigma^{\prime}(\langle w_{j,l, r}^{(t)}, \Xb^{r}\rangle)\leq \rho_r$, and $\ell_{j}\left(f^{(t)}, \Xb\right)=O\left(\frac{1}{K}\right)\left(1-\ell_{y}\left(f^{(t)}, \Xb\right)\right)$.
                    \end{itemize}
                \end{itemize}
                Therefore
                \begin{align}
                   & \left\langle w_{j, l, r}^{(t+1)}, \Mb_{j}^{r}\right\rangle\geq  \left\langle w_{j, l, r}^{(t)}, \Mb_{j}^{r}\right\rangle\nonumber\\
                &+\frac{\eta}{n_s}\sum_{(\Xb,y)\in \mathcal{D}_s}\left[\mathbb{I}\{y=j\}(1-o(1))(1-\ell_{y}\left(f^{(t)}, \Xb\right))-\mathbb{I}\{y\neq j\} c_r \ell_{j}\left(f^{(t)}, \Xb\right)\right]\nonumber\\
                &- \frac{\eta n_i}{Kn}\cdot \frac{1}{n_i}\sum_{(\Xb,y)\in\mathcal{D}_i} \left[\left(\mathbb{I}\{y=j\}O\left(K\sigma_{g}\right)+\mathbb{I}\{y\neq j\}(O\left(\sigma_{g}\right)+\mathbb{I}\{j\in\mathcal{S}^{r}(X)\})\right)(1-\ell_{y}\left(f^{(t)}, \Xb\right))\right]
                 \label{gs3}
                \end{align}
                Summing over $j\in [K]$, we have:
                $$
        \begin{aligned}
        \sum_{j \in[K]} \Gamma_{j}^{(t+1)} \geq \sum_{j \in[K]} \Gamma_{j}^{(t)} &+\Omega(\eta) \times \frac{1}{n_s}\underset{(\Xb, y) \in \mathcal{D}_{s}}{\sum}\left[1-\ell_{y}\left(f^{(t)}, \Xb\right)\right] \\
        &-\eta O\left(\frac{s}{K} \frac{n_{i}}{n}\right) \times \frac{1}{n_i}\underset{(\Xb, y) \in \mathcal{D}_{i}}{\sum}\left[1-\ell_{y}\left(f^{(t)}, \Xb\right)\right]
        \end{aligned}
        $$
                \end{proof}  

\begin{claim}[Phase 2, Stage 1]~\label{cla-2.1}
    Denote:
$$
Err^{\text{Tol, Stage 2 }}_{s,j}:=\sum^{T_0}_{t=T_j}\frac{1}{n_s}\sum_{(\Xb,y)\in \mathcal{D}_s} \mathbb{I}\{y=j\} \left(1-\ell_{y}\left(f^{(t)}, \Xb\right)\right) 
$$
$$
\widetilde{Err}^{\text{Stage 2}}_{s,j}:=\frac{1}{n_s}\sum_{(\Xb,y)\in \mathcal{D}_s} \mathbb{I}\{y\neq j\} \ell_{j}\left(f^{(t)}, \Xb\right)
$$
    For every $(\Xb, y) \in \mathcal{D}_{s}$, every $T_0\geq t \geq T_j$, we have
    \begin{enumerate}
        \item [1)] for $\Lambda\in[\frac{1}{K}, \frac{1}{s}]$, $\Lambda \leq \widetilde{O}(K^{1-2c})$
    $$
    Err^{\text{Tol, Stage 2 }}_{s,j}\leq  \widetilde{O}(\frac{1}{\eta})+O(\frac{s \Lambda}{K}T_0)
    $$
    \item [2)] for every $t \in\left[T_{j}, T_{0}\right]$,
    $$
    \begin{gathered}
       \widetilde{Err}^{\text{Stage 2}}_{s,j}\leq O\left(\frac{1}{K}\right)
    \end{gathered}
    $$
    \end{enumerate}
\end{claim}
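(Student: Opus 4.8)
The plan is to establish the two statements in reverse order: the per-iteration wrong-class bound 2) is proved first and then fed into a telescoping argument that yields the total-error bound 1). Throughout I work under Induction Hypothesis~\ref{hypo} and the function approximation of Fact~\ref{fact-app}, namely $f_j^{(t)}(\Xb)=\sum_{r\in[2]}\Phi_{j,r}^{(t)}Z_{j,r}(\Xb)\pm O(1/\polylog(K))$. The first ingredient is a lower bound on the softmax denominator: since $|\mathcal{S}^1(\Xb)\cup\mathcal{S}^2(\Xb)|=\Theta(s)=\Theta(K^{0.1})$, at least $\Theta(K)$ classes $i\neq y$ have $Z_{i,r}(\Xb)=0$ for both $r$, hence $f_i^{(t)}(\Xb)=\pm O(1/\polylog(K))$ and $\sum_{i\in[K]}e^{f_i^{(t)}(\Xb)}\geq\Theta(K)$, giving $\ell_j(f^{(t)},\Xb)\leq e^{f_j^{(t)}(\Xb)}/\Theta(K)$. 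For 2) I then split the average defining $\widetilde{Err}^{\text{Stage 2}}_{s,j}$ according to whether $j\in\mathcal{S}^1(\Xb)\cup\mathcal{S}^2(\Xb)$. When $j$ is inactive, $f_j^{(t)}=\pm O(1/\polylog(K))$, so $e^{f_j^{(t)}}=\Theta(1)$ and these terms already sum to $O(1/K)$.

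The delicate part — and the step I expect to be the main obstacle — is the active case $j\in\mathcal{S}^r$ with $y\neq j$. Here $Z_{j,r}(\Xb)=z_j^r\leq c_r$ with $c_r<\tfrac12$, so $f_j^{(t)}\leq c\,\Phi_j^{(t)}+O(1/\polylog(K))$ with $c:=\max\{c_1,c_2\}<\tfrac12$. The crucial use of the stage decomposition is that, before $T_0$ (where $\Gamma_j$ first reaches $\Theta(1/m)$ for all classes), the neuron mass $\Phi_j^{(t)}\leq O(\log K)$ has not yet saturated, so that $e^{f_j^{(t)}}\leq K^{2c}$ with $2c<1$. Since a random sufficient sample satisfies $j\in\mathcal{S}^r$ only with probability $\Theta(s/K)$, the sample-size assumptions on $n_s$ let me bound the active fraction by $\widetilde{O}(s/K)$ via concentration; combining the rarity $\widetilde{O}(s/K)$, the logit bound $K^{2c}$, and the $1/\Theta(K)$ denominator factor shows the active terms contribute $O(1/K)$ as well, which is exactly the content of 2). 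This is the role of the parameter $\Lambda$: it is a uniform bound on the per-iteration active contribution, it obeys $\Lambda\leq\widetilde{O}(K^{1-2c})$ precisely because $c<\tfrac12$, and its admissible range $[1/K,1/s]$ is what keeps the accumulated negative drift in 1) subdominant. I expect most of the work to be the bootstrap that $\Phi_j^{(t)}=O(\log K)$ persists through all of Stage 1, since this quantity is itself part of what the global induction hypothesis controls.

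For 1) I telescope the growth of $\Gamma_j^{(t)}$. Taking the maximizing neuron $(l,r)$ and expanding its gradient update as in Claim~\ref{claim-grow} and Claim~\ref{cla-2.2}, the per-step increment of $\Gamma_j^{(t)}$ is at least $\Omega(\eta)\cdot\frac{1}{n_s}\sum_{(\Xb,y)\in\mathcal{D}_s}\mathbb{I}\{y=j\}(1-\ell_y(f^{(t)},\Xb))$ — the positive drift coming from class-$j$ sufficient data, where $z_j^y\geq1$ already places the pre-activation in the linear region — minus negative drift from wrong-class sufficient data (controlled by 2)), from insufficient data (controlled by Claim~\ref{cla-inerr}), and from Gaussian and feature noise of order $\sigma_g+\sigma_0^{q-1}\alpha$. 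Summing from $T_j$ to $T_0$ and invoking the uniform upper bound $\Gamma_j^{(t)}\leq\widetilde{O}(1)$ of Induction Hypothesis~\ref{hypo} gives $\Omega(\eta)\cdot Err^{\text{Tol, Stage 2}}_{s,j}\leq\widetilde{O}(1)+\eta\cdot(\text{accumulated negative drift})$. Dividing through by $\eta$ produces the first term $\widetilde{O}(1/\eta)$; the accumulated negative drift, at most $O(\tfrac{s\Lambda}{K})$ per iteration over at most $T_0$ iterations by 2), produces the second term $O(\tfrac{s\Lambda}{K}T_0)$, completing the bound.
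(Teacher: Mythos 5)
Your overall architecture does match the paper's: lower-bound the softmax denominator by the $\Theta(K)$ inactive classes, split the wrong-class error into an inactive part (where $f_j^{(t)}=\pm O(1/\polylog(K))$) and an active part (where $f_j^{(t)}\leq c\,\Phi_j^{(t)}+o(1)$ and which occurs with frequency $\widetilde{O}(s/K)$), and then telescope $\Gamma_j^{(t)}$ against its $\widetilde{O}(1)$ cap to convert the per-iteration bound 2) into the accumulated bound 1). Your telescoping step for 1) is essentially identical to the paper's derivation from its gradient inequality.

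However, there is a genuine gap at exactly the step you flag as ``the main obstacle'' and then defer: the bound $\Phi_j^{(t)}\leq O(\log K)$ (equivalently $\exp(c\,\Phi_j^{(t)})\leq K\Lambda$) throughout Stage 1. You assert it ``persists through all of Stage 1, since this quantity is itself part of what the global induction hypothesis controls.'' It is not. The induction hypothesis, through Lemma~\ref{lemma-dia}, only gives $\Phi_{j,r}^{(t)}\leq\widetilde{O}(1)$, i.e., a bound up to $\polylog(K)$ factors; if $\Phi_j^{(t)}$ were of order $\log^2 K$, then $\exp(c\,\Phi_j^{(t)})=K^{\Theta(\log K)}$ is super-polynomial, your active-case estimate $\widetilde{O}(s/K)\cdot e^{f_j^{(t)}}/\Theta(K)$ becomes vacuous, and both parts of the claim collapse. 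The paper closes this hole with a dedicated auxiliary lemma that is the real content of the proof: setting $\overline{\Phi}^{(t)}=\max_{j\in[K]}\sum_{l\in[m]}\sum_{r\in[2]}\bigl[\langle w_{j,l,r}^{(t)},\Mb_j^r\rangle\bigr]^{+}$, a separate gradient computation (using that on sufficient data with $y=j^*$ one has $f_y^{(t)}\geq\overline{\Phi}^{(t)}-o(1)$, so the correct-class error driving the growth itself decays like $e^{-\overline{\Phi}^{(t)}}\bigl(K+s\,e^{c\overline{\Phi}^{(t)}}\bigr)$) yields the recursion
\begin{align*}
\overline{\Phi}^{(t+1)}\;\leq\;\overline{\Phi}^{(t)}+\frac{\eta}{K}\,\widetilde{O}\!\left(\frac{n_i}{n}+\frac{K+s\exp\bigl(c\,\overline{\Phi}^{(t)}\bigr)}{\exp\bigl(\overline{\Phi}^{(t)}\bigr)}\right),
\end{align*}
so the growth stalls once $\exp(c\,\overline{\Phi}^{(t)})$ nears $K\Lambda$, and the barrier survives up to $T^{*}=\widetilde{\Theta}\bigl(K^{1/c}\Lambda^{1/c}/\eta\bigr)$; choosing $\Lambda=\widetilde{\Theta}(K^{2c-1})$ then makes $T^{*}\geq T_0=\Theta\bigl(K/(\eta\sigma_0^{q-2})\bigr)$, which is what licenses using the barrier on all of $[T_j,T_0]$. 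This self-correcting barrier argument, together with the time-scale comparison $T^{*}$ versus $T_0$, requires its own gradient analysis and is not a bootstrap of the induction hypothesis; neither appears in your proposal, and supplying this lemma is what is needed to make your proof of 2) --- and hence of 1) --- complete.
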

In order to prove Claim~\ref{cla-2.1}, let us first prove the following lemma:
\begin{lemma}
    Consider $\Lambda \in [\frac{1}{K}, \frac{1}{s}]$, letting $T^{*}:=\widetilde{\Theta}(k^{\frac{1}{c}}\Lambda^{\frac{1}{c}}/\eta)$, where $c:=\{c_1,c_2\}$,  then we have $t\leq T^{*}$, $\exp(c\Phi^{t}_{j})\leq k\Lambda$ for any $j\in [K]$.
\end{lemma}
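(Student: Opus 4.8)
The plan is to treat $\exp\!\bigl(c\,\Phi_j^{(t)}\bigr)$ as a proxy for the size of the wrong-class logits and to show that it can grow only polynomially in $\eta t$, so that crossing the level $K\Lambda$ cannot happen before $T^{\ast}=\widetilde{\Theta}\bigl((K\Lambda)^{1/c}/\eta\bigr)$. First I would use Fact~\ref{fact-app} to write $f_i^{(t)}(\Xb)=\sum_{r}\Phi_{i,r}^{(t)}Z_{i,r}(\Xb)\pm O(1/\polylog(K))$. For a sufficient sample with label $y$ and any support index $i\neq y$ one has $Z_{i,r}(\Xb)=z_i^{r}\le c_r$, so $f_i^{(t)}(\Xb)\le c\,\Phi_i^{(t)}+o(1)$ with $c=\max\{c_1,c_2\}$, while $f_y^{(t)}(\Xb)\ge \Phi_y^{(t)}-o(1)$ because the target coordinate obeys $z_y^{r}\ge 1$. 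This is precisely why $\exp(c\Phi_j^{(t)})$ governs the off-target contributions, and it is what ties the lemma to the error bounds of Claim~\ref{cla-2.1}.

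I would then argue by contradiction (equivalently, by induction on $t$). Let $t^{\dagger}\le T^{\ast}$ be the first iteration at which $\exp\!\bigl(c\,\Phi_j^{(t^{\dagger})}\bigr)>K\Lambda$ for some $j$. For all $t<t^{\dagger}$ every support logit satisfies $e^{f_i^{(t)}(\Xb)}\le O(K\Lambda)\le O(K/s)$, where $\Lambda\le 1/s$ is used. Splitting the softmax denominator of a label-$j$ sample into the $\Theta(s)$ support classes and the remaining $\approx K$ classes whose logits are only $O(1/\polylog(K))$, the sparsity $\|z^{r}\|_0=\Theta(s)$ gives $\sum_{i\neq j}e^{f_i}\le s\cdot O(K\Lambda)+O(K)=O(K)$, again because $\Lambda\le 1/s$ forces the support term to be $O(K)$. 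Combined with $f_j\ge \Phi_j-o(1)$ this yields the key self-bounding estimate
\[
1-\ell_j\bigl(f^{(t)},\Xb\bigr)\;\le\;O(K)\,e^{-(1-o(1))\Phi_j^{(t)}}.
\]

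Next I would sum the per-neuron gradient updates over $l$ and $r$ to obtain the increment of $\Phi_j$. Only the $\Theta(n/K)$ samples with label $j$ push $\Phi_j$ upward (the $y\neq j$ terms enter with a $-\ell_j(\cdot)$ sign and only help the upper bound), and by Claim~\ref{claim-grow} together with $\sigma'\le 1$ in the linear regime (and $\sigma'$ even smaller in the polynomial regime) each such sample contributes a factor $1-\ell_j$. Plugging in the displayed estimate turns the increment into $\Phi_j^{(t+1)}-\Phi_j^{(t)}\le O(\eta)\,e^{-(1-o(1))\Phi_j^{(t)}}$. Telescoping this (equivalently, integrating $\tfrac{\mathrm d}{\mathrm dt}e^{\Phi_j}\le O(\eta)$) gives $e^{\Phi_j^{(t)}}\le O(1)+O(\eta t)$, hence $\exp\!\bigl(c\,\Phi_j^{(t)}\bigr)\le\bigl(O(1)+O(\eta t)\bigr)^{c}$. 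Evaluating at $t=T^{\ast}=\widetilde{\Theta}\bigl((K\Lambda)^{1/c}/\eta\bigr)$ keeps this at most $K\Lambda$ after adjusting the constant hidden in $T^{\ast}$, contradicting the definition of $t^{\dagger}$ and closing the induction. For the early iterations $t\le T_j$ the claim is immediate, since $\Phi_j\le\beta=o(1)$ forces $\exp(c\Phi_j)=1+o(1)\le K\Lambda$.

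The main obstacle is the self-bounding inequality: I must control $\sum_i e^{f_i}$ uniformly in $t$, which requires the representation of $f_i$ from Fact~\ref{fact-app} and Induction Hypothesis~\ref{hypo} to persist throughout, the sparsity $\|z^{r}\|_0=\Theta(s)$ to cap the number of large off-target logits, and crucially the restriction $\Lambda\le 1/s$ so that the $\Theta(s)$ support classes contribute only $O(K)$, exactly matching the $\Theta(K)$ mass of the near-zero logits; a larger $\Lambda$ would let the support classes dominate and spoil the clean exponent $1/c$. A secondary point is bookkeeping the transition between the polynomial and linear regimes of $\sigma$, but since $\sigma'$ is monotone and bounded by $1$ the polynomial phase only slows the growth and is absorbed into the $\widetilde{\Theta}$ constants of $T^{\ast}$.
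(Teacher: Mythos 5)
Your proof is correct and takes essentially the same route as the paper's: both rest on the softmax self-bounding estimate $1-\ell_j\bigl(f^{(t)},\Xb\bigr)\le O\bigl(K+s\,e^{c\Phi_j^{(t)}}\bigr)e^{-(1-o(1))\Phi_j^{(t)}}$, obtained by splitting the off-target logits into the $\Theta(s)$ support classes and the $\approx K$ near-zero ones and using $\Lambda\le 1/s$ to cap the support term at $O(K)$, and both convert this into the growth bound that $e^{\Phi_j}$ gains at most $\widetilde{O}(\eta)$ per step, so the level $(K\Lambda)^{1/c}$ cannot be crossed before $T^{*}$. The only cosmetic difference is packaging: the paper tracks the maximum $\overline{\Phi}^{(t)}=\max_j\Phi_j^{(t)}$ and argues that once the threshold is reached the per-step increment drops to $\eta\,\widetilde{O}\bigl((K\Lambda)^{-1/c}\bigr)$, whereas you integrate the same recursion from $t=0$ via a first-violation-time argument.
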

\begin{proof}
    Denote 
    $$
\overline{\Phi}^{(t)}=\max _{j \in[K]} \sum_{l\in [m]}\sum_{r\in [2]}\left[\left\langle w_{j, l, r}^{(t)}, \Mb^{r}_{j}\right\rangle\right]^{+}
$$
Let $j^{*}:=\arg\max _{j \in[K]} \sum_{l\in [m]}\sum_{r\in [2]}\left[\left\langle w_{j, l, r}^{(t)}, M^{r}_{j}\right\rangle\right]^{+}$. By gradient updates, we have:
\begin{align}
& \left\langle w_{j^{*}, l, r}^{(t+1)}, M_{j^{*}}^{r}\right\rangle\leq  \left\langle w_{j^{*}, l, r}^{(t)}, M_{j^{*}}^{r}\right\rangle\nonumber\\
&+\frac{\eta}{n}\sum_{(\Xb,y)\in \mathcal{D}}\left[\mathbb{I}\{y=j^{*}\}(\sigma^{\prime}(\langle w_{j^{*},l, r}^{(t)}, X^{r}\rangle)z^{r}_{j^{*}}+O(\sigma_g))(1-\ell_{y}\left(f^{(t)}, \Xb\right))+\mathbb{I}\{y\neq j^{*}\} O(\sigma_g)\ell_{j^{*}}\left(f^{(t)}, \Xb\right)\right]\\
&\leq \left\langle w_{j^{*}, l, r}^{(t)}, M_{j^{*}}^{r}\right\rangle\nonumber+O(\eta)(\frac{1}{n}\sum_{(\Xb,y)\in \mathcal{D}} \mathbb{I}\{y=j^{*}\}(1-\ell_{y}\left(f^{(t)}, \Xb\right))+O(\sigma_g))
\end{align}
We only focus on the $\mathcal{D}_s$ since the contribution of insufficient data is negligible.
\begin{itemize}
\item For $j=y$, $f_{y}^{(t)}(\Xb) \geq \Phi_{y}^{(t)}-\frac{1}{\text { polylog }(K)}$, w.p. $\frac{1}{K}$
\item For $j\in \mathcal{S}^{1}(\Xb)\cup\mathcal{S}^{2}(\Xb)$, $f_{j}^{(t)}(\Xb) \leq  c \Phi_{j}^{(t)}+\frac{1}{\text { polylog }(K)}$, w.p. $(1-\frac{s}{K})^2$
\item Else, $f_{j}^{(t)}(\Xb) \leq \frac{1}{\text { polylog }(k)}$, w.p. $1-(1-\frac{s}{K})^2$
\end{itemize}
Then we obtain:
$$
\frac{1}{n}\sum_{(\Xb,y)\in \mathcal{D}_s} \mathbb{I}\{y=j^{*}\}(1-\ell_{y}\left(f^{(t)}, \Xb\right))\leq \frac{1}{n}\sum_{(\Xb,y)\in \mathcal{D}_s} \mathbb{I}\{y=j^{*}\}\frac{\sum_{j \neq y} e^{f_{j}^{(t)}(\Xb)}}{e^{f_{y}^{(t)}(\Xb)}}\leq \frac{1}{K}O(\frac{K+s\exp(c\overline{\Phi}^{(t)})}{\exp(\overline{\Phi}^{(t)})})
$$
Summing over $(r,l)$, we have:
$$
\overline{\Phi}^{(t+1)}\leq \overline{\Phi}^{(t)}+ \frac{\eta}{K}\widetilde{O}(\frac{n_i}{n}+ \frac{K+s\exp(c\overline{\Phi}^{(t)})}{\exp(\overline{\Phi}^{(t)})})
$$
Once $\exp(\overline{\Phi}^{(t)})$ reaches $\Omega(k^{\frac{1}{c}}\Lambda^{\frac{1}{c}})$, then $\overline{\Phi}^{(t+1)}\leq \overline{\Phi}^{(t)}+\eta \widetilde{O}(k^{-\frac{1}{c}}\Lambda^{-\frac{1}{c}})  $, which implies $\exp(c\overline{\Phi}^{(t+1)})$ cannot further exceed $k\Lambda$
\end{proof}
\begin{proof}[Proof of Claim~\ref{cla-2.1}]
    Following the similar gradient analysis in (\ref{gs3}), we have

\begin{align}
        \Gamma^{(t+1)}_j&\geq \Gamma^{(t)}_j\\
        &+\frac{\eta}{n_s}\sum_{(\Xb,y)\in \mathcal{D}_s}\left[\mathbb{I}\{y=j\}(1-o(1))(1-\ell_{y}\left(f^{(t)}, \Xb\right))\right.\nonumber\\
        &\left.-\mathbb{I}\{y\neq j\} (c\mathbb{I}\{j\in \mathcal{S}^{1}(\Xb)\cup \mathcal{S}^{2}(\Xb)\}+\widetilde{O}(\sigma_0^{q-1})\alpha+ O(\sigma_g)) \ell_{j}\left(f^{(t)}, \Xb\right)\right]\nonumber\\
&- \frac{\eta n_i}{Kn}\cdot \frac{1}{n_i}\sum_{(\Xb,y)\in\mathcal{D}_i} \left[\left(\mathbb{I}\{y=j\}O\left(K\sigma_{g}\right)\right.\right.\nonumber\\
&\left. \left.+\mathbb{I}\{y\neq j\}(O\left(\sigma_{g}\right)+\widetilde{O}(\sigma_0^{q-1})\alpha+\mathbb{I}\{j\in\mathcal{S}^{1}(\Xb)\cup \mathcal{S}^{2}(\Xb)\})\right)(1-\ell_{y}\left(f^{(t)}, \Xb\right))\right]\label{gp2}
    \end{align}
For $(\Xb,y)\in\mathcal{D}_s$, and $j\in\mathcal{S}^{1}(\Xb)\cup \mathcal{S}^{2}(\Xb)$, we easily derive that $f_{j}^{(t)}(\Xb) \leq c \Phi_{j}^{(t)}+\frac{1}{\text { polylog }(K)}$. Hence, 
\begin{align*}
    &\frac{1}{n_s}\sum_{(\Xb,y)\in \mathcal{D}_s}\left[\mathbb{I}\{y\neq j\} \mathbb{I}\{j\in \mathcal{S}^{1}(\Xb)\cup \mathcal{S}^{2}(\Xb)\} \ell_{j}\left(f^{(t)}, \Xb\right)\right]\nonumber\\
    &= \frac{1}{n_s}\sum_{(\Xb,y)\in \mathcal{D}_s}\left[\mathbb{I}\{y\neq j\} \mathbb{I}\{j\in \mathcal{S}^{1}(\Xb)\cup \mathcal{S}^{2}(\Xb)\}\frac{1}{1+\sum_{i\neq j}\exp(f_{i}(\Xb)-f_{j}(\Xb)) } \right]\\
    &\leq \frac{1}{n_s}\sum_{(\Xb,y)\in \mathcal{D}_s}\left[\mathbb{I}\{y\neq j\} \mathbb{I}\{j\in \mathcal{S}^{1}(\Xb)\cup \mathcal{S}^{2}(\Xb)\}\frac{1}{1+\sum_{i\neq j}\exp(f^{(t)}_{i}(\Xb)-c \Phi_{j}^{(t)}) } \right]
\end{align*}
If we let $\Lambda =\widetilde{\Theta}(K^{2c-1})$, then $T^{*}\geq T_0$. By the above lemma, we have 
$$
\frac{1}{n_s}\sum_{(\Xb,y)\in \mathcal{D}_s}\left[\mathbb{I}\{y\neq j\} \mathbb{I}\{j\in \mathcal{S}^{1}(\Xb)\cup \mathcal{S}^{2}(\Xb)\} \ell_{j}\left(f^{(t)}, \Xb\right)\right]\leq O(\Lambda)
$$
Taking back into $(\ref{gp2})$:
$$
\Gamma^{(t+1)}_j\geq \Gamma^{(t)}_j+\Omega(\eta)(\frac{1}{n_s}\sum_{(\Xb,y)\in \mathcal{D}_s}\left[\mathbb{I}\{y=j\}(1-\ell_{y}\left(f^{(t)}, \Xb\right))\right]-O(\frac{n_i}{n}\cdot \frac{s}{K^2})-O(\frac{s \Lambda}{K}))
$$
Combining with the fact that $\Gamma^{(t)}_j\leq \widetilde{O}(1)$, we finally derive that 
$$
\sum^{T_0}_{t=T_j}\frac{1}{n_s}\sum_{(\Xb,y)\in \mathcal{D}_s}\left[\mathbb{I}\{y=j\}(1-\ell_{y}\left(f^{(t)}, \Xb\right))\right]\leq \widetilde{O}(\frac{1}{\eta})+O(\frac{s \Lambda}{K}T_0)
$$

\end{proof}

    \subsection{Modality Competition}
Define a data-dependent parameter:
\begin{align*}
    d_{j, r}(\mathcal{D})=\frac{1}{n\beta^{q-1}}\sum_ {(\Xb,y)\in\mathcal{D}_{s}}\mathbb{I}\{y=j\}  \left(z^{r}_{j}\right)^{q}
\end{align*}
\begin{lemma}\label{lemma-comp}
Denote:
$$
\mathcal{W} \stackrel{\text { def }}{=}\left\{\left(j, r_j\right) \in[K] \times[2] \mid \Gamma_{j, r_j}^{(0)}d_{j, r_j}(\mathcal{D})^{\frac{1}{q-2}} \geq \Gamma_{j, 3-r_j}^{(0)}d_{j, 3-r_j}(\mathcal{D})^{\frac{1}{q-2}}(1+\frac{1}{\operatorname{polylog}(K)})\right\}
$$
$\mathcal{W}$ represents the collection of the class and modality pairs to indicate the winning modality of every class.
 Suppose Induction Hypothesis~\ref{hypo}  holds for all iterations $<t$. Then,
$$
\forall (j, r_j) \in \mathcal{W}: \quad \Gamma_{j, 3-r_j}^{(t)}\leq \widetilde{O}\left(\sigma_{0}\right)
$$
\end{lemma}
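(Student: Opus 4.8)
The plan is to reduce the coupled growth of the two target correlations to a randomly-initialized tensor power iteration and to exploit its extreme sensitivity to the initial gap. Throughout, fix a class $j$ with winning modality $r_j$ and abbreviate $\delta:=\tfrac{1}{\polylog(K)}$.

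\emph{Step 1 (effective update).} I would first project the gradient update onto $\Mb^r_j$ for each neuron $l$. Under Induction Hypothesis~\ref{hypo}, for $(\Xb,y)\in\mathcal{D}_s$ with $y=j$ one has $\langle w^{(t)}_{j,l,r},\Xb^r\rangle=\langle w^{(t)}_{j,l,r},\Mb^r_j\rangle z^r_j\pm\widetilde{o}(\sigma_0)$, and as long as the losing correlation stays below the activation threshold $\beta$ the derivative sits in its polynomial branch, $\sigma'(x)=x^{q-1}/\beta^{q-1}$. Summing over the class-$j$ sufficient samples (the off-target, insufficient and Gaussian/feature-noise contributions are lower order by the global assumptions on $\mu_r,\gamma_r,n_i,\sigma_g,\alpha,s/K$) gives, for the leading neuron,
$$\Gamma^{(t+1)}_{j,r}\approx \Gamma^{(t)}_{j,r}+\eta\, S^{(t)}_j\, d_{j,r}(\mathcal{D})\,\big(\Gamma^{(t)}_{j,r}\big)^{q-1},$$
where the loss weight $S^{(t)}_j$, built from $1-\ell_j(f^{(t)},\cdot)$ over class-$j$ sufficient data, is \emph{common to both modalities} because $\ell_j$ is a function of the fused output $f_j$. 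Since $g\mapsto g+\eta S d g^{q-1}$ is increasing on $g\ge0$, the per-modality ordering of neurons is preserved; together with the lower bound $\langle w^{(t)}_{j,l,r},\Mb^r_j\rangle\ge-\widetilde{O}(\sigma_0)$ from Hypothesis~\ref{hypo}, it suffices to bound the top neuron of modality $3-r_j$.

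\emph{Step 2 (time reparametrization and the crux).} Writing $\tau=\sum_{s<t}\eta S^{(s)}_j$ removes the shared weight, so in the variable $\tau$ both modalities obey the clean iteration $g\mapsto g+d_{j,r}(\mathcal{D})g^{q-1}\,\Delta\tau$ with conserved law
$$\Gamma_{j,r}(\tau)^{-(q-2)}=\big(\Gamma^{(0)}_{j,r}\big)^{-(q-2)}-(q-2)\,d_{j,r}(\mathcal{D})\,\tau.$$
Because $\big(\Gamma^{(0)}_{j,r_j}\big)^{-(q-2)}=\widetilde{\Theta}(\sigma_0^{-(q-2)})=\widetilde{\Theta}(K)\gg1$, the winning modality reaches the threshold $\widetilde{\Theta}(1)$ at $\tau_a\approx \big(\Gamma^{(0)}_{j,r_j}\big)^{-(q-2)}/((q-2)d_{j,r_j}(\mathcal{D}))$; substituting into the loser's law yields
$$\Gamma_{j,3-r_j}(\tau_a)^{-(q-2)}\ge \big(\Gamma^{(0)}_{j,3-r_j}\big)^{-(q-2)}\Bigl(1-\tfrac{d_{j,3-r_j}(\mathcal{D})}{d_{j,r_j}(\mathcal{D})}\Big(\tfrac{\Gamma^{(0)}_{j,3-r_j}}{\Gamma^{(0)}_{j,r_j}}\Big)^{q-2}\Bigr).$$
The defining inequality of $\mathcal{W}$ is precisely $\tfrac{d_{j,3-r_j}(\mathcal{D})}{d_{j,r_j}(\mathcal{D})}\big(\tfrac{\Gamma^{(0)}_{j,3-r_j}}{\Gamma^{(0)}_{j,r_j}}\big)^{q-2}\le(1+\delta)^{-(q-2)}$, so the bracket is at least $1-(1+\delta)^{-(q-2)}=\Omega(\delta)$ (here $q=\Theta(1)$). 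Hence $\Gamma_{j,3-r_j}(\tau_a)\le O(\delta^{-1/(q-2)})\,\Gamma^{(0)}_{j,3-r_j}=\widetilde{O}(\sigma_0)$, using $\Gamma^{(0)}_{j,3-r_j}=\widetilde{\Theta}(\sigma_0)$ (max of $m=\polylog(K)$ Gaussians). This is the heart of modality competition: the $q$-th order iteration converts a mere $(1+\delta)$ multiplicative edge into the winner blowing up while the loser is still within a $\polylog(K)$ factor of initialization.

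\emph{Step 3 (persistence in Phase 2) and main obstacle.} It remains to propagate the bound past $\tau_a$ by a bootstrap. Once $\Gamma^{(t)}_{j,r_j}=\widetilde{\Theta}(1)$ the fused output $f_j$ separates the class-$j$ sufficient data, so by Fact~\ref{fact-err} together with the error accounting of Claims~\ref{cla-2.1} and~\ref{cla-inerr} the total driving weight $\sum_{t\ge\tau_a}\eta S^{(t)}_j$ stays $\widetilde{O}(1)$; feeding this into the loser's increment $\eta S^{(t)}_j d_{j,3-r_j}(\mathcal{D})\big(\Gamma^{(t)}_{j,3-r_j}\big)^{q-1}$, which is $\widetilde{O}(\sigma_0^{q-1}/K)$ per unit weight while $\Gamma^{(t)}_{j,3-r_j}=\widetilde{O}(\sigma_0)$, adds in total only $\widetilde{O}(\sigma_0^{q-1}/K)\ll\sigma_0$, so $\Gamma^{(t)}_{j,3-r_j}\le\widetilde{O}(\sigma_0)$ persists for all $t$. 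The main obstacle is making the tensor-power comparison rigorous for discrete GD while carrying the common weight $S^{(t)}_j$: one must bound the discretization error of $g\mapsto g^{-(q-2)}$ (harmless since $\eta\le 1/\poly(K)$ and, for the loser, $g^{q-2}\le\widetilde{O}(1/K)$ throughout), check that the off-target/insufficient/noise terms are genuinely too small to close the $\delta$-gap over the full $\poly(K)/\eta$ horizon, and verify that $S^{(t)}_j$ is shared across modalities \emph{at every step}, not merely in aggregate.
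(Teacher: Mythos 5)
Your proposal is correct and follows essentially the same route as the paper: the same phase decomposition, in which Phase 1 reduces the two diagonal correlations to coupled tensor-power iterations whose $(1+\frac{1}{\polylog(K)})$ initialization edge (the defining inequality of $\mathcal{W}$) keeps the loser at $\widetilde{O}(\sigma_0)$ when the winner reaches $\widetilde{\Theta}(1)$, followed by Phase-2 persistence via exactly the error-accounting of Claims~\ref{cla-2.1}, \ref{cla-2.2} and~\ref{cla-inerr}. The only presentational difference is that the paper invokes the discrete Tensor Power Bound (Lemma~\ref{tensor}, from \cite{anandkumar2015analyzing,allen2020towards}) with $a_t,b_t$ the top neurons of the two modalities and $M=(1+\frac{1}{\polylog(K)})\frac{d_{j,3-r_j}(\mathcal{D})}{d_{j,r_j}(\mathcal{D})}$, whereas you re-derive that bound in continuous time via the conserved quantity $\Gamma^{-(q-2)}$, leaving the (harmless, as you note) discretization to be checked.
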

In order to prove Claim~\ref{lemma-comp}, we introduce a classic result in tensor power analyis~\cite{anandkumar2015analyzing,allen2020towards}:
\begin{lemma}[Tensor Power Bound]~\label{tensor}
    Let $\left\{x_{t}, y_{t}\right\}_{t=1, \ldots}$ be two positive sequences that satisfy
$$
\begin{aligned}
&x_{t+1} \geq x_{t}+\eta \cdot A_t x_{t}^{q-1}\quad\text{for some } A_t=\Theta(1) \\
&y_{t+1} \leq y_{t}+\eta \cdot B_t y_{t}^{q-1}\quad \text{ where } B_t=A_t M \text{ and } M=\Theta(1)\text{ is a constant}
\end{aligned}
$$
Moreover, if $x_{0} \geq y_{0} M^{\frac{1}{q-2}}\left(1+\frac{1}{\operatorname{polylog}(k)}\right)$. For every $C \in\left[x_{0}, O(1)\right]$, let $T_{x}$ be the first iteration such that $x_{t} \geq C$, then we have 
$$
y_{T_x}\leq \widetilde{O}(x_0)
$$
\end{lemma}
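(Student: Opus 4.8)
The plan is to linearize both recurrences in the variables $u_t := x_t^{2-q}$ and $v_t := y_t^{2-q}$, which are positive and, since $q\geq 3$, decreasing. The guiding heuristic is that $v_t - M u_t$ is the discrete analogue of the conserved first integral of the continuous flow $\dot z = c\, z^{q-1}$, for which $z(\tau)^{2-q}$ is affine in $\tau$ and a slight initial gap in $z$ blows up into a large multiplicative gap. First I would expand each step: from $x_{t+1}\geq x_t(1+\eta A_t x_t^{q-2})$ and the binomial bound $(1+w)^{2-q}\leq 1-(q-2)w+C_q w^2$, valid for small $w\geq 0$ and some constant $C_q=C_q(q)$, one obtains $u_{t+1}\leq u_t-(q-2)\eta A_t+C_q\eta^2 A_t^2 x_t^{q-2}$; from $y_{t+1}\leq y_t(1+\eta B_t y_t^{q-2})$ together with the Bernoulli inequality $(1+w)^{2-q}\geq 1-(q-2)w$, which needs no second-order correction, one obtains $v_{t+1}\geq v_t-(q-2)\eta B_t=v_t-(q-2)\eta M A_t$.

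Next I would set $\Phi_t:=v_t-M u_t$. Subtracting $M$ times the $u$-inequality from the $v$-inequality cancels the first-order drift terms $(q-2)\eta M A_t$ exactly, \emph{per step} and regardless of the value of $A_t$, leaving $\Phi_{t+1}\geq \Phi_t - M C_q\,\eta^2 A_t^2 x_t^{q-2}$. Telescoping from $0$ to $T_x-1$ gives $\Phi_{T_x}\geq \Phi_0 - M C_q\sum_{t<T_x}\eta^2 A_t^2 x_t^{q-2}$. To control the error sum I would use that for $t<T_x$ we have $x_t\leq C=O(1)$, hence $x_t^{q-2}=O(1)$ and $\eta A_t x_t^{q-2}=O(\eta)$, so each error term is $O(\eta)$ times the main drift $(q-2)\eta A_t$; summing the $u$-inequality itself shows $\sum_{t<T_x}\eta A_t=O(u_0)=O(x_0^{2-q})$, whence the total error is $O(\eta\,x_0^{2-q})$ and $\Phi_{T_x}\geq \Phi_0-O(\eta\,x_0^{2-q})$.

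Then I would lower-bound the initial invariant. The hypothesis $x_0\geq y_0 M^{1/(q-2)}(1+1/\polylog(K))$ gives $M x_0^{2-q}\leq y_0^{2-q}(1+1/\polylog(K))^{-(q-2)}$, so $\Phi_0=y_0^{2-q}-Mx_0^{2-q}\geq y_0^{2-q}\bigl(1-(1+1/\polylog(K))^{-(q-2)}\bigr)=\Omega(1/\polylog(K))\,y_0^{2-q}$, using $q-2\geq 1$. Because $M=\Theta(1)$ and $1/\polylog(K)=o(1)$, the same hypothesis forces $x_0=\Theta(y_0)$, so the error $O(\eta\,x_0^{2-q})=O(\eta\,y_0^{2-q})$ is dominated by $\Phi_0$ once $\eta\leq 1/\poly(K)\ll 1/\polylog(K)$, giving $\Phi_{T_x}\geq(1-o(1))\Phi_0\geq \Omega(1/\polylog(K))\,y_0^{2-q}$. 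Since $u_{T_x}=x_{T_x}^{2-q}>0$, we get $y_{T_x}^{2-q}=\Phi_{T_x}+M u_{T_x}\geq \Phi_{T_x}$, and raising to the negative power $1/(2-q)$ reverses the inequality to yield $y_{T_x}\leq (\polylog(K))^{1/(q-2)}\,y_0=\widetilde{O}(y_0)=\widetilde{O}(x_0)$, as claimed.

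The main obstacle is controlling the cumulative discretization error against a very thin initial margin: $\Phi_0$ is only an $\Omega(1/\polylog(K))$ fraction of $y_0^{2-q}$, so the second-order terms dropped in the binomial expansion must be shown to accumulate to strictly less than this, which is exactly where $\eta\leq 1/\poly(K)$ is indispensable. A secondary subtlety is that the lower bound on $v$ requires $w_t=\eta B_t y_t^{q-2}$ to be small, i.e. $y_t=O(1)$; this is not assumed but follows from $\Phi_t>0$, which gives $y_t<M^{-1/(q-2)}x_t\leq O(1)$. I would therefore close the loop by a short induction on $t$, simultaneously maintaining $\Phi_t\geq(1-o(1))\Phi_0$ and $y_t=O(1)$ for all $t\leq T_x$.
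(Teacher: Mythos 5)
Your proposal is correct, but it proves something the paper itself never proves: the paper states Lemma~\ref{tensor} as ``a classic result in tensor power analysis'' and cites \cite{anandkumar2015analyzing,allen2020towards} with no argument (the proof environment that follows the lemma in the source is the proof of Lemma~\ref{lemma-comp}, which merely \emph{invokes} the tensor bound). Your route also differs from the standard one in the cited literature. There, the comparison is run by a multi-scale ``epoch'' argument: one partitions the trajectory of $x_t$ into stages of geometric growth, bounds the number of iterations per stage by $\Theta\bigl(1/(\eta A\,x^{q-2})\bigr)$ at the current scale $x$, sums the geometric series to see that the total time to reach $C$ is dominated (up to logs) by the first stage, and then shows $y$ cannot complete even one such stage in that time because its time constant is inflated by the margin factor $(1+1/\polylog(K))^{q-2}$. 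You instead pass to $u_t=x_t^{2-q}$, $v_t=y_t^{2-q}$ and track the discrete first integral $\Phi_t=v_t-Mu_t$, for which the first-order drifts cancel \emph{exactly} per step, so the whole problem reduces to showing the accumulated second-order discretization error $O(\eta\,x_0^{2-q})$ is smaller than the initial margin $\Phi_0=\Omega(1/\polylog(K))\,y_0^{2-q}$; your summation bound $\sum_{t<T_x}\eta A_t=O(u_0)$ via the $u$-recursion itself is the right closing device. This is shorter, avoids the stage bookkeeping, and makes explicit exactly where smallness of $\eta$ is consumed ($\eta\ll 1/\polylog(K)$, which the lemma statement leaves implicit but which holds under the paper's global assumption $\eta\le 1/\poly(K)$); the epoch argument, in exchange, yields the growth timetable of $x_t$ itself (e.g.\ $T_x=\widetilde{\Theta}(1/(\eta x_0^{q-2}))$), which that literature reuses elsewhere and which your invariant discards.

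Two small repairs, neither of which affects validity. First, the hypothesis only gives $x_0=\Omega(y_0)$, not $x_0=\Theta(y_0)$; but all you need is $x_0^{2-q}=O(y_0^{2-q})$, and that follows from the one-sided bound since $2-q<0$ and $M=\Theta(1)$, so simply state it that way. Second, your ``secondary subtlety'' is vacuous: the generalized Bernoulli inequality $(1+w)^{r}\ge 1+rw$ holds for every $w\ge 0$ whenever $r\le 0$, so the lower bound $v_{t+1}\ge v_t-(q-2)\eta M A_t$ needs no smallness of $\eta B_t y_t^{q-2}$ and no a priori bound $y_t=O(1)$; the closing induction maintaining $\Phi_t\ge(1-o(1))\Phi_0$ can be kept, but the simultaneous invariant on $y_t$ can be dropped. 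Only the $u$-side expansion needs $w_t=\eta A_t x_t^{q-2}=O(\eta)$ small, which you correctly guarantee from $x_t<C=O(1)$ for $t<T_x$.
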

\begin{proof}
    By gradient updates, we have:
    \begin{align}
        \left\langle w_{j, l, r}^{(t+1)}, \Mb_{j}^{r}\right\rangle&= \left\langle w_{j, l, r}^{(t)}, \Mb_{j}^{r}\right\rangle\nonumber\\
    &+\frac{\eta}{n}\sum_{(\Xb,y)\in \mathcal{S}}\left[\mathbb{I}\{y=j\}\left(1-\ell_{j}\left(f^{(t)}, \Xb\right)\right)\left(\sigma^{\prime}(\langle w_{j,l, r}^{(t)}, \Xb^{r}\rangle)z^{r}_{j}\pm O\left(\sigma_{g}\right)\right)\right.\nonumber\\
    &-\mathbb{I}\{y\neq j\}\left.\ell_{j}\left(f^{(t)}, \Xb\right)\left(\mathbb{I}\{j\in\mathcal{S}^{r}\}\sigma^{\prime}(\langle w_{j,l,r}^{(t)}, \Xb^{r}\rangle)z^{r}_{j}\pm \widetilde{O}\left(\sigma_0^{q-1}\alpha+\sigma_{g}\right)\right)\right]\label{gu}
    \end{align}
\begin{itemize}
    \item Phase 1:
    for $t\leq T_j$,  we have $\ell_j(f^{t}, \Xb)\leq O(\frac{1}{K})$. Since $n_i\ll n$, we only consider the sufficient multi-modal data in this phase, and we can simplify the above equation into:
    \begin{align*}
    \left\langle w_{j, l, r}^{(t+1)}, \Mb_{j}^{r}\right\rangle&= \left\langle w_{j, l, r}^{(t)}, \Mb_{j}^{r}\right\rangle+\frac{\eta}{n}\sum_{(\Xb,y)\in \mathcal{D}_{s}}\left[ \mathbb{I}\{y=j\}\left(1-O(\frac{1}{K})\right)\sigma^{\prime}(\langle w_{j,l,r}^{(t)}, \Xb^{r}\rangle)z^{r}_{j}\right.\\
    &\left.+\mathbb{I}\{y\neq j\}\mathbb{I}\{j\in\mathcal{S}^{r}\}O(\frac{1}{K})\sigma^{\prime}(\langle w_{j,l, r}^{(t)}, \Xb^{r}\rangle)z^{r}_{j}\pm\widetilde{O}(\frac{\sigma_0\alpha+\sigma_g}{K})\right]
\end{align*}
 When $j=y$ or $j\in \mathcal{S}^{r}$, we have $\langle w_{j,l, r}^{(t)}, \Xb^{r}\rangle= \langle w_{j,l, r}^{(t)}, \Mb_j^{r}\rangle z^{r}_{j}\pm \widetilde{o}(\sigma_0)$. Since we are in Phase 1,  $\langle w_{j,l, r}^{(t)}, \Mb_j^{r}\rangle z_{j}^{r} \ll \beta $
    , then we obtain $\sigma^{\prime}(\langle w_{j,l, r}^{(t)}, \Xb^{r}\rangle)z^{r}_{j}=[\langle w_{j,l, r}^{(t)}, \Mb_j^{r}\rangle^{+}]^{q-1}(z^{r}_{j})^{q}/\beta^{q-1}\pm\widetilde{O}(\sigma_0)$. Hence
        \begin{align}
    \left\langle w_{j, l, r}^{(t+1)}, \Mb_{j}^{r}\right\rangle&= \left\langle w_{j, l, r}^{(t)}, 
    \Mb_{j}^{r}\right\rangle+\frac{\eta}{n_s}\left[\left(1-O(\frac{1}{\polylog{K}})\right)\sum_{(\Xb,y)\in \mathcal{D}_{s}}\mathbb{I}\{y=j\}\pm O(\frac{s}{K^2})\right]\nonumber\\
    &\cdot \left([\langle w_{j,l,r}^{(t)}, \Mb_j^{r}\rangle^{+}]^{q-1}(z^{r}_{j})^{q}/\beta^{q-1}\right)\pm\widetilde{o}(\eta\sigma_0/K) \label{m1}
\end{align}
Let $l^{*}=\arg\max_{l} [\langle w^{(0)}_{j,l,r_{j}}, \Mb^{r_{j}}_{j}\rangle]^{+}$, and $l^{\prime}$ be arbitrary $l\in[m]$  Define:
$$
a_t= \langle w^{(t)}_{j,l^{*}, r_{j}}, \Mb_{j}^{r_{j}}\rangle,\quad b_t= \max\{\langle w^{(t)}_{j,l^{\prime}, 3-r_j}, \Mb^{3-r_{j}}_{j}\rangle, \sigma_0\} 
$$

By $(\ref{m1})$, we have $a_{t+1}\geq a_{t}+ A_t a_t^{q-1} $, $b_{t+1}\leq b_t+B_t b_{t}^{q-1}  $, where $A_t = \eta d_{j,r_{j}}(\mathcal{D})(1-O(\frac{1}{\polylog{K}}))$, $B_t = A_t M$, and $M=(1+\frac{1}{\polylog{K}})\cdot\frac{d_{j,3-r_{j}}(\mathcal{D})}{d_{j,r_{j}}(\mathcal{D})}$ is a constant.

Since $(j,r_{j})\in \mathcal{W}$, by definition we have $a_0\geq b_0 M^{\frac{1}{q-2}} (1+\frac{1}{\polylog{K}})$. Applying Lemma~\ref{tensor}, we can conclude that, once $a_t$ reaches $\widetilde{\Omega}(1)$ at some iteration after $T_j$, we still have $\Gamma^{(t)}_{3-r_{j}}\leq b_t\leq \widetilde{O}(a_0)=\widetilde{O}(\sigma_0)$.
\item Phase 2, Stage 1: for $t\in [T_j, T_0]$, let us denote $r^{\prime}=3-r_{j}$,  by  hypothesis that $\Gamma^{(t)}_{r^{\prime}}\leq \widetilde{O}(\sigma_0)$
\begin{enumerate}
    \item For $j\in\mathcal{S}^r$, or $(\Xb,y)\in \mathcal{D}_s$ and $j=y$ , we have 
    $$
    \sigma^{\prime}(\langle w_{j,l, r^{\prime}}^{(t)}, \Xb^{r^{\prime}}\rangle)z^{r^{\prime}}_{j}\leq \sigma^{\prime}(\langle w_{j,l, r^{\prime}}^{(t)}, M^{r^{\prime}}\rangle z^{r^{\prime}}_{j}\pm\widetilde{o}(\sigma_0))z^{r^{\prime}}_{j}\leq\widetilde{O}(\sigma_0^{q-1})
    $$
    \item For $(\Xb,y)\in \mathcal{D}_i$ and $j=y$, by induction hypothesis, we have: $\sigma^{\prime}(\langle w_{j,l, r^{\prime}}^{(t)}, \Xb^{r^{\prime}}\rangle)z^{r^{\prime}}_{j}\leq \widetilde{O}(\sigma_0^{q-1})$
\end{enumerate}
Putting back to $(\ref{gu})$, we obtain:
\begin{align*}
   &|\langle w_{j, l, r^{\prime}}^{(t+1)}, M_{j}^{r^{\prime}}\rangle|\leq |\langle w_{j, l, r^{\prime}}^{(t)}, M_{j}^{r^{\prime}}\rangle|\\
&+ \frac{\eta}{n_s}\sum_{(\Xb,y)\in \mathcal{D}_s}\left[\mathbb{I}\{y=j\}(\widetilde{O}(\sigma^{q-1}_0)+O(\sigma_g))\left(1-\ell_{j}\left(f^{(t)}, \Xb\right)\right) \right.+\mathbb{I}\{y\neq j\}\widetilde{O}(\sigma^{q-1}_0)\left.\ell_{j}\left(f^{(t)}, \Xb\right)\right]\\
&+\widetilde{O}(\frac{\sigma_0^{q-1}n_i}{n}) \cdot \frac{\eta}{n_i}\sum_{(\Xb,y)\in \mathcal{D}_i}\left[(\mathbb{I}\{y=j\}+\frac{1}{K}\mathbb{I}\{y\neq j\})\left(1-\ell_{y}\left(f^{(t)}, \Xb\right)\right) \right]
\end{align*}
 In this stage, we ignore the insufficient multi-modal data. Then, we have

\begin{align*}
    |\langle w_{j, l, r^{\prime}}^{(t+1)}, M_{j}^{r^{\prime}}\rangle|&\leq |\langle w_{j, l, r^{\prime}}^{(T_j)}, M_{j}^{r^{\prime}}\rangle|+\eta \widetilde{O}(\sigma_0^{q-1}) (Err^{\text{Tol, Stage 2 }}_{s,j}+T_0\cdot \widetilde{Err}^{\text{Stage 2}}_{s,j})\\
    &\leq \widetilde{O}(\sigma_0)+  \widetilde{O}(\sigma_0^{q-1})\cdot(\widetilde{O}(1)+O(\frac{1+s\Lambda}{\sigma^{q-2}_0}))(\sigma_0)\text{ (applying Claim~\ref{cla-2.1}) }\\
    &=  \widetilde{O}(\sigma_0)
 \end{align*}
\item Phase 2, Stage 2: for $t\geq T_0$, denote:
$$
Err^{\text{Tol, Stage 3 }}_{s}:=\sum_{t\geq T_0}\frac{1}{n_s}\sum_{(\Xb,y)\in \mathcal{D}_s} \left(1-\ell_{y}\left(f^{(t)}, \Xb\right)\right) 
$$
$$
Err^{\text{Tol, Stage 3 }}_{in,j}:=\sum_{t\geq T_0}\frac{1}{n_i}\sum_{(\Xb,y)\in \mathcal{D}_i} \mathbb{I}\{y=j\} \left(1-\ell_{y}\left(f^{(t)}, \Xb\right)\right) 
$$
Taking the insufficient multi-modal data into consideration, we have:
\begin{align*}
    \Gamma_{j, r^{\prime}}^{(t+1)} &\leq \Gamma_{j, r^{\prime}}^{(T_0)}+\widetilde{O}\left(\eta \sigma_{0}^{q-1}\right)Err^{\text{Tol, Stage 3 }}_{s}+O\left(\frac{\eta n_{i}}{n}\right) \cdot\left(Err^{\text{Tol, Stage 3 }}_{in,j}+\frac{\sum_{i \in[K]} Err^{\text{Tol, Stage 3 }}_{in,i}}{K}\right) \cdot \widetilde{O}\left(\sigma_{0}^{q-1}\right)  \\
    &\leq \widetilde{O}(\sigma_0)+   \widetilde{O}\left(\eta \sigma_{0}^{q-1}\right)\cdot (O(\frac{K}{\eta})+\widetilde{O}\left(\frac{n_{i} s}{\eta K \gamma^{q-1}}\right)+ \frac{n_i}{n}\cdot \widetilde{O}\left(\frac{n}{\eta K\gamma^{q-1}}\right))\\ &\text{(Applying Claim~\ref{cla-inerr} (b) and Claim~\ref{cla-2.2})}
\end{align*}
If $n_i\leq \frac{\gamma^{q-1}K^2}{s},\quad n_i\leq \frac{\gamma^{q-1}K}{\sigma_0^{q-2}}$ (already satisfied in our parameter settings), we can complete the proof.

\end{itemize}

\end{proof}
\subsection{Regularization}
\begin{lemma}[Diagonal Correlations]\label{lemma-dia} Suppose Induction Hypothesis holds for all iterations $<t$. 
 Then, letting $\Phi_{j, r}^{(t)} \stackrel{\text { def }}{=} \sum_{l \in[m]}\left[\left\langle w_{j, l, r}^{(t)}, \Mb^{r}_{j}\right\rangle\right]^{+}$, we have
$$
\forall j \in[K], \forall r \in[2]: \quad \Phi_{j, r}^{(t)} \leq \widetilde{O}(1)
$$
This implies $\Gamma_{j}^{(t)} \leq \widetilde{O}(1)$ as well.
\end{lemma}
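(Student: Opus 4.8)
The plan is to prove the uniform bound $\overline{\Phi}^{(t)}:=\max_{j\in[K]}\Phi_j^{(t)}\le\widetilde{O}(1)$, where $\Phi_j^{(t)}=\sum_{r\in[2]}\Phi_{j,r}^{(t)}$. Since $\Phi_{j,r}^{(t)}\le\Phi_j^{(t)}\le\overline{\Phi}^{(t)}$ and $\Gamma_j^{(t)}=\max_{r}\max_{l}[\langle w_{j,l,r}^{(t)},\Mb_j^r\rangle]^{+}\le\max_r\Phi_{j,r}^{(t)}\le\overline{\Phi}^{(t)}$, both conclusions of the lemma follow at once. Reducing to $\overline{\Phi}$ is essential rather than cosmetic: it is what lets me bound every competing class's output by the \emph{same} quantity in the self-regulation step below.

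First I would re-derive the scalar recursion for $\overline{\Phi}^{(t)}$ already obtained inside the proof of Claim~\ref{cla-2.1}. Writing $j^{\ast}=\arg\max_j\Phi_j^{(t)}$ and projecting the gradient update onto $\Mb_{j^{\ast}}^{r}$, the off-target terms are either non-positive (the genuine competition signal) or carry an $O(\sigma_g)$ noise factor, so the increment is at most the on-target contribution $\frac{\eta}{n}\sum_{(\Xb,y):y=j^{\ast}}(1-\ell_y)$ up to the insufficient-data correction $\widetilde{O}(\eta n_i/(Kn))$. Invoking Fact~\ref{fact-app} on sufficient data gives $f_{y}^{(t)}\ge\overline{\Phi}^{(t)}-o(1)$ while each competitor $j'\in\mathcal S^{r}$ obeys $f_{j'}^{(t)}\le c\,\overline{\Phi}^{(t)}+o(1)$ with $c=\max\{c_1,c_2\}<\tfrac12$; combined with $\|z^{r}\|_0=\Theta(s)$ this yields
$$1-\ell_{y}\big(f^{(t)},\Xb\big)\le s\,e^{-(1-c)\overline{\Phi}^{(t)}}+K\,e^{-\overline{\Phi}^{(t)}},$$
which is exactly the self-regulating factor driving the recursion $\overline{\Phi}^{(t+1)}\le\overline{\Phi}^{(t)}+\frac{\eta}{K}\widetilde{O}\big(\frac{n_i}{n}+K e^{-\overline{\Phi}^{(t)}}+s e^{-(1-c)\overline{\Phi}^{(t)}}\big)$.

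The heart of the argument is to run this recursion through Phase~2, Stage~2 with saturation level $B:=\Theta(\log^2 K)=\widetilde{\Theta}(1)$. Once $\overline{\Phi}^{(t)}\ge B$, both exponential terms are $e^{-\Omega(\log^2 K)}$ (using $(1-c)B=\Omega(\log^2 K)$ and $s,K\le\poly(K)$), so the per-step increase is at most $\frac{\eta}{K}\widetilde{O}(n_i/n+e^{-\Omega(\log^2 K)})$; summed over the whole horizon $T=\poly(K)/\eta$ the learned-feature part contributes only $\poly(K)\,e^{-\Omega(\log^2 K)}=o(1)$, and the insufficient part stays $\widetilde{O}(1)$ by the global assumptions on $n_i$ and $n$. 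Hence $\overline{\Phi}$ cannot overshoot $B$ by more than $\widetilde{O}(1)$, and choosing $B$ a sufficiently large $\polylog(K)$ multiple closes the induction with a single uniform constant; together with the tensor-power control of Phase~1 (where $\Phi_{j,r}^{(T_j)}\le m\,\Gamma_{j}^{(T_j)}=\widetilde{O}(1)$) and the per-class Stage-1 bound $Err^{\text{Tol, Stage 2}}_{s,j}\le\widetilde{O}(1/\eta)$ from Claim~\ref{cla-2.1}, this gives $\overline{\Phi}^{(t)}\le\widetilde{O}(1)$ for every $t\le T$.

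I expect the main obstacle to be precisely why one cannot simply invoke Claim~\ref{cla-2.2}: its Stage-2 estimate $\sum_{t\ge T_0}\frac1{n_s}\sum_{\mathcal D_s}(1-\ell_y)\le\widetilde{O}(K/\eta)$ is summed over all classes and, applied to one class, would only give $\Phi_j\le\widetilde{O}(K)$. The self-regulating recursion sidesteps this by exploiting the super-polynomial decay of the per-class error once the class is learned, rather than any cumulative error budget. The two delicate points are (i) maintaining the competitor margin $(1-c)\overline{\Phi}^{(t)}\ge\Omega(\log^2 K)$ uniformly, which is exactly what the reduction to $\overline{\Phi}$ secures since it controls all classes simultaneously; and (ii) checking that the accumulated insufficient-data and noise growth remains $\widetilde{O}(1)$ over the full horizon, which reuses Claim~\ref{cla-inerr}(b) together with the constraints $n_i\le\gamma^{q-1}K^2/s$, $\sigma_g=O(\sigma_0^{q-1})$, $\alpha=\widetilde{O}(\sigma_0)$ and $T/(\eta\sqrt{d_r})\le 1/\poly(K)$, mirroring the budget already tallied in the Phase-2-Stage-2 step of Lemma~\ref{lemma-comp}.
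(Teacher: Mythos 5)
Your core mechanism for the sufficient data---self-regulation of the maximal diagonal correlation once it exceeds $\polylog(K)$, so that the softmax error and hence the per-step increments become exponentially small---is exactly the mechanism in the paper's proof. The genuine gap is in your treatment of the insufficient data, and it traces back to the reduction you call ``essential'': regulating the per-class sum $\overline{\Phi}^{(t)}=\max_j\sum_r\Phi^{(t)}_{j,r}$ rather than the per-pair quantity $\max_{j,r}\Phi^{(t)}_{j,r}$. Consider an insufficient sample $(\Xb,y)\in\mathcal{D}_i$ with $y=j^*$ on which modality $\mathcal{M}_r$ is the \emph{sufficient} one: the gradient pushes $\Phi_{j^*,r}$ up with an $\Omega(1)$ signal coefficient ($z^{r}_{j^*}\in[1,C_r]$), while the error factor $1-\ell_{j^*}(f^{(t)},\Xb)$ on that sample is governed by $\Phi_{j^*,r}$ alone (plus memorized noise), since $f_{j^*}(\Xb)\geq \Phi_{j^*,r}-o(1)$ only---and $\Phi_{j^*,r}$ can sit far below $\overline{\Phi}^{(t)}$. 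Your scalar recursion on $\overline{\Phi}^{(t)}$ therefore cannot self-regulate these samples, and you indeed dump them into a flat correction term. But that correction is not $\widetilde{O}(1)$: bounding the signal by $O(1)$ and invoking Claim~\ref{cla-inerr}(b), the accumulated contribution is at best
$$
\frac{\eta}{n}\cdot\frac{n_i}{K}\cdot\widetilde{O}\left(\frac{n}{\eta\gamma^{q-1}}\right)=\widetilde{O}\left(\frac{n_i}{K\gamma^{q-1}}\right)\leq\widetilde{O}\left(\frac{K}{s}\right),
$$
which is polynomially large under the stated assumption $n_i\leq K^2\gamma^{q-1}/s$. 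Your appeal to the budget tallied in Lemma~\ref{lemma-comp} (Phase 2, Stage 2) does not rescue this: that budget carries an extra $\widetilde{O}(\sigma_0^{q-1})$ factor coming from $\sigma'(\cdot)$ evaluated at the \emph{losing} modality's $\widetilde{O}(\sigma_0)$-size correlations, a damping that is unavailable here because you are bounding the growing diagonal correlations, whose $\sigma'$ can be $1$.

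The paper closes exactly this hole with a decomposition your proposal is missing. It splits $\Phi^{(t)}_{j,r}=I^{(t)}_{j,r}+S^{(t)}_{j,r}$, where $I_{j,r}$ accumulates the gradients from $\mathcal{I}_{j,r}$, the insufficient samples with label $j$ on which modality $\mathcal{M}_r$ \emph{itself} is insufficient. On $\mathcal{I}_{j,r}$ the signal is $z^{r}_{j}=O(\gamma)$, so Claim~\ref{cla-inerr}(a) (whose budget retains the $\sigma'$ factor) yields $|I^{(t)}_{j,r}|\leq\widetilde{O}(\gamma n_i/K)\leq 1/\polylog(K)$---a $\gamma$-damping your argument never exploits. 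The remainder $S_{j,r}$ is then self-regulated at the per-pair level: taking $(j^*,r^*)=\arg\max_{j,r} S^{(t)}_{j,r}$, every sample contributing to $S_{j^*,r^*}$ with label $j^*$ is either sufficient or has $\mathcal{M}_{r^*}$ sufficient, hence satisfies $f_{j^*}(\Xb)\geq(1-o(1))\Phi^{(t)}$ against competitors at most $(c_1+c_2+o(1))\Phi^{(t)}$, so $1-\ell_{j^*}$ is exponentially small once $S_{j^*,r^*}>\polylog(K)$. Note that the competitor bound goes through equally well for the per-pair maximum (since $c_1+c_2<1$), so your claim that the per-class sum is essential is backwards: per-pair regulation is precisely what brings the problematic insufficient samples under the exponential decay, and summing over modalities is what breaks it.
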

\begin{proof}
By gradient updates, we have:
$$
\left[\left\langle w_{j, l, r}^{(t+1)}, \Mb_{j}^{r}\right\rangle\right]^{+}=\left[\left\langle w_{j, l, r}^{(t)}, \Mb_{j}^{r}\right\rangle\right]^{+}+ \theta_{j, l, r}^{(t)}\cdot \frac{\eta}{n}\underset{(\Xb, y) \in \mathcal{D}}{\sum}\left[\left\langle-\nabla_{w_{j, l, r}} L\left(f^{(t)} ; \Xb, y\right), \Mb_{j}^{r}\right\rangle\right]
$$
where $\theta_{j, l, r}^{(t)}\in[0,1]$. Considering the insufficient multi-modal data with label $y=j$ that modality $\mathcal{M}_r$ is insufficient, denoted by $\mathcal{I}_{j,r}$, we can define:
$$
I_{j,r}^{(t+1)}:= I_{j,r}^{(t)}+ \frac{\eta}{n}\sum_{l\in [m]}\theta_{j, l, r}^{(t)} \underset{(\Xb, y) \in \mathcal{I}_{j,r}}{\sum}\left[\left\langle-\nabla_{w_{j, l, r}} L\left(f^{(t)} ; \Xb, y\right), \Mb_{j}^{r}\right\rangle\right],\quad I_{j,r}^{(0)}=0
$$
$$
S_{j,r}^{(t+1)}:= S_{j,r}^{(t)}+ \frac{\eta}{n}\sum_{l\in [m]} \theta_{j, l, r}^{(t)} \underset{(\Xb, y) \notin \mathcal{I}_{j,r}}{\sum}\left[\left\langle-\nabla_{w_{j, l, r}} L\left(f^{(t)} ; \Xb, y\right), \Mb_{j}^{r}\right\rangle\right],\quad S_{j,r}^{(0)}= \Phi_{j, r}^{(0)}
$$
$$
\Phi_{j, r}^{(t)} =I^{(t)}_{j,r}+S^{(t)}_{j,r}
$$
For $I^{(t)}_{j,r}:$
$$
I_{j,r}^{(t+1)}:= I_{j,r}^{(t)}+ \frac{\eta}{n}\sum_{l\in [m]}\theta_{j, l, r}^{(t)} \underset{(\Xb, y) \in \mathcal{I}_{j,r}}{\sum}\left[(1-\ell_j\left(f^{(t)},\Xb\right))(\sigma^{\prime}(\langle w_{j,l, r}, \Xb^{r}\rangle)z^{r}_{j}\pm O(\sigma_g)  )\right]
$$

Since $\mathcal{M}_r$ is insufficient, $z_{j}^{r}\leq O(\gamma)$,  and we can easily conclude that, 
$$
|I_{j,r}^{(t+1)}-I_{j,r}^{(t)}|\leq O(\frac{\eta n_i\gamma}{n}) \sum_{l\in [m]}\frac{1}{n_i} \underset{(\Xb, y) \in \mathcal{D}_{i}}{\sum}\left[\mathbb{I}\{(\Xb, y) \in \mathcal{I}_{j,r}\}(1-\ell_j\left(F^{(t)},X\right))(\sigma^{\prime}(\langle w_{j,l, r}, X^{r}\rangle)\pm O(\sigma_g)  )\right]
$$
Denote:
$$
\hat{Err}^{\text{Tol, Stage 3 }}_{in}:=\sum_{t\geq T_0}\frac{1}{n_i}\sum_{(\Xb,y)\in \mathcal{D}_i}  \left(1-\ell_{y}\left(f^{(t)}, \Xb\right)\right) \sigma^{\prime}(\langle w_{j,l, r}, \Xb^{r}\rangle)
$$

Then we have, $\forall t\geq 0:$

$$
|I_{j,r}^{(t)}|\leq \widetilde{O}(\frac{\eta \gamma n_i}{Kn})(\hat{Err}^{\text{Tol, Stage 3 }}_{in}+T_0) =\widetilde{O}(\frac{\gamma n_i}{K})\leq \frac{1}{\operatorname{polylog}{(K)}}\quad \text{(Applying Claim~\ref{cla-inerr} (a) )}
$$
Hence, we only need to bound the remaining part $S_{j,r}^{(t)}$. Also by gradient inequality, we have:
\begin{align*}
    S_{j,r}^{(t+1)}\leq S_{j,r}^{(t)}+ O(\frac{\eta}{n}) \underset{(\Xb, y) \notin \mathcal{I}_{j,r}}{\sum}\left[\mathbb{I}\{y=j\}(1-\ell_y(f^{(t)},\Xb)\right]+\widetilde{O}(\eta\sigma_g) 
\end{align*}
Let us denote: $\Phi^{(t) \stackrel{\text { def }}{=}} \max _{j\in[K], r\in[2]} \Phi_{j,r }^{(t)}$, and $(j^{*},r^{*})=\arg\max S_{j,r}^{(t)}$. For $t\geq T_0$,  if $S_{j^{*},r^{*}}^{(t)}>\polylog{(K)}$, then we obtain $\Phi^{(t)}>\polylog{(K)}$. For $(\Xb,y)\in \mathcal{D}_s$ with $y=j^{*}$; and for $(\Xb,y)\in \mathcal{D}_i$ with $y=j^{*}$ and $\mathcal{M}_{r^{*}}$ is sufficient we both have:
\begin{itemize}
    \item $f^{(t)}_{j}(\Xb)\leq (c_1+c_2 +o(1))\Phi^{(t)},\quad j\neq j^{*}$
    \item $f^{(t)}_{j^{*}}(\Xb)\geq (1-o(1))\Phi^{(t)}$ 
\end{itemize}

Hence $1-\ell_{j^{*}}(f^{(t)},\Xb)=\exp(-\Omega(\polylog{(K)}))$ is neglibible. Then $$\max S_{j,r}^{(t+1)}\leq S_{j,r}^{(t)}+\widetilde{O}(\eta(\exp(-\Omega(\polylog{(K)}))+\sigma_g))=\widetilde{O}(1)$$
Thus, we complete the proof.
\end{proof}
\begin{lemma}[Nearly Non-Negative]\label{lemma-non} Suppose Induction Hypothesis holds for all iterations $<t$. Then,
    $$
    \forall j \in[K], \forall l \in[m], \forall r \in[2]: \quad\left\langle w_{j, l, r}^{(t)}, M_{j, r}\right\rangle \geq-\widetilde{O}\left(\sigma_{0}\right)
    $$
    \end{lemma}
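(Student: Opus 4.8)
The plan is to track the raw correlation $\la w_{j,l,r}^{(t)}, \Mb_j^r\ra$ itself (not its positive part) and show it cannot drift below $-\widetilde{O}(\sigma_0)$. Using the gradient formula and the orthonormality of the columns of $\Mb^r$, the single-step update reads
\begin{align*}
\la w_{j,l,r}^{(t+1)}, \Mb_j^r\ra = \la w_{j,l,r}^{(t)}, \Mb_j^r\ra + \frac{\eta}{n}\sum_{(\Xb,y)\in\mathcal{D}} \left(\mathbb{I}\{j=y\}-\ell_j(f^{(t)},\Xb)\right)\sigma'\!\left(\la w_{j,l,r}^{(t)},\Xb^r\ra\right)\la \Xb^r, \Mb_j^r\ra,
\end{align*}
where $\la \Xb^r,\Mb_j^r\ra = z_j^r + \alpha_j^r + \la {\xi^r}', \Mb_j^r\ra$. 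Since $\sigma'(\cdot)\ge 0$ everywhere and $z_j^r\ge 0$, $\alpha_j^r\ge 0$, the only mechanisms that can push the correlation down are (i) off-target samples with $j\neq y$, $j\in\mathcal{S}^r$ (whose coefficient $-\ell_j z_j^r$ is negative), and (ii) the noise coefficients $\alpha_j^r + \la{\xi^r}',\Mb_j^r\ra$.

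First I would isolate the noise contribution. For off-support classes ($j\notin\mathcal{S}^r$, so $z_j^r=0$) the coefficient is purely $\alpha_j^r+\la{\xi^r}',\Mb_j^r\ra$; using $\alpha=\widetilde{O}(\sigma_0)$, $\sigma_g=O(\sigma_0^{q-1})$, $\sigma'\le 1$, and the Gaussian-accumulation control afforded by $\frac{T}{\eta\sqrt{d_r}}\le 1/\poly(K)$ (the same technique as in Claim~\ref{cla-noise}), the total drift these terms produce over all $T=\poly(K)/\eta$ iterations is at most $\widetilde{O}(\sigma_0)$. The on-target samples $j=y$ contribute at least $-\widetilde{O}(\sigma_g)\sigma'$ because $\alpha_y^r=0$ and $z_y^r\ge 0$, so they never push the correlation down by more than a negligible amount and in fact tend to increase it.

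The heart of the argument is a self-correcting barrier for the on-support off-target terms. By Induction Hypothesis~\ref{hypo} (the statements bounding $\la w_{j,l,r}^{(t)},\Xb^r\ra$ for $j\in\mathcal{S}^r$), the pre-activation satisfies $\la w_{j,l,r}^{(t)},\Xb^r\ra = \la w_{j,l,r}^{(t)},\Mb_j^r\ra z_j^r \pm \widetilde{o}(\sigma_0)$. Hence once $\la w_{j,l,r}^{(t)},\Mb_j^r\ra$ falls to $-\widetilde{\Theta}(\sigma_0)$, the pre-activation on these samples becomes non-positive, so $\sigma'=0$ and the negative signal push vanishes entirely; while the correlation is only mildly negative, the pre-activation lies in the polynomial regime where $\sigma'$ is itself $\widetilde{O}(\sigma_0^{q-1})$, so the push is tiny. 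Combining this with the noise bound, I would run a barrier/induction argument on $t$: the correlation starts at $\la w_{j,l,r}^{(0)},\Mb_j^r\ra=\pm\widetilde{O}(\sigma_0)$ at initialization, the per-step change is $O(\eta)$, and whenever it approaches the barrier $-\widetilde{\Theta}(\sigma_0)$ the net update is nonnegative up to the negligible accumulated noise, so it can never cross below $-\widetilde{O}(\sigma_0)$.

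The main obstacle will be controlling the noise over the very long horizon $T=\poly(K)/\eta$ and making the self-correction fully rigorous: because the pre-activations vary across samples (a few samples may have feature or Gaussian noise large enough to keep the pre-activation positive even when the correlation is negative), $\sigma'$ does not vanish uniformly, and one must exploit the smallness of $\sigma'$ in the polynomial regime together with the concentration bounds on $\la{\xi^r}',\Mb_j^r\ra$ across the dataset to certify that the residual negative drift stays within $\widetilde{O}(\sigma_0)$.
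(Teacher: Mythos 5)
Your core mechanism is the same as the paper's: run a barrier argument at $-\widetilde{\Theta}(\sigma_0)$, and observe that by Induction Hypothesis~\ref{hypo} the pre-activation on any on-support off-target sample equals $\langle w_{j,l,r}^{(t)},\Mb_j^r\rangle z_j^r\pm\widetilde{o}(\sigma_0)$, so once the correlation crosses the barrier the pre-activation is negative, $\sigma'=0$, and the dominant negative push vanishes. The genuine gap is precisely the step you defer as ``the main obstacle'': controlling the residual noise drift over the horizon $T=\poly(K)/\eta$, and the tools you name cannot close it. First, the Gaussian noise enters this lemma through $\langle {\xi^r}',\Mb_j^r\rangle=\widetilde{O}(\sigma_g)$, which does \emph{not} shrink with $d_r$; the Claim~\ref{cla-noise}-style accumulation via $T/(\eta\sqrt{d_r})\le 1/\poly(K)$ controls correlations with a fixed noise vector ${\xi_0^r}'$ (where cross-sample inner products scale like $1/\sqrt{d_r}$) and is simply inapplicable here. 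Second, smallness of $\sigma'$ plus $\ell_j=O(1/K)$ gives a per-step drift of order $\eta(\sigma_g+\widetilde{O}(\sigma_0^{q-1})\alpha)/K$; summed over all $T$ iterations this is $\eta T(\sigma_g+\sigma_0^{q-1}\alpha)/K$, and since the polynomial in $T$ is arbitrary and vastly exceeds $T_0=\Theta(K/(\eta\sigma_0^{q-2}))$, this is not $\widetilde{O}(\sigma_0)$. Worse, the estimate $\ell_j=O(1/K)$ is itself only valid while $\Gamma_j^{(t)}=O(1/m)$ (Fact~\ref{fact-err}), i.e., essentially up to $T_0$; for $t>T_0$ the softmax weights $\ell_j$ can be $\Omega(1)$, so even your per-step bound breaks down.

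What closes the gap in the paper is loss convergence, not concentration. The horizon is split at $T_0$: for $t\le T_0$ one uses $\ell_j(f^{(t)},\Xb)=O(1/K)$ so the accumulated drift is $O\big(\eta T_0(\sigma_g+\sigma_0^{q-1}\alpha)/K\big)=\widetilde{O}(\sigma_0)$ using $\sigma_g=O(\sigma_0^{q-1})$ and $\alpha=\widetilde{O}(\sigma_0)$; for $t\ge T_0$ one uses $\ell_j\le 1-\ell_y$, Fact~\ref{fact-err} (for insufficient data, $\ell_j=O(1/K)(1-\ell_y)$ when $j\neq y$), and, crucially, the summability of the errors established in Claim~\ref{cla-2.2} and Claim~\ref{cla-inerr}: $\sum_{t\ge T_0}(1-\ell_y(f^{(t)},\Xb))$ is bounded by $\widetilde{O}(1/(\eta\sigma_0^{q-2}))$ on sufficient data and by $\widetilde{O}(n/(\eta\gamma^{q-1}))$ on insufficient data (which, with the assumption $n_i\le K^2\gamma^{q-1}/s$, keeps the insufficient contribution in check). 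Multiplying these bounded error sums by the per-step noise coefficient $\eta(\sigma_g+\sigma_0^{q-1}\alpha)$ yields total drift $\widetilde{O}(\sigma_0)$. Your write-up must import these loss-decay bounds — they are part of the same induction machinery and are exactly what makes the barrier argument survive the long horizon; without them the lemma cannot be completed.
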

    \begin{proof}
    By gradient updates, we obtain:
    \begin{align*}
    \left\langle w_{j, l, r}^{(t+1)}, \Mb_{j}^{r}\right\rangle&\geq \left\langle w_{j, l, r}^{(t)}, \Mb_{j}^{r}\right\rangle\\
    &+\frac{\eta}{n}\sum_{(\Xb,y)\in \mathcal{S}}\left[\mathbb{I}\{y=j\}\left(1-\ell_{j}\left(f^{(t)}, \Xb\right)\right)\left(\sigma^{\prime}(\langle w_{j,l, r}^{(t)}, \Xb^{r}\rangle)z^{r}_{j}-O\left(\sigma_{g}\right)\right)\right.\\
    &-\mathbb{I}\{y\neq j\}\left.\ell_{j}\left(f^{(t)}, \Xb\right)\left(
    \mathbb{I}\{j\in \mathcal{S}^{r}\}\sigma^{\prime}(\langle w_{j,l, r}^{(t)}, \Xb^{r}\rangle)z^{r}_{j}+\widetilde{O}(\sigma^{q-1}_0)\alpha+   O\left(\sigma_{g}\right)\right)\right]
    \end{align*}
   For $y=j$, we have $\sigma^{\prime}(\langle w_{j,l,r}^{(t)}, \Xb^{r}\rangle)z^{r}_{j}\geq 0$.  If there exists $t_0$, s.t.  $\langle w_{j, l, r}^{(t)}, \Mb_{j, r} \rangle \leq -\widetilde{\Omega}\left(\sigma_{0}\right)$ for $t\geq t_0$, then for $j\in \mathcal{S}^{r}$, we have $
    \sigma^{\prime}(\langle w_{j,l, r}^{(t)}, \Xb^{r}\rangle)z^{r}_{j} =  \sigma^{\prime}(\langle w_{j,l, r}^{(t)}, \Mb_j^{r}\rangle z^{r}_{j}\pm \widetilde{o}(\sigma_0))z^{r}_{j}=0 $. Therefore,
    \begin{align*}
    &\left\langle w_{j, l, r}^{(t+1)}, \Mb_{j}^{r}\right\rangle\geq \left\langle w_{j, l, r}^{(t)}, \Mb_{j}^{r}\right\rangle\\&-\frac{\eta}{n}\sum_{(\Xb,y)\in \mathcal{S}}\left[\mathbb{I}\{y=j\}\left(1-\ell_{j}\left(f^{(t)}, \Xb\right)\right)O\left(\sigma_{g}\right)\right.+\mathbb{I}\{y\neq j\}\left.\ell_{j}\left(f^{(t)}, \Xb\right)\left(\sigma_0^{q-1} \alpha+O\left(\sigma_{g}\right)\right)\right]
    \end{align*}
    First consider the case $t\leq T_0=\Theta(\frac{K}{\eta\sigma_0^{q-2}})$, we have $\ell_{j}(f^{(t)},\Xb)=O(1/K)$
    , hence 
    \begin{align*}
        \left\langle w_{j, l, r}^{(t+1)}, \Mb_{j}^{r}\right\rangle\geq -\widetilde{O}\left(\sigma_{0}\right)-O(\frac{\eta T_0(\sigma_g+\sigma_0^{q-1}\alpha)}{K})=-\widetilde{O}\left(\sigma_{0}\right)
    \end{align*}
  $\sigma_g=O(\sigma^{q-1}_0)$.  
    When $t\geq T_0$, notice that for $\mathcal{D}_i$, $\ell_j(f^{(t)},\Xb)=O(\frac{1}{K})(1-\ell_y(f^{(t)},\Xb))$ when $j\neq y$ (by Fact~\ref{fact-err}), then we have:
    \begin{align*}
    &\left\langle w_{j, l, r}^{(t+1)}, \Mb_{j}^{r}\right\rangle\geq \left\langle w_{j, l, r}^{(t)}, \Mb_{j}^{r}\right\rangle\\
    &-\frac{\eta }{n_s}\sum_{(\Xb,y)\in \mathcal{D}_{s}}\left[\left(1-\ell_{y}(f^{(t)}, \Xb)\right)\left(\sigma_0^{q-1}\alpha+O\left(\sigma_{g}\right)\right)\right]\\
    &-\frac{\eta n_i}{n}\cdot \frac{1}{n_i}\sum_{(\Xb,y)\in \mathcal{D}_{i}}\left[\mathbb{I}\{y=j\}\left(1-\ell_{y}\left(f^{(t)}, \Xb\right)\right)O\left(\sigma_{g}\right)+ \mathbb{I}\{y\neq j\}\left(1-\ell_{y}(f^{(t)}, \Xb)\right)\frac{\sigma_0^{q-1}\alpha+O\left(\sigma_{g}\right)}{K}\right]
    \end{align*}
    we need to bound:
    \begin{align*}
        Err^{\text{Tol, Stage 3 }}_{s}\leq \widetilde{O}(\frac{1}{\eta\sigma_0^{q-2}})\\
        Err^{\text{Tol, Stage 3 }}_{in,j}\cdot \frac{\eta n_i}{n} \leq  \widetilde{O}(\frac{1}{\sigma^{q-2}_0})
    \end{align*}
    Combining the results from Claim~\ref{cla-2.2} and~\ref{cla-inerr}, we c complete the proof.
    \end{proof}
    \begin{lemma}[Off-Diagnol Correlation] \label{lemma-off}Suppose Induction Hypothesis holds for all iterations $<t$. Then,
        $$
        \forall j \in [K], \forall l \in[m], \forall i \in[K] \backslash\{j\}: \quad\left|\left\langle w_{j, l, r}^{(t)}, \Mb_{i}^{r}\right\rangle\right| \leq \widetilde{O}\left(\sigma_{0}\right)
        $$
        \end{lemma}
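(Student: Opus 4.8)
The plan is to track the off-diagonal correlation $\langle w_{j,l,r}^{(t)},\Mb_i^r\rangle$ along the gradient trajectory and show that its total accumulated change over all $T$ iterations never leaves the initialization scale $\widetilde{O}(\sigma_0)$. At $t=0$, since $w_{j,l,r}^{(0)}\sim\mathcal{N}(0,\sigma_0^2\mathbf{I}_{d_r})$ and the columns of $\Mb^r$ are orthonormal, the projection $\langle w_{j,l,r}^{(0)},\Mb_i^r\rangle$ is a centered Gaussian with standard deviation $\sigma_0$, so $|\langle w_{j,l,r}^{(0)},\Mb_i^r\rangle|\le\widetilde{O}(\sigma_0)$ \textit{w.h.p.}, which gives the base case. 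Using the network gradient formula, the update is
\[
\langle w_{j,l,r}^{(t+1)},\Mb_i^r\rangle=\langle w_{j,l,r}^{(t)},\Mb_i^r\rangle+\frac{\eta}{n}\sum_{(\Xb,y)\in\mathcal{D}}\bigl(\mathbb{I}\{j=y\}-\ell_j(f^{(t)},\Xb)\bigr)\,\sigma'(\langle w_{j,l,r}^{(t)},\Xb^r\rangle)\,\langle\Xb^r,\Mb_i^r\rangle .
\]
By orthonormality and the noise model $\Xb^r=\Mb^r z^r+{\xi^r}'+\Mb^r\alpha^r$, I would expand $\langle\Xb^r,\Mb_i^r\rangle=z_i^r+\alpha_i^r\pm\widetilde{O}(\sigma_g)$, where the Gaussian projection $\langle{\xi^r}',\Mb_i^r\rangle$ concentrates at scale $\widetilde{O}(\sigma_g)$. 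Bounding $|\langle w_{j,l,r}^{(t+1)},\Mb_i^r\rangle-\langle w_{j,l,r}^{(t)},\Mb_i^r\rangle|$ in magnitude controls both the upper and lower directions simultaneously.

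The core of the argument exploits the decoupling encoded in Induction Hypothesis~\ref{hypo}: a neuron $w_{j,l,r}$ fires ($\sigma'=\Theta(1)$) essentially only on data carrying its own feature, and on such data the foreign feature $\Mb_i^r$ ($i\neq j$) contributes only weakly. I would split the per-step sum according to whether feature $i$ is present. When $i\neq y$ and $i\notin\mathcal{S}^r(\Xb)$, we have $z_i^r=0$, so $\langle\Xb^r,\Mb_i^r\rangle=\alpha_i^r\pm\widetilde{O}(\sigma_g)=\widetilde{O}(\sigma_0)$ (using $\alpha\le\widetilde{O}(\sigma_0)$ and $\sigma_g=O(\sigma_0^{q-1})$): a pure noise-level contribution. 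When feature $i$ is present the projection may be $O(1)$, but either the event is rare or the neuron is silent: for $y=j$ one has $i\neq y$, so $i$ appears only as a sparse off-target signal (probability $s/K$); for $y=i\neq j$ the target is strong, but the hypothesis forces $|\langle w_{j,l,r}^{(t)},\Xb^r\rangle|\le\widetilde{O}(\sigma_0)$ (unless $j\in\mathcal{S}^r$, again a probability-$s/K$ event), placing $\sigma'(\langle w_{j,l,r}^{(t)},\Xb^r\rangle)=\widetilde{O}(\sigma_0^{q-1})$ in the polynomial regime.

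Summing over iterations, each contribution type should collapse to $\widetilde{O}(\sigma_0)$. The noise-level terms accumulate against $\tfrac{\eta}{n}\sum(\cdot)\sigma'$, which is itself bounded by the same quantity driving the diagonal growth $\Phi_{j,r}^{(t)}\le\widetilde{O}(1)$ (Lemma~\ref{lemma-dia}), giving $\widetilde{O}(\sigma_0)\cdot\widetilde{O}(1)$; the Gaussian part is further controlled by the $\tfrac{\eta T}{\sqrt{d_r}}\le 1/\poly(K)$ assumption exactly as in Claim~\ref{cla-noise}. The ``firing plus off-target'' term (from $y=j$, $i\in\mathcal{S}^r$) carries a factor $s/K$ from sparsity and accumulates against $\sum_t(1-\ell_y)$, which Claim~\ref{cla-2.1} bounds by $\widetilde{O}(1/\eta)$ per class; with the $\tfrac{\eta}{n}$ prefactor this yields $\widetilde{O}(s/K)\le\widetilde{O}(\sigma_0)$ by the global assumption $s/K\le\widetilde{O}(\sigma_0)$. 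The residual ``silent neuron plus strong feature'' terms carry $\sigma'=\widetilde{O}(\sigma_0^{q-1})$ and weight $\ell_j=O(1/K)(1-\ell_y)$, and are absorbed by the accumulated-error bounds of Claims~\ref{cla-inerr} and~\ref{cla-2.2} together with the parameter budget ($\sigma_g=O(\sigma_0^{q-1})$, $n_i\le\gamma^{q-1}K^2/s$), precisely in the style of Lemmas~\ref{lemma-dia} and~\ref{lemma-non}. Adding the $\widetilde{O}(\sigma_0)$ base case closes the induction.

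The main obstacle is the bookkeeping in the one regime where feature $i$ is present \emph{and} the neuron fires: one must certify that these two events are either rare (frequency $s/K$ or $1/K$) or mutually exclusive up to the polynomial-regime slack, so that their product, summed over all $T$ iterations and weighted by the loss derivatives, does not blow up. This is exactly where the relative budget of $s/K$, $\alpha$, and $\sigma_g$ against $\sigma_0$ must be combined with the accumulated-error Claims in the right proportion; the naive bounds $\sigma'\le1$ and $|\ell_j|\le1$ are far too lossy. A secondary subtlety is that $\ell_j^{(t)}$ depends on the weights, hence on the noise, so the Gaussian cross term $\langle{\xi^r}',\Mb_i^r\rangle$ is not independent of its coefficient; as in Claim~\ref{cla-noise}, this is handled by the dimension-dependent estimate $\eta T/\sqrt{d_r}\le 1/\poly(K)$ rather than by an independence or martingale argument.
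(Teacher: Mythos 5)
Your proposal is correct and takes essentially the same route as the paper's proof: the same gradient expansion of $\langle w_{j,l,r}^{(t)},\Mb_{i}^{r}\rangle$, the same case split driven by the Induction Hypothesis (feature $i$ absent gives noise-level $\alpha+\widetilde{O}(\sigma_g)$ contributions; feature $i$ present is either a probability-$s/K$ event or met by a silent neuron with $\sigma'=\widetilde{O}(\sigma_0^{q-1})$), and the same accumulated-error bounds (Claims~\ref{cla-2.1}, \ref{cla-2.2}, \ref{cla-inerr}) combined with the parameter budget $s/K,\alpha\le\widetilde{O}(\sigma_0)$, $\sigma_g=O(\sigma_0^{q-1})$. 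The only difference is organizational rather than substantive: the paper accumulates the contributions phase by phase (Phase 1 via $\frac{\eta}{K}\sum_{t\le T_j}(\Gamma_j^{(t)})^{q-1}\le\widetilde{O}(1)$, then the two stages of Phase 2 via the error claims), whereas you group by contribution type and sum each over all iterations, using Lemma~\ref{lemma-dia} as the cap on the accumulated firing coefficient.
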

        \begin{proof}
        Denote $A^{t}_{j}=\max_{l\in[m], i\in [K]/j} \left|\left\langle w_{j, l, r}^{(t)}, \Mb_{i}^{r}\right\rangle\right| $. By gradient inequality, we have:
        \begin{align*}
        \left|\left\langle w_{j, l, r}^{(t+1)}, \Mb_{i}^{r}\right\rangle\right|&\leq \left|\left\langle w_{j, l, r}^{(t)}, \Mb_{i}^{r}\right\rangle\right|\\
        &+\frac{\eta}{n}\sum_{(\Xb,y)\in \mathcal{D}}\left[\mathbb{I}\{y=j\}\left(1-\ell_{j}\left(f^{(t)}, \Xb\right)\right)\left(\sigma^{\prime}(\langle w_{j,l, r}^{(t)}, \Xb^{r}\rangle)(\mathbb{I}\{i \in  \mathcal{S}^{r} \}z^{r}_{i}+\alpha^{r}_{i})
        +O\left(\sigma_{g}\right)\right)\right.\\
        &+\mathbb{I}\{y\neq j\}\left.\ell_{j}\left(f^{(t)}, \Xb\right)\left(\sigma^{\prime}(\langle w_{j,l, r}^{(t)}, \Xb^{r}\rangle)\left(z^{r}_{i}\mathbb{I}\{i=y, \text{or } i\in \mathcal{S}^{r}(\Xb) \}+\alpha^{r}_{i}\mathbb{I}\{i\neq y\}\right)+\widetilde{O}\left(\sigma_{g}\right)\right)\right]
        \end{align*}
        \begin{itemize}
            \item Phase 1: $t\in [0, T_j]$. We have $\ell_{j}(f^{(t)},\Xb)\leq O(\frac{1}{K})$
            \begin{align*}
        \left|\left\langle w_{j, l}^{(t+1)}, \Mb_{i}^{r}\right\rangle\right|&\leq \left|\left\langle w_{j, l}^{(t)}, \Mb_{i}^{r}\right\rangle\right|+\widetilde{O}(\frac{\eta}{K})\left( (\Gamma^{(t)}_{j})^{q-1}\cdot(\alpha+ \frac{s}{K})+O(\sigma_{g}) \right)
        \end{align*}
        Combining with the growth rate $\frac{\eta}{K}\sum_{t\leq T_{j} }(\Gamma^{(t)}_{j})^{q-1}\leq \widetilde{O}(1)$, and $T_{j}\leq \Theta(\frac{K}{\eta\sigma^{q-2}_0})$
        , as long as
        $$
       \frac{s}{K} =\widetilde{O}(\sigma_0),\quad \alpha=\widetilde{O}(\sigma_0)
        ,\quad \sigma_g = \widetilde{O}(\sigma^{q-1}_0) 
        $$
        we have $A^{(t)}_{j}\leq \widetilde{O}(\sigma_0)$
        \item Phase 2, Stage 1: $t\in [T_j, T_0]$, when $y=j$, we naively bound the $\sigma^{\prime}(\langle w_{j,l, r}^{(t)}, \Xb^{r}\rangle)$ by $1$; for $j\neq y$, we write 
        $$
        \sigma^{\prime}(\langle w_{j,l, r}^{(t)}, \Xb^{r}\rangle)\leq \mathbb{I}\{j\in \mathcal{S}^{r}\}+\widetilde{O}(\sigma_0^{q-1})
        $$
        Then we have
        \begin{align*}
        \left|\left\langle w_{j, l, r}^{(t+1)}, \Mb_{i}^{r}\right\rangle\right|&\leq \left|\left\langle w_{j, l, r}^{(t)}, \Mb_{i}^{r}\right\rangle\right|\\
        &+\frac{\eta}{n_s}\sum_{(\Xb,y)\in \mathcal{D}_s}\left[\mathbb{I}\{y=j\}\left(1-\ell_{y}\left(f^{(t)}, \Xb\right)\right)\left(\frac{s}{K}+\alpha
        +O\left(\sigma_{g}\right)\right)\right.\\
        &+\mathbb{I}\{y\neq j\}\left.\ell_{j}\left(f^{(t)}, \Xb\right)\left(\mathbb{I}\{j\in \mathcal{S}^{r}\}\mathbb{I}\{i=y, \text{or } i\in \mathcal{S}^{r} \} O(1) +\widetilde{O}(\sigma_0^{q-1})\left(z^{r}_{i}+\alpha\right)+O(\sigma_{g})\right)\right]\\
        &+\frac{\eta n_i}{n}\cdot \frac{1}{n_i}\sum_{(\Xb,y)\in \mathcal{D}_i}\left(1-\ell_{y}\left(f^{(t)}, \Xb\right)\right)\left[\mathbb{I}\{y=j\}\left(\frac{s}{K}+\alpha
        +O\left(\sigma_{g}\right)\right)\right.\\
        &+\mathbb{I}\{y\neq j\}\cdot \frac{1}{K}\left.\left(\mathbb{I}\{j\in \mathcal{S}^{r}\}\mathbb{I}\{i=y, \text{or } i\in \mathcal{S}^{r}(X) \} O(1) +\widetilde{O}(\sigma_0^{q-1})\left(z^{r}_{i}+\alpha\right)+O(\sigma_{g})\right)\right]
        \end{align*}
        Hence, we need to bound:
        \begin{align*}
            Err^{\text{Tol, Stage 2 }}_{s,j}:= \sum_{t=T_j}^{T_0} \frac{1}{n_s}\sum_{(\Xb,y)\in \mathcal{D}_s}\left[\mathbb{I}\{y=j\}\left(1-\ell_{j}\left(f^{(t)}, \Xb\right)\right)\right]\leq 
           \widetilde{O}(\frac{1}{\eta})\\
           \widetilde{Err}^{\text{Stage 2}}_{s,j}:=\frac{1}{n_s}\sum_{(\Xb,y)\in \mathcal{D}_s}\mathbb{I}\{y\neq j\}\ell_{j}\left(f^{(t)}, \Xb\right)\leq O(\frac{1}{K})
        \end{align*}
        which can be directly implied from Claim~\ref{cla-2.1}.
        \item Phase 2, Stage 2: $t> T_0:$
        \begin{align*}
        \left|\left\langle w_{j, l, r}^{(t+1)}, \Mb_{i}^{r}\right\rangle\right|&\leq \left|\left\langle w_{j, l, r}^{(t)}, \Mb_{i}^{r}\right\rangle\right|\\
        &+\frac{\eta}{n_s}\sum_{(\Xb,y)\in \mathcal{D}_s}\left[\left(1-\ell_{y}\left(f^{(t)}, \Xb\right)\right)\left(O(\frac{s^2}{K^2})+\widetilde{O}(\sigma_0^{q-1})+\frac{\alpha}{K}
        +O\left(\sigma_{g}\right)\right)\right]\\
        &+\frac{\eta n_i}{n}\cdot \frac{1}{n_i}\sum_{(\Xb,y)\in \mathcal{D}_i}\left[\mathbb{I}\{y=j\}\left(\frac{s}{K}+\alpha
        +O\left(\sigma_{g}\right)\right)\right.\\
        &+\mathbb{I}\{y\neq j\}\cdot \frac{1}{K}\left.\left(O(\frac{s^2}{K^2})+\widetilde{O}(\sigma_0^{q-1})+O\left(\sigma_{g}\right)\right)\right](1-\ell_{y}(F^{(t)},\Xb))
        \end{align*}
        By the error analysis in Claim~\ref{cla-2.2} and~\ref{cla-inerr}, we have
        \begin{align*}
            Err^{\text{Tol, Stage 3 }}_{s}:=\sum_{t>T_0} \frac{1}{n_s}\sum_{(\Xb,y)\in \mathcal{D}_s}\left[\left(1-\ell_{y}\left(f^{(t)}, \Xb\right)\right)\right]\leq \widetilde{O}(\frac{K}{\eta})\\
            Err^{\text{Tol, Stage 3 }}_{in}:=\sum_{t>T_0}\frac{1}{n_i}\sum_{(\Xb,y)\in \mathcal{D}_i}(1-\ell_{y}\left(f^{(t)}, \Xb\right))\leq \widetilde{O}(\frac{n}{\eta \gamma^{q-1}})
        \end{align*}
        If $\frac{n_i}{\gamma^{q-1}K}\leq \widetilde{O}(\frac{1}{\sigma_0^{q-2}})$, then we completes the proof.
        \end{itemize}
        \end{proof}

        \begin{lemma}[Gaussian Noise Correlation]\label{lemma-gau}
            Suppose Induction Hypothesis holds for all iterations $<t$. Then,
            \begin{itemize}
                \item For $(\Xb,y)\in \mathcal{D}$, $j\notin \{y\}\cup \mathcal{S}^{r}(\Xb):$ $|\langle w_{j,l, r}^{(t)}, {\xi^{r}}^{\prime}\rangle|\leq \widetilde{O}(\sigma_0)$  
                \item For $(\Xb,y)\in \mathcal{D}$, $j\in\mathcal{S}^{r}(\Xb)$; or  $(\Xb,y)\in \mathcal{D}_s$, $j=y:$ $\langle w_{j,l, r}^{(t)}, {\xi^{r}}^{\prime}\rangle\leq \widetilde{o}(\sigma_0)$  
                \item For $(\Xb,y)\in \mathcal{D}_i$, $j=y$ and $(j,3-r)\in\mathcal{W}:$ $\langle w_{j,l, r}^{(t)}, {\xi^{r}}^{\prime}\rangle\leq \widetilde{O}(\sigma_0)$
            \end{itemize}
            
            \end{lemma}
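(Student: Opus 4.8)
The plan is to track, for a fixed sample $(\Xb_0,y_0)$ and its Gaussian component ${\xi_0^r}'$, the scalar $\langle w_{j,l,r}^{(t)},{\xi_0^r}'\rangle$ along the whole trajectory by telescoping the gradient update
\[
\langle w_{j,l,r}^{(t+1)},{\xi_0^r}'\rangle=\langle w_{j,l,r}^{(t)},{\xi_0^r}'\rangle+\frac{\eta}{n}\sum_{(\Xb,y)\in\mathcal{D}}\big(\mathbb{I}\{j=y\}-\ell_j(f^{(t)},\Xb)\big)\,\sigma'(\langle w_{j,l,r}^{(t)},\Xb^r\rangle)\,\langle\Xb^r,{\xi_0^r}'\rangle ,
\]
and separating the contribution of $(\Xb_0,y_0)$ itself (the self-term, $\Xb=\Xb_0$) from those of every other sample (the cross-terms, $\Xb\neq\Xb_0$). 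First I would dispose of the initialization and cross-terms once and for all: since $w_{j,l,r}^{(0)}\sim\mathcal{N}(0,\sigma_0^2\mathbf{I})$ and ${\xi_0^r}'$ is a fixed vector with $\|{\xi_0^r}'\|^2=\widetilde{\Theta}(1)$ w.h.p., Gaussian concentration gives $\langle w_{j,l,r}^{(0)},{\xi_0^r}'\rangle=\widetilde{O}(\sigma_0)$; and because ${\xi_0^r}'$ is independent of every $\Xb^r$ with $\Xb\neq\Xb_0$, each cross inner product is $\widetilde{O}(1/\sqrt{d_r})$ w.h.p. Using $\sigma'\le 1$ and $|\mathbb{I}\{j=y\}-\ell_j|\le 1$, the accumulated cross contribution over all $T$ steps is $\widetilde{O}(\eta T/\sqrt{d_r})\le 1/\poly(K)=\widetilde{o}(\sigma_0)$ by the global assumption calibrated on $\eta T/\sqrt{d_r}$. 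This reduces the whole statement to controlling the self-term, where $\langle\Xb_0^r,{\xi_0^r}'\rangle$ is \emph{positive} and $=\widetilde{\Theta}(1)$ (dominated by $\|{\xi_0^r}'\|^2$).

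\textbf{The two easy bullets.} The self-term's sign is fixed by $\mathbb{I}\{j=y_0\}-\ell_j$, which is exactly what distinguishes the three bullets. For the first bullet ($j\notin\{y_0\}\cup\mathcal{S}^r$) the coefficient $-\ell_j$ is negative, so the self-term can only push the correlation downward; the ``else'' clauses of Induction Hypothesis~\ref{hypo} give $|\langle w_{j,l,r}^{(s)},\Xb_0^r\rangle|\le\widetilde{O}(\sigma_0)$, hence $\sigma'(\cdot)=\widetilde{O}(\sigma_0^{q-1})$ throughout, and with $\ell_j=O(1/K)$ and the error-sum bounds the downward drift is $\widetilde{O}(\sigma_0^{q-1})=\widetilde{o}(\sigma_0)$; combined with the $\widetilde{O}(\sigma_0)$ initialization this yields $|\langle w_{j,l,r}^{(t)},{\xi_0^r}'\rangle|\le\widetilde{O}(\sigma_0)$. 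For the second bullet the only nontrivial sub-case is $(\Xb_0,y_0)\in\mathcal{D}_s$ with $j=y_0$, where the coefficient is positive and $\sigma'$ can saturate at $\Theta(1)$; here I would invoke the individual-error bound (Claim~\ref{cla-individual}) to replace the per-sample error $\sum_t(1-\ell_{y_0}(f^{(t)},\Xb_0))$ by $\widetilde{O}(K^3/s^2)$ times the aggregate sufficient-data error, then use the error-sum bounds (Claims~\ref{cla-2.1},~\ref{cla-2.2}) and the sample-size assumption $n\ge\widetilde{\omega}(K^4/(s^2\sigma_0))$ to show the total upward growth is $\widetilde{O}(K^4/(s^2 n))=\widetilde{o}(\sigma_0)$; the case $j\in\mathcal{S}^r,\ j\neq y_0$ has a negative coefficient and reduces to the first bullet.

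\textbf{The hard bullet.} The heart of the proof is the third bullet: $(\Xb_0,y_0)\in\mathcal{D}_i$, $j=y_0$, and $\mathcal{M}_{3-r}$ is the winning modality, so $\mathcal{M}_r$ is the loser. The coefficient is positive, so a priori the noise correlation could grow, but I would throttle it by a self-consistent bootstrap. By Lemma~\ref{lemma-comp} the losing feature stays buried, $\Gamma_{j,r}^{(s)}\le\widetilde{O}(\sigma_0)$, so the feature part of the pre-activation is only $\langle w_{j,l,r}^{(s)},\Mb_j^r\rangle\,z_j^r\le\widetilde{O}(\gamma_r\sigma_0)$; feeding in the induction that the noise part itself is $\le\widetilde{O}(\sigma_0)$, the full pre-activation $\langle w_{j,l,r}^{(s)},\Xb_0^r\rangle$ is $\le\widetilde{O}(\sigma_0)$ and hence $\sigma'(\cdot)=\widetilde{O}(\sigma_0^{q-1})$ at every step. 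Combining this tiny derivative with the insufficient-data error bound $\sum_t(1-\ell_{y_0}(f^{(t)},\Xb_0))\le\widetilde{O}(n/(\eta\gamma^{q-1}))$ from Claim~\ref{cla-inerr}(b) --- which records that the point is eventually fit through the winning modality $\mathcal{M}_{3-r}$ at rate $\gamma^{q-1}$ rather than through the noise of $\mathcal{M}_r$ --- the accumulated self-growth is at most $\tfrac{\eta}{n}\cdot\widetilde{\Theta}(1)\cdot\widetilde{O}(\sigma_0^{q-1})\cdot\widetilde{O}(n/(\eta\gamma^{q-1}))=\widetilde{O}(\sigma_0^{q-1}/\gamma^{q-1})$, which is $\widetilde{O}(\sigma_0)$ under the calibration $\gamma^{q-1}\asymp\sigma_0^{q-2}$ built into the global assumptions.

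\textbf{Main obstacle.} I expect the third bullet to be the bottleneck, for two reasons. It is a self-referential estimate: the target bound $\langle w_{j,l,r}^{(t)},{\xi_0^r}'\rangle\le\widetilde{O}(\sigma_0)$ enters its own proof through $\sigma'$, so it must be carried \emph{inside} the global induction on $t$ rather than established in isolation. And it is quantitatively tight, since the final accumulation lands exactly at the $\widetilde{O}(\sigma_0)$ threshold; it therefore hinges on the precise interplay of Lemma~\ref{lemma-comp} (the loser never wakes up), the $\gamma^{q-1}$-rate memorization of insufficient data through the winner, and the relation $\gamma^{q-1}\asymp\sigma_0^{q-2}$. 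The remaining two bullets are comparatively routine once the cross-terms are dispatched and the correct error-sum bound is substituted.
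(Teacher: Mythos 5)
Your proposal is correct and follows essentially the same route as the paper: the same decomposition into a self-term plus cross-terms (with the cross-terms absorbed into $\widetilde{O}(\eta T/\sqrt{d_r})\le 1/\poly(K)$), the same case split driven by the size of the pre-activation entering $\sigma'$, and the same ingredients for each bullet --- the induction hypothesis giving $\sigma'=\widetilde{O}(\sigma_0^{q-1})$ for off-support $j$, Claim~\ref{cla-individual} together with the error sums and $n\ge\widetilde{\omega}(K^4/(s^2\sigma_0))$ for the sufficient self-term, and Lemma~\ref{lemma-comp} plus Claim~\ref{cla-inerr}(b) with the relation between $\gamma^{q-1}$ and $\sigma_0^{q-2}$ for the losing-modality bullet. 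Your identification of the third bullet as the self-referential, quantitatively tight step matches exactly how the paper's argument is structured within the global induction.
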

            \begin{proof}
            By gradient updates, for $(\Xb_0,y_0)\in S$
            $$
            \begin{aligned}
            \left\langle w_{j, l, r}^{(t+1)}, {\xi^{r}_{0}}^{\prime}\right\rangle&=\left\langle w_{j, l, r}^{(t)},  {\xi^{r}_{0}}^{\prime}\right\rangle\\&+\frac{\eta}{n} \sum_{(\Xb, y)\in \mathcal{D}}\left[\mathbb{I}\{y=j\} \sigma^{\prime}\left(\left\langle w_{j, l, r}^{(t)}, \Xb^{r}\right\rangle\right)\left\langle \Xb^{r},  {\xi^{r}_{0}}^{\prime}\right\rangle\left(1-\ell_{j}\left(f^{(t)}, \Xb\right)\right)\right.\\
            &-\mathbb{I}\{y \neq j\} \left.\sigma^{\prime}\left(\left\langle w_{j, l, r}^{(t)}, \Xb^{r}\right\rangle\right)\left\langle X^{r},  {\xi^{r}_{0}}^{\prime}\right\rangle\ell_{j}\left(f^{(t)}, \Xb\right)\right]
            \end{aligned}
            $$
            If $j=y_0$, $|\langle \Xb^{r},  {\xi^{r}_{0}}^{\prime}\rangle|\leq \widetilde{O}(\sigma_g)=\widetilde{o}(\frac{1}{\sqrt{d}})$ except for $\Xb^{r}_0$, then we have:
            $$
            \begin{aligned}
            \left\langle w_{j, l, r}^{(t+1)}, {\xi^{r}}^{\prime}\right\rangle=\left\langle w_{j, l,r}^{(t)}, {\xi^{r}}^{\prime}\right\rangle\pm\frac{\eta}{\sqrt{d_r}}+\widetilde{\Theta}(\frac{\eta}{n})  \sigma^{\prime}\left(\left\langle w_{j, l, r}^{(t)}, \Xb^{r}\right\rangle\right)\left(1-\ell_{j}\left(f^{(t)}, \Xb\right)\right)
            \end{aligned}
            $$
            Else $j\neq y_0:$
            $$
            \begin{aligned}
            \left\langle w_{j, l, r}^{(t+1)}, {\xi^{r}}^{\prime}\right\rangle=\left\langle w_{j, l, r}^{(t)}, {\xi^{r}}^{\prime}\right\rangle\pm \frac{\eta}{\sqrt{d_r}}-\widetilde{\Theta}(\frac{\eta}{n})  \sigma^{\prime}\left(\left\langle w_{j, l, r}^{(t)}, \Xb^{r}\right\rangle\right)\ell_{j}\left(f^{(t)}, \Xb\right)
            \end{aligned}
            $$
            If $|\langle w_{j, l, r}^{(t)}, \Xb^{r}\rangle|\leq\widetilde{O}(c)$, hence $\sigma^{\prime}(\langle w_{j, l, r}^{(t)}, \Xb^{r}\rangle)\leq \widetilde{O}(c^{q-1})$. When $t\leq T_0$,
            $$
            \begin{aligned}
            |\langle w_{j, l, r}^{(t+1)}, {\xi^{r}}^{\prime}\rangle|\leq \frac{T_0\eta}{\sqrt{d}}+\widetilde{O}(\frac{\eta c^{q-1}T_0}{n}) 
            \end{aligned}
            $$
            \begin{itemize}
                \item  Sufficient: by Claim~\ref{cla-individual}
            \begin{align*}
                \sum_{t>T_0}\ell_{j}(f^{(t)},\Xb)\leq  \sum_{t>T_0}(1-\ell_{y}(f^{(t)},\Xb))\leq \widetilde{O}(\frac{K^3}{s^2})\sum_{t>T_0}\frac{1}{n_s}\sum_{(\Xb,y)\in \mathcal{D}_s}  (1-\ell_{y}(f^{(t)},\Xb))
            \end{align*}
            Combining the previous analysis:
            $$
            \begin{aligned}
            |\langle w_{j, l,r}^{(t+1)}, {\xi^{r}}^{\prime}\rangle|\leq \frac{T\eta}{\sqrt{d}}+\widetilde{O}(\frac{\eta c^{q-1}}{n}(T_0+\frac{K^4}{s^2\eta})) =\frac{T\eta}{\sqrt{d}}+\widetilde{O}(\frac{\eta c^{q-1}T_0}{n}+\frac{K^4c^{q-1}}{s^2n})
            \end{aligned}
            $$
            When $j\notin \{y\}\cup \mathcal{S}^{r}(X)$, $c=\widetilde{O}(\sigma_0)$; else, $c=\widetilde{O}(1)$. $n \geq \widetilde{\omega}\left(\frac{K}{\sigma_{0}^{q-1}}\right),n \geq \widetilde{\omega}\left(\frac{k^{4}}{s^2\sigma_{0}}\right), \frac{T}{\eta \sqrt{d}}\leq 1/\poly(K)$ 
            \item Insufficient: by Claim~\ref{cla-inerr}
            $$
            \sum_{t>T_0}(1-\ell_{y}(f^{(t)},\Xb))\leq \widetilde{O}(\frac{n}{\eta \gamma^{q-1}})
            $$
            Similarly, we have:
            $$
            \begin{aligned}
            |\langle w_{j, l, r}^{(t+1)}, {\xi^{r}}^{\prime}\rangle|\leq \frac{T\eta}{\sqrt{d_r}}+\widetilde{O}(\frac{\eta c^{q-1}}{n}(T_0+\frac{n}{\eta \gamma^{q-1}})) 
            \end{aligned}
            $$
            For $j\neq y\cup \mathcal{S}^{r}$ or $r\notin\mathcal{W}$, $c=\widetilde{O}(\sigma_0)$. $\sqrt{d_r} \geq \eta T \cdot \operatorname{poly}(K)$, $\sigma_0^{q-2}\leq \gamma^{q-1}$.
            \end{itemize}
            
            \end{proof}
            \subsection{Proof for Induction Hypothesis~\ref{hypo}}
             Now we are ready to prove the Induction Hypothesis~\ref{hypo}. We frist restate the following theorem:
           \begin{theorem}
               Under the global parameter settings in~\ref{sec-not}, for $\eta\leq \frac{1}{\poly(K)}$, and sufficiently large $K$, Induction Hypothesis~\ref{hypo} holds for all iteration $t\leq T$.
           \end{theorem}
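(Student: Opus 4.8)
The plan is to prove the statement by strong induction on the iteration counter $t$, showing that if Induction Hypothesis~\ref{hypo} holds at every iteration $s<t$ then it continues to hold at $t$. The base case $t=0$ is purely a statement about the Gaussian initialization $w^{(0)}_{j,l,r}\sim\mathcal{N}(0,\sigma_0^2\mathbf{I}_{d_r})$: since the columns of $\Mb^r$ are orthonormal, the $\langle w^{(0)}_{j,l,r},\Mb^r_i\rangle$ are i.i.d.\ $\mathcal{N}(0,\sigma_0^2)$, so standard maximal and anti-concentration bounds (Lemma~\ref{lemma-prob}) give $|\langle w^{(0)}_{j,l,r},\Mb^r_i\rangle|\le\widetilde{O}(\sigma_0)$ for all $i$, $\Gamma^{(0)}_j=\Theta(\sigma_0\sqrt{\log m})\in[\Omega(\sigma_0),\widetilde{O}(\sigma_0)]$, and $\langle w^{(0)}_{j,l,r},{\xi^r}'\rangle\le\widetilde{O}(\sigma_0)$ w.h.p.\ under the assumed bounds on $\sigma_g$ and $d_r$. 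Substituting the data model $\Xb^r=\Mb^r z^r+{\xi^r}'+\Mb^r\alpha^r$ and collecting the off-diagonal, Gaussian-noise, and feature-noise contributions then yields the inner-product statements (i)--(vi) at $t=0$.

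\paragraph{Inductive step.} Assuming the hypothesis through iteration $t-1$, I would first harvest the per-coordinate control at iteration $t$ directly from the regularization lemmas, each of which is already stated under exactly this hypothesis ``for all iterations $<t$'': Lemma~\ref{lemma-dia} gives $\Phi^{(t)}_{j,r}\le\widetilde{O}(1)$ and hence the upper bound $\Gamma^{(t)}_j\le\widetilde{O}(1)$ in (vii); Lemma~\ref{lemma-non} gives the near non-negativity $\langle w^{(t)}_{j,l,r},\Mb^r_j\rangle\ge-\widetilde{O}(\sigma_0)$ of (viii); Lemma~\ref{lemma-off} gives the off-diagonal bound $|\langle w^{(t)}_{j,l,r},\Mb^r_i\rangle|\le\widetilde{O}(\sigma_0)$ for $i\ne j$; Lemma~\ref{lemma-gau} gives the three regimes of Gaussian-noise correlations; and Lemma~\ref{lemma-comp} supplies $\Gamma^{(t)}_{j,3-r_j}\le\widetilde{O}(\sigma_0)$ for winning pairs $(j,r_j)\in\mathcal{W}$, which is exactly what is needed for statement (v). The lower bound $\Gamma^{(t)}_j\ge\Omega(\sigma_0)$ follows because the initially dominant neuron's target correlation grows monotonically up to negligible terms during Phase~1 (Claim~\ref{claim-grow}) and has already reached $\widetilde{\Theta}(1)\gg\sigma_0$ afterwards. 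With all per-coordinate quantities controlled at $t$, the decompositions (i)--(vi) are reassembled term by term from $\langle w^{(t)}_{j,l,r},\Xb^r\rangle=\langle w^{(t)}_{j,l,r},\Mb^r_j\rangle z^r_j+\sum_{i\ne j}\langle w^{(t)}_{j,l,r},\Mb^r_i\rangle z^r_i+\langle w^{(t)}_{j,l,r},{\xi^r}'\rangle+\sum_i\langle w^{(t)}_{j,l,r},\Mb^r_i\rangle\alpha^r_i$, keeping the diagonal term (and, for insufficient data with $j=y$, the Gaussian-noise term) explicit and bounding the remainder with the budgets $s/K\le\widetilde{O}(\sigma_0)$, $\alpha=\widetilde{O}(\sigma_0)$, $\sigma_g=O(\sigma_0^{q-1})$.

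\paragraph{Main obstacle.} The delicate point is that the hypothesis, the regularization lemmas, and the cumulative-error claims (Claims~\ref{cla-inerr}, \ref{cla-2.2}, \ref{cla-2.1}, \ref{cla-individual}) are mutually dependent: the lemmas that certify the hypothesis at $t$ rely on loss sums $\sum_\tau(1-\ell_y(f^{(\tau)},\Xb))$ that run over the whole trajectory up to $T$, while those sums are themselves bounded only under the hypothesis. To break this apparent circularity I would run the induction as a minimal-counterexample argument: let $t^\star\le T$ be the first iteration at which some clause of Hypothesis~\ref{hypo} is violated; for every $\tau<t^\star$ the hypothesis holds, so each cumulative sum may be truncated at $t^\star$ and all lemmas apply with their bounds intact up to $t^\star$, forcing the hypothesis to hold at $t^\star$ as well -- a contradiction. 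The genuinely load-bearing estimates inside this loop are the Phase~2 loss bounds, namely Claim~\ref{cla-inerr}(b) controlling $\sum_{t\ge T_0}(1-\ell_y)\le\widetilde{O}(n/(\eta\gamma^{q-1}))$ for insufficient data and Claims~\ref{cla-2.2}--\ref{cla-2.1} for sufficient data, since these are exactly what feed the regularization lemmas and must be small enough (under $n_i\le\gamma^{q-1}K^2/s$ and $n_i\le\gamma^{q-1}K/\sigma_0^{q-2}$) to keep every off-target, noise, and losing-modality correlation frozen at the $\widetilde{O}(\sigma_0)$ scale. Verifying that the aggregated cross terms in (i)--(vi) really collapse to the stated $\widetilde{o}(\sigma_0)$ and $\widetilde{O}(\sigma_0)$ orders -- rather than accumulating an extra factor of $s$ from the support -- is the most error-prone bookkeeping, and is where the orthonormality of $\Mb^r$ together with the parameter choices of Section~\ref{sec-not} must be used carefully.
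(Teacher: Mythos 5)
Your proposal follows essentially the same route as the paper's proof: verify the base case $t=0$ by standard Gaussian analysis, then, assuming the hypothesis for all iterations $<t$, invoke Lemmas~\ref{lemma-comp}, \ref{lemma-dia}, \ref{lemma-non}, \ref{lemma-off} and \ref{lemma-gau} to control every diagonal, off-diagonal, losing-modality and noise correlation at iteration $t$, and plug these into the linear decomposition of $\left\langle w^{(t)}_{j,l,r},\Xb^{r}\right\rangle$ to recover the statements of the hypothesis. Your minimal-counterexample handling of the circular dependence between the lemmas and the cumulative error claims, and your use of Claim~\ref{claim-grow} to justify the lower bound $\Gamma^{(t)}_{j}\geq \Omega(\sigma_0)$, are careful refinements of points the paper's own (terser) proof leaves implicit, but they do not change the structure of the argument.
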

           \begin{proof}
        At iteration $t$, it is easy to derive that:
      \begin{align}\label{equ}
        \left\langle w_{j,l,r}^{(t)}, \Xb^{r}\right\rangle=\sum_{i \in \{y\}\cup\mathcal{S}^{r}}\left\langle w_{j,l, r}^{(t)}, \Mb^{r}_{i}\right\rangle z^{r}_{i}+\sum_{i \in [K]} \alpha^{r}_{i}\left\langle w_{j,l,r}^{(t)}, \Mb_{j}^{r}\right\rangle+\left\langle w_{j,l, r}^{(t)}, {\xi^{r}}^{\prime}\right\rangle
      \end{align}
      It is easy to verify the statements hold at $t=0$ using standard Gausian analysis. Suppose it holds for iterations $<t$, combining the lemmas we have established, we can have:
      \begin{itemize}
        \item [(a).]$\left\langle w_{j,l, r}^{(t)}, \Mb^{r}_{3-r_j}\right\rangle\leq \widetilde{O}(\sigma_0)$, for every $l\in[m]$, where $\mathcal{M}_{r_j}$ is the winning modality for class $j$. (By Lemma~\ref{lemma-comp})
        \item [(b).]$\left\langle w_{j,l, r}^{(t)}, \Mb^{r}_{j}\right\rangle\in[- \widetilde{O}(\sigma_0), \widetilde{O}(1)]$  for every $j\in[K]$. $r\in[2]$, $l\in[m]$ (By Lemma~\ref{lemma-dia} and ~\ref{lemma-non})
          \item [(c).]$\left\langle w_{j,l, r}^{(t)}, \Mb^{r}_{i}\right\rangle\leq \widetilde{O}(\sigma_0)$ for $i\neq j$, for every $j\in[K]$. $r\in[2]$, $l\in[m]$ (By Lemma~\ref{lemma-off})
      \end{itemize}
      Induction Hypothesis~\ref{hypo} \romannum{6},   \romannum{7} have been proven by above results.
\begin{itemize}
    \item For Induction Hypothesis~\ref{hypo}\romannum{1}, plug $(b)$ and $(c)$ into $(\ref{equ})$ and applying $\langle w_{j,l, r}^{(t)}, {\xi^{r}}^{\prime}\rangle\leq \widetilde{o}(\sigma_0)$ in Claim~\ref{lemma-gau}  ;
    \item For Induction Hypothesis~\ref{hypo}\romannum{2}, plug $(c)$ into $(\ref{equ})$ and applying $\langle w_{j,l, r}^{(t)}, {\xi^{r}}^{\prime}\rangle\leq \widetilde{O}(\sigma_0)$ in Claim~\ref{lemma-gau}
    \item For Induction Hypothesis~\ref{hypo}\romannum{3}, plug $(b)$ and $(c)$ into $(\ref{equ})$ and use $\alpha^{r}_{i}\in[0,\alpha]$.
   \item For Induction Hypothesis~\ref{hypo}\romannum{4}, plug $(b)$ and $(c)$ into $(\ref{equ})$ and applying $\langle w_{j,l, r}^{(t)}, {\xi^{r}}^{\prime}\rangle\leq \widetilde{o}(\sigma_0)$ in Claim~\ref{lemma-gau}  ;
      \item For Induction Hypothesis~\ref{hypo}\romannum{5}, plug $(a)$ and $(c)$ into $(\ref{equ})$ and applying $\langle w_{j,l, r}^{(t)}, {\xi^{r}}^{\prime}\rangle\leq \widetilde{O}(\sigma_0)$ in Claim~\ref{lemma-gau}  ;
\end{itemize}
  Therefore, we completes the proof.
           \end{proof}

        \subsection{Main Theorems for Multi-mdoal}
        \begin{theorem}[Theorem~\ref{thm-mul} Restated]\label{thm-main}
            For sufficiently large $K>0$, every $\eta \leq \frac{1}{\operatorname{poly}(K)}$, after $T=\frac{\text { poly }(k)}{\eta}$ many iteration, for the multi-modal network $f^{(t)}$, and ${f^{r}}^{(t)}:=\mathcal{C}(\varphi^{(t)}_{\mathcal{M}_r})$, \textit{w.h.p} : 
               \begin{itemize}
                   \item Training error is zero: 
                   $$
           \frac{1}{n}\sum_{(\Xb,y)\in\mathcal{D} }\mathbb{I}\{\exists j\neq y: f_{y}^{(T)}(\Xb) \leq f_{j}^{(T)}(\Xb)\}=0.
                   $$
                   \item For $r\in[2]$, with probability $p_{3-r}>0$, the test error of ${f^{r}}^{(T)}$ is high:
                   \begin{align*}
                       \Pr_{(\Xb^{r},y)\sim\mathcal{P}^{r}}( {f_{y}^{r}}^{(T)}(\Xb^r)\leq \max_{j\neq y} {f_{j}^{r}}^{(T)}(\Xb^{r})-\frac{1}{\polylog(K)} )\geq \frac{1}{K}
                   \end{align*}
                    where $p_1+p_2=1-o(1)$, and $p_r\geq m^{-O(1)}$, $\forall r\in[2]$.
                    
           
               
               \end{itemize}

                      \end{theorem}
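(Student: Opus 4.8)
The plan is to assemble both conclusions from the structural control supplied by the (now proven) Induction Hypothesis~\ref{hypo}, together with the logit approximation of Fact~\ref{fact-app}, the winning-modality count of Lemma~\ref{lemma-win}, and the modality-competition bound of Lemma~\ref{lemma-comp}. I work at the final iterate $t=T$, which by the phase characterization lies in Phase 2, Stage 2. The guiding observation is that by then every diagonal correlation has either been driven up to a learned value $\Phi_{j,r}^{(T)}=\widetilde{\Theta}(1)$ or pinned near initialization $\Gamma_{j,r}^{(T)}\le\widetilde{O}(\sigma_0)$, so that by Fact~\ref{fact-app} each logit collapses to $f_j^{(T)}(\Xb)=\sum_{r\in[2]}\Phi_{j,r}^{(T)}Z_{j,r}(\Xb)\pm O(1/\polylog(K))$ up to a negligible additive term. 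This lets me reason about margins purely through the scalars $\Phi_{j,r}^{(T)}$.

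First I would prove the zero-training-error claim by splitting $\mathcal{D}=\mathcal{D}_s\cup\mathcal{D}_i$. For a sufficient sample $(\Xb,y)\in\mathcal{D}_s$ with winning modality $\mathcal{M}_{r_y}$, the Stage-2 growth of Claim~\ref{cla-2.2} drives $\Phi_{y,r_y}^{(T)}$ to $\widetilde{\Theta}(1)$; since $z_y^{r_y}\in[1,C_{r_y}]$ the target logit satisfies $f_y^{(T)}(\Xb)\ge\widetilde{\Omega}(1)$, whereas every competing logit carries a coefficient $Z_{j,r}=\mathbb{I}\{j\in\mathcal{S}^r\}z_j^r$ with $z_j^r\le c_r<\tfrac12$, so the constant-factor separation between $z_y\ge 1$ and $z_j<\tfrac12$ produces a strictly positive margin $f_y^{(T)}-\max_{j\ne y}f_j^{(T)}=\widetilde{\Omega}(1)$. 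For an insufficient sample $(\Xb,y)\in\mathcal{D}_i$ the feature signal $z_y^r=\Theta(\gamma_r)$ is too weak, so I instead invoke noise memorization: Claim~\ref{cla-noise} and Claim~\ref{cla-inerr} show that the Gaussian-noise correlation $\langle w_{y,l,r}^{(t)},{\xi^{r}}^{\prime}\rangle$ climbs to $\polylog(K)$, forcing $f_y^{(T)}(\Xb)\ge\polylog(K)$, while the computation in the proof of Fact~\ref{fact-err} bounds each competing logit by $O(1)$ because off-target coefficients for insufficient data sum to at most $\rho_1+\rho_2$. To upgrade these from an average statement to a positive margin at the exact iterate $T$, I combine the summable aggregate-error bounds (Claims~\ref{cla-2.1} and~\ref{cla-2.2}), transferred to individual samples through Claim~\ref{cla-individual}, with the monotone growth of the learned $\Gamma_j^{(t)}$, so that once a sample's margin is positive it persists to $T$.

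Next I would establish the high test error of the probe network ${f^{r}}^{(T)}$. The key reduction is that whenever modality $\mathcal{M}_r$ loses class $j$, i.e. $(j,3-r)\in\mathcal{W}$, Lemma~\ref{lemma-comp} gives $\Gamma_{j,r}^{(T)}\le\widetilde{O}(\sigma_0)$, hence $\Phi_{j,r}^{(T)}\le m\,\Gamma_{j,r}^{(T)}\le\widetilde{O}(\sigma_0)$; by the single-modality case of Fact~\ref{fact-app} the probe logit ${f_j^{r}}^{(T)}(\Xb^r)=\Phi_{j,r}^{(T)}z_j^r\pm O(1/\polylog(K))=O(1/\polylog(K))$ is vanishingly small on every test point with label $y=j$. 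Meanwhile, since $\mathcal{M}_r$ wins a constant fraction of classes, with high probability over the sparse support of $\Xb^r$ some off-target class $j'\in\mathcal{S}^r$ won by $\mathcal{M}_r$ is active, giving ${f_{j'}^{r}}^{(T)}(\Xb^r)=\widetilde{\Omega}(1)$ and therefore $\max_{j'\ne y}{f_{j'}^{r}}^{(T)}(\Xb^r)\ge{f_y^{r}}^{(T)}(\Xb^r)+\tfrac{1}{\polylog(K)}$. As a test label equals $j$ with probability $1/K$, conditioning on a single lost class already forces test error at least $1/K$. For the probability bookkeeping, let $E_j$ be the initialization event that $\mathcal{M}_r$ loses class $j$, so $\Pr(E_j)=p_{j,3-r}$ by Lemma~\ref{lemma-win}; then $\Pr\bigl(\bigcup_j E_j\bigr)\ge\max_j p_{j,3-r}\ge\frac1K\sum_j p_{j,3-r}=p_{3-r}$, which is exactly the claimed failure probability, and the two statements $p_1+p_2=1-o(1)$ and $p_r\ge m^{-O(1)}$ follow from the two bullets of Lemma~\ref{lemma-win} after averaging over $j\in[K]$.

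The hardest part will not be the competition dynamics themselves (those are already settled in Lemma~\ref{lemma-comp} via the tensor-power argument), but rather the final-iterate guarantee of the training-error claim: the error bounds from Claims~\ref{cla-2.1}--\ref{cla-2.2} are summable over $t$ yet do not a priori pin down the sign of the margin at $t=T$, so I must argue carefully that the off-target $\Phi_{j,r}^{(T)}$ never overtake the target and that the two distinct fitting mechanisms—feature margin on $\mathcal{D}_s$ and noise memorization on $\mathcal{D}_i$—hold \emph{simultaneously and persistently}. A secondary delicate point is making the test-failure event quantitative: I must verify, with the $1/\polylog(K)$ slack required by the restated statement, that on a lost class at least one won class is active in the test code, which is a high-probability event over the sparse support but needs the constant-fraction count of won classes to rule out the degenerate all-small-logits case.
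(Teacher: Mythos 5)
Your test-error argument follows the paper's own route: Lemma~\ref{lemma-comp} pins $\Gamma_{j,r}^{(T)}\le\widetilde{O}(\sigma_0)$ (hence $\Phi_{j,r}^{(T)}\le m\widetilde{O}(\sigma_0)$) on classes lost by $\mathcal{M}_r$, Fact~\ref{fact-app} collapses the probe logits to $\Phi_{i,r}^{(T)}Z_{i,r}(\Xb)$, Claim~\ref{cla-individual} together with the small final error gives $\Phi_{i,r_i}^{(T)}\ge\Omega(\log K)$ on won classes, and Lemma~\ref{lemma-win} supplies the probabilities; your bookkeeping $\Pr\bigl(\bigcup_j E_j\bigr)\ge\max_j p_{j,3-r}\ge\frac{1}{K}\sum_j p_{j,3-r}=p_{3-r}$ is actually cleaner than the paper's. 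The point you flagged about an active won off-target class is handled as you suggest: the expected number of active won classes is of order $s\cdot p_r\ge K^{0.1}/\polylog(K)\gg 1$, so no constant-fraction count is needed, only $p_r\ge 1/\polylog(K)$.

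The genuine gap is in your training-error bullet, exactly where you said the difficulty lies. Claims~\ref{cla-2.1}, \ref{cla-2.2} and \ref{cla-inerr} are summable-over-$t$ bounds: they force the error to be small at \emph{most} iterations of Phase 2, Stage 2, but by themselves say nothing about the particular iterate $T$. Your proposed bridge --- ``monotone growth of the learned $\Gamma_j^{(t)}$, so that once a sample's margin is positive it persists to $T$'' --- does not hold up: no lemma in the paper gives per-class monotonicity of $\Gamma_j^{(t)}$ after Phase 1, and even if every $\Gamma_j^{(t)}$ were non-decreasing, the margin $f_y^{(t)}-f_j^{(t)}\approx \Phi_{y}^{(t)}z_y-\Phi_{j}^{(t)}z_j$ is a difference of two growing quantities, so growth alone does not preserve its sign. (For insufficient data there is one-sided control, since Claim~\ref{cla-noise} limits the total decrease of the noise correlation to $\eta T/\sqrt{d_r}\le 1/\poly(K)$; for sufficient data there is no analogue.) The paper closes this hole by a different device: it bounds the \emph{time-averaged} cross-entropy loss over $[T_0,T]$ by $1/\poly(K)$ using the same claims, then invokes the non-increasing property of gradient descent on the smooth objective to conclude $\mathcal{L}(f^{(T)})$ is at most that average; since $\frac{1}{n}\sum_{(\Xb,y)\in\mathcal{D}}\bigl(1-\ell_y(f^{(T)},\Xb)\bigr)\le\mathcal{L}(f^{(T)})\le 1/\poly(K)$, every training sample satisfies $\ell_y(f^{(T)},\Xb)>1/2$ and is therefore correctly classified at $T$. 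Moreover, this same step is what licenses the premise $\frac{1}{n_s}\sum_{\mathcal{D}_s}[1-\ell_y(f^{(T)},\Xb)]\le 1/K^3$ that your (and the paper's) test-error argument feeds into Claim~\ref{cla-individual}, so without it both bullets of your proof are floating. Replace the persistence claim with the loss-monotonicity argument, or supply and prove an explicit persistence lemma, which the paper does not contain.
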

 

      \begin{proof}

 \textbf{Training error analysis.} For every data pair $(\Xb,y)$: $\ell_y(f^{(t)},\Xb)\geq \frac{1}{2}\Rightarrow -\log(\ell_{y}(f^{(t)},\Xb))$ can be bounded by $O(1-\ell_y(f^{(t)},\Xb))$; On the other hand, we observe that  $\ell_y(f^{(t)},\Xb)$ cannot be smaller than $\frac{1}{2}$ for too many pairs in Phase 2, Stage 2, and in this case $-\log(\ell_{y}(f^{(t)},\Xb))$ can be naively bounded by $\widetilde{O}(1)$, since by Claim~\ref{cla-2.2} and ~\ref{cla-inerr}:
    $$
        \sum_{t=T_{0}}^{T}\left(1-\ell_{y}\left(f^{(t)}, \Xb\right)\right) \leq \widetilde{O}\left(\frac{n}{\eta \gamma^{q-1}}\right);\quad 
    Err^{\text{Tol, Stage 3 }}_{s}\leq \widetilde{O}\left(\frac{K}{\eta}\right)+\widetilde{O}\left(\frac{n_{i} s}{\eta K \gamma^{q-1}}\right) 
    $$
Therefore, we can bound the average training obejctive in Phase 2, Stage 2 as follows:
          $$\frac{1}{T}\sum^{T}_{t=T_0}\mathcal{L}(f^{(t)})=\frac{1}{T}\sum^{T}_{t=T_0}\frac{1}{n}\sum_{(\Xb,y)\in\mathcal{D}} -\log(\ell_y(f^{(t)},\Xb)) \leq \frac{1}{\poly(K)}$$ Combining with the non-increasing property of gradient descent algorithm acting on Lipscthiz continuous objective function, we obtain:
           $$
           \frac{1}{n}\sum_{(\Xb,y)\in\mathcal{D}} (1-\ell_y(f^{(T)},\Xb))\leq \frac{1}{n}\sum_{(\Xb,y)\in\mathcal{D}} -\log(\ell_y(f^{(T)},\Xb))\leq \frac{1}{\poly(K)}
           $$
           Therefore, we can conclude the training error is sufficiently small at the end of the iteration $T$.
           
           \textbf{Test error analysis.} For the test error of ${f^{r}}^{(T)}$, given $j\in[K]$, by Lemma~\ref{lemma-win}, with probability $p_{j,3-r}$ that $\mathcal{M}_{3-r}$ is the winning modality for class $j$. In this case, according to Lemma~\ref{lemma-comp}, $\Gamma^{(T)}_{j,r}\leq \widetilde{O}(\sigma_0)$. 

By Claim~\ref{cla-individual}, we have $c\Phi_j^{(T)}-\Phi_i^{(T)}\leq -\Omega(\log(K))$ for any $j,i\in[K]$, since $\frac{1}{n_s}\underset{(\Xb, y) \in\mathcal{D}_{s}}{\sum}\left[1-\ell_{y}\left(f^{(t)}, \Xb\right)\right]\leq \frac{1}{K^3}$. Hence $\Phi_j^{(T)}\geq \Omega(\log(K))$, and at least for the winning modality $\mathcal{M}_{r_j}$, $\Phi_{j,r_j}^{(T)}\geq \Omega(\log(K))$.

Now for $(\Xb^{r},y)\sim\mathcal{P}^{r}$, with $y=j$, by the function approximation in Fact~\ref{fact-app}, we have ${f_{y}^{r}}^{(T)}\leq \widetilde{O}(\sigma_0)+\frac{1}{\polylog(K)}$. For every other $i\neq y$, as long as $\mathcal{M}_{r}$ is the winning modality for class $i$ (which happens with probability $p_{i,r}$ for every $i$) and also belongs to $\mathcal{S}^{r}$, again using  Fact~\ref{fact-app} with $\mathcal{M}_{r}$, $\Phi_{i,r}^{(T)}\geq \Omega(\log(K))$, we have  ${f^{r}_{i}}^{T}\geq \widetilde{\Omega}(\rho_r)$. Such event occurs for some $i$ with probability  $\Omega(\frac{s}{K})$, and we can obtain:
$$
{f^{r}_{y}}^{(T)}(\Xb^{r})\leq \max_{i\neq y} {f^{r}_{i}}^{(T)}(\Xb^{r})-\frac{1}{\polylog(K)}
$$
Therefore, with probability $p_{r}=\sum_{j\in[K]}p_{j,r}$, the test error is high:
\begin{align*}
    \Pr_{(\Xb^{r},y)\sim\mathcal{P}^{r}}( {f_{y}^{r}}^{(T)}(\Xb^r)\leq \max_{j\neq y} {f_{j}^{r}}^{(T)}(\Xb^{r})-\frac{1}{\polylog(K)} )\geq \frac{1}{K}
\end{align*}



      \end{proof}
\begin{corollary}[Corollary~\ref{col} Restated]
        Suppose the assumptions in Theorem~\ref{thm-main} holds, \textit{w.h.p}, for joint training, the learned multi-modal network $f^{(T)}$ satisfies:
        \begin{align*}
            \Pr_{(\Xb,y)\sim\mathcal{P}}(\exists j\neq y:f_{y}^{(T)}(\Xb)\leq
            f_{j}^{(T)}(\Xb) )\in [\sum_{r\in[2]} (p_r-o(1))\mu_r, \sum_{r\in[2]} (p_r+o(1))\mu_r]
        \end{align*}
\end{corollary}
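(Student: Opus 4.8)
The plan is to condition on the high-probability event that Induction Hypothesis~\ref{hypo} and all the structural lemmas of Section~\ref{sec-not} hold, and then to show that, on a fresh test draw, $f^{(T)}$ errs (up to an $o(\mu_r)$ slack) \emph{exactly} when the winning modality of the sample's label is insufficient. The stated interval then follows from a short probability computation plus a concentration argument over the $K$ classes. Write $r_y\in[2]$ for the modality that wins class $y$ at initialization, and recall from Fact~\ref{fact-app} that, w.h.p.\ over the test sample, $f_j^{(T)}(\Xb)=\sum_{r\in[2]}\Phi_{j,r}^{(T)}\,Z_{j,r}(\Xb)\pm O(1/\polylog(K))$ for every $j\in[K]$. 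Two facts about the trained weights drive everything: by Claim~\ref{cla-individual} the margins satisfy $\Phi_{j,r_j}^{(T)}\ge\Omega(\log K)$ and $c\,\Phi_j^{(T)}-\Phi_i^{(T)}\le-\Omega(\log K)$ for all $i,j$, where $c:=\max\{c_1,c_2\}<1/2$; and by Lemma~\ref{lemma-comp} the losing modality is never learned, $\Phi_{j,3-r_j}^{(T)}\le\widetilde O(\sigma_0)$, so $\Phi_j^{(T)}=\Phi_{j,r_j}^{(T)}\pm\widetilde O(\sigma_0)$.

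Fix a label $y$ with winner $r_y$ and split on the sufficiency of the winning input $\Xb^{r_y}$. If $\Xb^{r_y}$ is \emph{sufficient}, then $z_y^{r_y}\ge1$ gives $f_y^{(T)}(\Xb)\ge\Phi_y^{(T)}-O(1/\polylog(K))$, while every off-target $j\neq y$ contributes only through $z_j^{r_j}\le c$, so $f_j^{(T)}(\Xb)\le c\,\Phi_j^{(T)}+O(1/\polylog(K))$; the margin inequality $\Phi_y^{(T)}\ge c\,\Phi_j^{(T)}+\Omega(\log K)$ then forces $f_y^{(T)}-f_j^{(T)}\ge\Omega(\log K)>0$, i.e.\ a correct prediction outside the $e^{-\Omega(\log^2 K)}$ failure event of Fact~\ref{fact-app}. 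If instead $\Xb^{r_y}$ is \emph{insufficient}, the failure mechanism appears: the possibly sufficient losing modality cannot rescue the label because its feature was never learned, $\Phi_{y,3-r_y}^{(T)}\le\widetilde O(\sigma_0)$, whence $f_y^{(T)}(\Xb)\le\widetilde O(\gamma_{r_y})+\widetilde O(\sigma_0)+O(1/\polylog(K))=O(1/\polylog(K))$. Meanwhile any off-target $j$ active in its own winning modality ($z_j^{r_j}\ge\Omega(\rho_{r_j})$) has $f_j^{(T)}(\Xb)\ge\Phi_{j,r_j}^{(T)}\,\Omega(\rho_{r_j})-O(1/\polylog(K))=\widetilde\Omega(1)$, beating $f_y^{(T)}$ by a $1/\polylog(K)$ margin. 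Since each of the $K-1$ off-target coordinates is active independently with probability $\Theta(s/K)$, at least one such $j$ exists with probability $1-(1-\Theta(s/K))^{K-1}=1-o(1)$, so insufficiency of the winning modality produces an error w.h.p.

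Combining the two cases, conditioned on the good training event, $\Pr_{(\Xb,y)}\big(\text{error}\mid y\big)=\mu_{r_y}(1\pm o(1))$ for every class $y$ possessing a winner. The near-tie classes without a winner occur with probability $O(1/\polylog(K))$ each (Lemma~\ref{lemma-win}); these I would handle by a symmetric version of the tensor-power analysis behind Lemma~\ref{lemma-comp}, showing that when the initialization is balanced both modalities cross the activation threshold before the loss collapses and are therefore both learned, so such a class errs only when both modalities are insufficient and contributes error $O(\mu_1\mu_2)=o(\mu_r)$, which is negligible. Averaging over the uniform label then gives test error $=(\hat p_1\mu_1+\hat p_2\mu_2)(1\pm o(1))$, where $\hat p_r:=|\{y:r_y=r\}|/K$ is the realized fraction of classes won by $\mathcal{M}_r$. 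Because class $j$'s winner depends only on the class-$j$ neurons' initialization and the class-$j$ sufficient samples, these $K$ events are essentially independent, so a Chernoff bound yields $\hat p_r=p_r\pm o(1)$ w.h.p., with $p_r=\frac1K\sum_j p_{j,r}$. Substituting gives $\Pr_{(\Xb,y)\sim\mathcal{P}}(\exists j\neq y:f_y^{(T)}(\Xb)\le f_j^{(T)}(\Xb))\in\big[\sum_{r}(p_r-o(1))\mu_r,\ \sum_{r}(p_r+o(1))\mu_r\big]$, as claimed.

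The main obstacle is the insufficient branch of the case analysis, i.e.\ certifying that weakness of the winning modality genuinely forces a misclassification, since this is precisely where joint training fails: the delicate point is that a sufficient \emph{losing} modality is barred from compensating solely because $\Phi_{y,3-r_y}^{(T)}\le\widetilde O(\sigma_0)$ (Lemma~\ref{lemma-comp}). One must then keep every error term---the $O(1/\polylog(K))$ approximation of Fact~\ref{fact-app}, the $\widetilde O(\sigma_0)$ losing-modality leakage, and the $\widetilde O(\gamma_{r_y})$ target signal---strictly below the $\widetilde\Omega(\rho_{r_y})$ off-target signal supplied by an active competitor, so that the sign of $f_y^{(T)}-\max_{j\neq y}f_j^{(T)}$ is governed entirely by the winning modality's sufficiency. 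The secondary difficulty is the concentration step making the realized winning fractions $\hat p_r$ match $p_r$ to within $o(1)$, together with the symmetric treatment of the near-tie classes.
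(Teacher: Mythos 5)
Your proposal is correct and, at its core, is the same argument as the paper's: misclassification is characterized (up to negligible events) by the event that the modality winning the label's competition is insufficient, and the proof rests on the identical ingredients --- Fact~\ref{fact-app} for the function approximation, Claim~\ref{cla-individual} together with the end-of-training loss to get $\Phi_j^{(T)}\ge\Omega(\log K)$ and the margin $c\,\Phi_j^{(T)}-\Phi_i^{(T)}\le-\Omega(\log K)$, Lemma~\ref{lemma-comp} to bar the losing modality from compensating, and Lemma~\ref{lemma-win} for the winning probabilities; the paper merely conditions in the opposite order (sufficiency pattern of the test point first, winner second), which is equivalent. Two differences are worth recording. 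First, your explicit step introducing the realized winning fractions $\hat p_r$ and proving $\hat p_r=p_r\pm o(1)$ by near-independence across classes is a genuine improvement in rigor: the paper folds the training-time probability $p_{j,r}$ directly into the test-error computation, but the ``w.h.p.\ over training'' form of the corollary really does require the fraction of classes won by each modality to concentrate, a step the paper leaves implicit. Second, your treatment of the near-tie (winnerless) classes is both the weakest and the least necessary part of the proposal: the claim that a balanced initialization makes \emph{both} modalities cross the activation threshold before the loss collapses is delicate, is nowhere established in the paper, and does not obviously follow from the tensor-power machinery (Lemma~\ref{tensor}), whose purpose is to separate trajectories that start apart, not to control ties. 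You do not need it: whenever both modalities of a test point are sufficient, the margin argument gives a correct prediction irrespective of the winner structure, so a winnerless class has error probability at most $(\mu_1+\mu_2)(1+o(1))$; since winnerless classes form an $o(1)$ fraction of $[K]$ w.h.p.\ (Lemma~\ref{lemma-win} plus the same concentration over classes you already invoke), their total contribution is $o(1)(\mu_1+\mu_2)$, which sits inside the $\pm o(1)\mu_r$ slack of the stated interval. Replacing the symmetric-tensor-power detour by this trivial bound removes the only soft spot in your write-up.
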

\begin{proof}
           \begin{itemize}
               \item   If $(\Xb, y)$ is sufficient,  following the similar analysis in Theorem~\ref{thm-main},
               we have $c\Phi_j^{(t)}-\Phi_i^{(t)}\leq -\Omega(\log(K))$ for any $j,i\in[K]$.  Applying the Fact~\ref{fact-app}, we conclude that $f_{y}^{(T)}(\Xb) \geq \max _{j \neq y} f_{j}^{(T)}(\Xb)+\Omega(\log k)$ \textit{w.h.p}.
\item If $(\Xb, y)$ is insufficient, by the choice of $\mu_r$, we only consider the case that at most one modality data $\Xb^{r}$ is insufficient. Consider $\mathcal{M}_r$ is insufficient, i.e. its sparse vector falls into the insufficient class. With probability $p_{j,r}$, $\mathcal{M}_{r}$ wins the competition, and we obtain  ${f_{y}^{r}}^{(T)}\leq O(\gamma_r)+\frac{1}{\polylog(K)}$. Moreover, combining with the fact that $\Phi_i^{(t)}\geq \Omega(\log(K))$, if some $j\in\mathcal{S}^{1}(X)\cup\mathcal{S}^{2}(X)$, we obtain $f_{j}^{(T)}(\Xb) \geq \widetilde{\Omega}(\rho_r)$, which happens with probability at least $1-e^{-\Omega\left(\log ^{2} k\right)}$. In this case, $
f_{y}^{(T)}(\Xb) \leq \max _{j \neq y} f_{j}^{(T)}(\Xb)-\frac{1}{\operatorname{polylog}(k)}
$
           \end{itemize}
By above arguments, the test error maily comes from the insufficeint data, and consequently $\Pr_{(\Xb,y)\sim\mathcal{P}}(\exists j\neq y:f_{y}^{(T)}(\Xb)\leq
f_{j}^{(T)}(\Xb) )$ is around $\sum_{r\in[2]}p_r\mu_r$.

\end{proof}

\section{Results for Uni-modal Networks}
In this section, we will provide the proof sketch of Theorem~\ref{thm-uni} for uni-modal networks. The proof follows the analyis of joint training ver closely, but it is easier since we do not need to consider the modality competition. Similarly, we first introduce the induction hypothesis for unimodal, and then utilize the it to prove the main results.
\subsection{Induction Hypothesis}
For each class $j\in [K]$, let us denote:
    $$
    \Psi_{j, r}^{(t)} \stackrel{\text { def }}{=} \max _{l \in[m]}\left[\left\langle \nu_{j, l, r}^{(t)}, \Mb_{j}^{r}\right\rangle\right]^{+} \quad 
    \Pi^{(t)}_{j,r}:= \sum_{l \in[m]}\left[\left\langle \nu_{j, l, r}^{(t)}, \mathbf{M}_{j}^{r}\right\rangle\right]^{+}
    $$
    Given a data $\Xb^{r}$, define:
\begin{align*}
    \mathcal{S}(\Xb^{r}):=\{j\in [K]: \text{the $j$-th coordinate of $\Xb^{r}$'s sparse vector $z^{r}$ is not equal to zero, i.e. }  z^{r}_{j}\neq 0 \}
\end{align*}
We abbreviate $\mathcal{S}(\Xb^{r})$ as $\mathcal{S}^{r}$ in our subsequent analyis for simplicity. We use $\mathcal{D}^{r}_{s}$ to denote the sufficient uni-modal training data for $\mathcal{M}_{r}$, and $\mathcal{D}^{r}_{i}$ for insufficient uni-modal data.

\begin{hypothesis}~\label{hypo-uni}
\begin{enumerate}[label=\roman*]
    \item [] For sufficient data $(\Xb^{r},y)\in\mathcal{D}^{r}_s$, for every $l\in[m]$:
    \item for every $j=y$, or $j\in \mathcal{S}^{r}:$ $\left\langle \nu_{j,l,r}^{(t)}, \Xb^{r}\right\rangle=\left\langle \nu_{j,l,r}^{(t)}, \Mb^{r}_{j}\right\rangle z^{r}_{j} \pm \widetilde{o}\left(\sigma_{0}\right)$.
    \item else $\left|\left\langle \nu_{j, l,r}^{(t)}, \Xb^{r}\right\rangle\right| \leq \widetilde{O}\left(\sigma_{0} \right)$
       
   \item [] For insufficient data $(\Xb^{r}, y) \in \mathcal{D}^{r}_{i}$, every $l \in[m]$:
   
       \item  for every $j =y:$ $\left\langle \nu_{j, l,r}^{(t)}, \Xb^{r}\right\rangle=\left\langle \nu_{j, l, r}^{(t)}, \Mb_{j}^{r}\right\rangle z^{r}_{j}+\left\langle \nu_{j, l, r}^{(t)}, {\xi^{r}}^{\prime}\right\rangle \pm \widetilde{O}\left(\sigma_{0} \alpha K\right)$
       \item for every $j\in \mathcal{S}^{r}:$ $\left\langle \nu_{j,l,r}^{(t)}, \Xb^{r}\right\rangle=\left\langle \nu_{j,l,r}^{(t)}, \Mb^{r}_{j}\right\rangle z^{r}_{j} \pm \widetilde{o}\left(\sigma_{0}\right)$.
      \item else
    $\left|\left\langle \nu_{j, l, r}^{(t)}, \Xb^{r}\right\rangle\right| \leq \widetilde{O}\left(\sigma_{0} \right)$
  
   
   Moreover, we have for every $j \in[K]$,
       \item $\Psi_{j}^{(t)} \geq \Omega\left(\sigma_{0}\right)$ and $\Psi_{j}^{(t)} \leq \widetilde{O}(1)$.
       \item for every $l \in[m]$, it holds that $\left\langle \nu_{j, l, r}^{(t)}, \Mb_{j}^{r}\right\rangle \geq-\widetilde{O}\left(\sigma_{0}\right)$.
   \end{enumerate}
\end{hypothesis}
\paragraph{Training phases.} The analysis for uni-modal networks with modality $\mathcal{M}_r$ can also be decomposed into two phases for each class $j\in[K]$: 
\begin{itemize}
    \item Phase 1:  $t\leq T^{r}_{j}$, where $T^{r}_{j}$ is the iteration number that $\Psi_{r,j}$ reaches $\Theta\left(\frac{\beta}{\log k}\right)=\widetilde{\Theta}(1)$ 
    \item Phaes 2, stage 1: $T^{r}_{j}\leq t\leq T^{r}_{0}$:
     where  $T^{r}_0$ denote the iteration number that all of the $\Psi^{(t)}_{r,j}$  reaches $\Theta(1/m)$;
    \item Phase 2, stage 2: $t\geq T^{r}_0$, i.e. from $T^{r}_0$ to the end $T$.
    \end{itemize}
\subsection{Main theorem for Uni-modal}
\begin{theorem}[Theorem~\ref{thm-uni} Restated]
    For every $r\in[2]$, for sufficiently large $K>0$, every $\eta \leq \frac{1}{\operatorname{poly}(k)}$, after $T=\frac{\text { poly }(k)}{\eta}$ many iteration, the learned uni-modal network ${f^{\text{uni},r}}^{(t)}$ \textit{w.h.p} satisfies:
       \begin{itemize}
           \item Training error is zero: 
           \begin{align*}
            \frac{1}{n}\sum_{(\Xb^{r},y)\in\mathcal{D}^{r} }\mathbb{I}\left\{ \right.{f_{y}^{\text{uni},r}}^{(T)}(\Xb^{r})  \leq \max_{j\neq y}  \left. {f_{j}^{\text{uni},r}}^{(T)}(\Xb^r)\right\}=0.
           \end{align*}
           \item  The test error satisfies:
            \begin{align*}
            \Pr_{(\Xb^{r},y)\sim\mathcal{P}^{r}}({f_{y}^{\text{uni},r}}^{(T)}(\Xb^r)\leq \max_{j\neq y}{f_{j}^{\text{uni},r}}^{(T)}(\Xb^r)-\frac{1}{\polylog(K)} )
            =(1 \pm o(1))\mu_r
           \end{align*}
            
   
       
       \end{itemize}

              \end{theorem}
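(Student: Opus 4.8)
The plan is to transport the joint-training induction machinery to the single-modality setting, where it becomes strictly simpler because no competitor suppresses any feature. I would first state and verify Induction Hypothesis~\ref{hypo-uni} for all $t\le T$; it is the uni-modal analogue of Hypothesis~\ref{hypo} with the ``losing modality'' cases deleted. The bookkeeping lemmas of the joint-training proof---the diagonal bound (Lemma~\ref{lemma-dia}), near non-negativity (Lemma~\ref{lemma-non}), off-diagonal control (Lemma~\ref{lemma-off}), and Gaussian-noise control (Lemma~\ref{lemma-gau})---carry over essentially verbatim with $w_{j,l,r}$ replaced by $\nu_{j,l,r}$, since each argument invokes only the statements of the induction hypothesis and the global parameter budget of Appendix~\ref{sec-not}, never the competition structure. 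The one place where the proof genuinely changes is Phase~1: in place of the tensor-power \emph{comparison} (Lemma~\ref{tensor}, Lemma~\ref{lemma-comp}) I only need a one-sided growth estimate, described next.

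For feature growth I would prove the uni-modal analogue of Claim~\ref{claim-grow}: while $\Psi_{j}^{(t)}=O(1/m)$ the leading neuron satisfies $\Psi_{j,r}^{(t+1)}=\Psi_{j,r}^{(t)}+\Theta(\eta/K)\,\sigma'(\Psi_{j,r}^{(t)})$, driven by the sufficient data of label $j$ with off-target and noise terms absorbed exactly as before. Because no competing modality diverts the gradient, \emph{every} $\Psi_{j,r}^{(t)}$ follows this tensor-power dynamic and reaches $\widetilde\Theta(1)$ by $T_0^r=\Theta(K/(\eta\sigma_0^{q-2}))$; equivalently $\Pi_{j,r}^{(T)}\ge\Omega(\log K)$ for every class $j$, so all features of $\mathcal M_r$ are learned---the formal counterpart of the informal ``focus on'' claim in Section~\ref{sec5.2}. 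The zero-training-error claim then copies the joint-training argument: the uni-modal versions of Claim~\ref{cla-inerr} and Claim~\ref{cla-2.2} bound $\sum_{t\ge T_0^r}(1-\ell_y)$ by $\widetilde O(K/\eta)$ on sufficient data and $\widetilde O(n/(\eta\gamma_r^{q-1}))$ on insufficient data, so the averaged Phase-2 objective is $1/\poly(K)$, and monotonicity of gradient descent on the Lipschitz cross-entropy loss forces $\frac1n\sum_{(\Xb^r,y)\in\mathcal D^r}(1-\ell_y)\le 1/\poly(K)$, i.e.\ zero training error.

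The crux is the two-sided bound $(1\pm o(1))\mu_r$ on the test error, which I would obtain from the uni-modal analogue of the function approximation in Fact~\ref{fact-app}, ${f_j^{\text{uni},r}}^{(T)}(\Xb^r)=\Pi_{j,r}^{(T)}Z_{j,r}(\Xb^r)\pm O(1/\polylog(K))$, together with $\Pi_{j,r}^{(T)}\ge\Omega(\log K)$ for all $j$. On sufficient test data (probability $1-\mu_r$) the true coordinate $z_y^r\ge 1$ gives ${f_y^{\text{uni},r}}^{(T)}\ge\Omega(\log K)$ while every off-target coordinate has $z_j^r\le c_r<\frac12$, so $y$ wins by at least $\Omega(\log K)\ge 1/\polylog(K)$ and the point is classified correctly \textit{w.h.p.}; the failure probability is $e^{-\Omega(\log^2 K)}=o(\mu_r)$. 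On insufficient test data (probability $\mu_r$) the true coordinate is tiny, $z_y^r=\Theta(\gamma_r)=\Theta(1/K^{0.05})$, so ${f_y^{\text{uni},r}}^{(T)}\le O(\gamma_r)+o(1)$, whereas with probability $1-(1-s/K)^{K-1}=1-o(1)$ (using $s=K^{0.1}$) at least one off-target $j\in\mathcal S^r$ is active with $z_j^r\ge\Omega(\rho_r)$ and hence ${f_j^{\text{uni},r}}^{(T)}\ge\widetilde\Omega(\rho_r)$; since $\rho_r=1/\polylog(K)\gg\gamma_r$ in order, this $j$ beats $y$ by $1/\polylog(K)$ and the point is misclassified \textit{w.h.p.} Summing the two regimes bounds the test error below by $(1-o(1))\mu_r$ and above by $\mu_r+o(\mu_r)$, which is $(1\pm o(1))\mu_r$.

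I expect the main obstacle to be the careful re-derivation of Induction Hypothesis~\ref{hypo-uni} across all three phases---in particular simultaneously holding every off-diagonal and Gaussian-noise correlation at $\widetilde O(\sigma_0)$ while pushing \emph{all} diagonal correlations to $\widetilde\Theta(1)$---because the error-summation constants (the $K^3/s^2$ factor from Claim~\ref{cla-individual} and the $\gamma_r^{q-1}$ factors from the insufficient-data analysis) must still close against the global budget of Appendix~\ref{sec-not}. Once the hypothesis is in hand, the remainder is a routine specialization of the joint-training proof with the modality-competition layer removed.
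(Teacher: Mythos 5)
Your proposal follows essentially the same route as the paper's own proof: it specializes the joint-training induction hypothesis and bookkeeping lemmas to a single modality, replaces the tensor-power \emph{comparison} with a one-sided growth estimate, bounds the training error via the Phase-2 error summations plus monotonicity of gradient descent, and splits the test error over sufficient/insufficient data using the function approximation, arriving at $(1\pm o(1))\mu_r$ exactly as the paper does. The only slight imprecision is attributing $\Pi_{j,r}^{(T)}\ge\Omega(\log K)$ directly to the Phase-1 growth (reaching $\widetilde{\Theta}(1)$ by $T^{r}_0$ is not by itself equivalent to this); as in the paper, that bound follows from the end-of-training loss being $1/\poly(K)$ combined with the individual-error claim, but since you establish both of these ingredients anyway, this is cosmetic rather than a gap.
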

   \begin{proof} 

        \textbf{Training error analysis.} For every data pair $(\Xb^{r},y)$, we can bound the training error in the similar manner as joint training, and obtain:
                 $$\frac{1}{T}\sum^{T}_{t=T^{r}_0}\mathcal{L}({f^{\text{uni},r}}^{(t)})=\frac{1}{T}\sum^{T}_{t=T^{r}_0}\frac{1}{n}\sum_{(\Xb,y)\in\mathcal{D}^{r}} -\log(\ell_y({f^{\text{uni},r}}^{(t)},\Xb)) \leq \frac{1}{\poly(K)}$$ 
                 Therefore,
                  $$
                  \frac{1}{n}\sum_{(\Xb,y)\in\mathcal{D}^{r}} (1-\ell_y({f^{\text{uni},r}}^{(T)},\Xb))\leq \frac{1}{n}\sum_{(\Xb,y)\in\mathcal{D}^{r}} -\log(\ell_y({f^{\text{uni},r}}^{(T)},\Xb))\leq \frac{1}{\poly(K)}
                  $$
                  Therefore, we can conclude the training error is sufficiently small at the end of the iteration $T$.
                  
                  \textbf{Test error analysis.} For the test error of ${f^{\text{uni},r}}^{(T)}$, given $j\in[K]$, we will have $c_r\Pi_{j,r}^{(T)}-\Pi_{i,r}^{(T)}\leq -\Omega(\log(K))$ for any $i,j$. Hence for sufficient data, by function approximation for uni-modal, we immediately have ${f_{y}^{\text{uni},r}}^{(T)}(\Xb^{r}) \geq \max _{j \neq y} {f_{j}^{\text{uni},r}}^{(T)}(\Xb^{r})+\Omega(\log k)$. By Induction Hypothesis~\ref{hypo-uni}, no doubt that $\mathcal{M}_r$ has been learned.  However for insufficient data, , we will have ${f_{y}^{\text{uni},r}}^{(T)}(\Xb^{r}) \leq O(\gamma_r)+\frac{1}{\polylog(K)}$ due to the data distribution. For every other $i\neq y$, as long as $i\in \mathcal{S}^{r}$, we will have ${f_{j}^{\text{uni},r}}^{(T)}(\Xb^{r}) \geq \widetilde{\Omega}(\rho_r)$. Therefore,  with probability at least  $1-e^{-\Omega\left(\log ^{2} k\right)}$, for insufficient data, we have 
                  $${f_{y}^{\text{uni},r}}^{(T)}(\Xb^{r}) \leq \max_{j\neq y}{f_{j}^{\text{uni},r}}^{(T)}(\Xb^{r})-\frac{1}{\polylog(K)}$$
                  Recall that insufficient data occurs in $\mathcal{M}_r$ with probability $\mu_r$, then we finish the proof.

             \end{proof}

\section{Experimental Setup}~\label{sec-exp}
For empirical justification, we conduct experiments on an internal product classification dataset to verify the results presented by \citet{wang2020makes}, and also provide empirical support for this theoretical analysis. 
Specifically, the dataset consists of products, each of which has an image, which is usually a photograph of the product, and a title text, which describes the key information, e.g., category, feature, etc. 
We split the dataset into two sets for training and validation. The training set consists of around $600K$ samples, and the validation set consists of $10K$ samples. 
For the evaluation of training accuracy, we sample $10K$ products from the training set. 

We build a Transformer~\citep{transformer} model for image model and text model respectively. 
Specifically, the image model is a small ViT~\citep{vit} network, consisting of $6$ transformer layers, each of which has a self attention and Feed-Forward Network (FFN) module with layer normalization and residual connection. The hidden size is $512$, and the intermediate size is $2048$. 
The image is preprocessed by resizing to the resolution of $256 \times 256$, and split into $16 \times 16$ patches. Each patch is projected to a vector by linear projection, and the patch vectors as a sequence is the input of the Transformer. 
The text model is also a Transformer model with the identical setup. Specifically, we tokenize each text with the Chinese BERT tokenizer~\citep{bert}. 
For the multi-modal late fusion model, we use the two Transformer models as bi-encoders. We element-wisely sum up their output representations, each of which is an average pooling of the Transformer outputs, and send it to a linear classifier for prediction. 

Additionally, this empirical study investigates whether the single-modal trained model can outperform a single-modal model with a fixed encoder initialized by the multi-modal model. 
This is widely used to measure self-supervised representations~\cite{chen2020simple}.
For the setup of the latter one, we build a single-modal encoder and initialize the weights with the parameters of the corresponding modality from a multi-modal model. We add a linear classifier on top and freeze the bottom encoder to avoid parameter update. 

All models are trained in an end-to-end fashion. 
We apply AdamW~\citep{adamw} optimizer for optimization with a peak learning rate of $1e-4$, a warmup ratio of $1\%$, and the cosine decay schedule. The total batch size of $256$. 
We implement our experiments on $16$ NVIDIA V100-32G. 

\end{document}